\newcommand{\cmark}{\ding{51}}%
\newcommand{\xmark}{\ding{55}}%
\newcommand*\diff{\mathop{}\!\mathrm{d}}
\definecolor{purpleheart}{rgb}{0.41, 0.21, 0.61}
\definecolor{mygreen}{rgb}{0.0, 0.5, 0.0}
\definecolor{mydarkred}{rgb}{0.6,0,0}
\definecolor{mydarkgreen}{rgb}{0,0.6,0}
\theoremstyle{plain}
\newtheorem{theorem}{Theorem}
\newtheorem{lemma}{Lemma}
\newtheorem{corollary}{Corollary}
\theoremstyle{definition}
\newtheorem{definition}{Definition}
\theoremstyle{remark}
\title{Rethinking Consistent Multi-Label Classification Under Inexact Supervision}
\author{Wei Wang~$^{1,2*\dagger}$~~~~~Tianhao Ma~$^{2}$\thanks{\footnotesize{Equal contributions. $^{\dagger}$Corresponding author: Wei Wang~<wei.wang@riken.jp>.}}~~~~~Ming-Kun Xie~$^{1}$~~~~~Gang Niu~$^{1}$~~~~~Masashi Sugiyama~$^{1,2}$ \\ 
  $^1$~RIKEN, Japan\\
  $^2$~The University of Tokyo, Japan\\
  \texttt{\{wei.wang,ming-kun.xie\}@riken.jp}\\ \texttt{\{matianhao2120,gang.niu.ml\}@gmail.com}~~~\texttt{sugi@k.u-tokyo.ac.jp}
}
\begin{document}
\maketitle
\begin{abstract}
Partial multi-label learning and complementary multi-label learning are two popular weakly supervised multi-label classification paradigms that aim to alleviate the high annotation costs of collecting precisely annotated multi-label data. In partial multi-label learning, each instance is annotated with a candidate label set, among which only some labels are relevant; in complementary multi-label learning, each instance is annotated with complementary labels indicating the classes to which the instance does not belong. Existing consistent approaches for the two paradigms either require accurate estimation of the generation process of candidate or complementary labels or assume a uniform distribution to eliminate the estimation problem. However, both conditions are usually difficult to satisfy in real-world scenarios. In this paper, we propose consistent approaches that do not rely on the aforementioned conditions to handle both problems in a unified way. Specifically, we propose two risk estimators based on first- and second-order strategies. Theoretically, we prove consistency w.r.t.~two widely used multi-label classification evaluation metrics and derive convergence rates for the estimation errors of the proposed risk estimators. Empirically, extensive experimental results on both real-world and synthetic datasets validate the effectiveness of our proposed approaches against state-of-the-art methods.
\end{abstract}
\section{Introduction}
In multi-label classification~(MLC), each instance is associated with multiple relevant labels simultaneously~\citep{zhang2014review,liu2022emerging}. The goal of MLC is to induce a multi-label classifier that can assign multiple relevant labels to unseen instances. MLC is more practical and useful than single-label classification, as real-world objects often appear together in a single scene. The ability to handle complex semantic information has led to the widespread use of MLC in many real-world applications, including multimedia content annotation~\citep{cabral2011matrix}, text classification~\citep{rubin2012statistical,liu2017deep}, and music emotion analysis~\citep{wu2014music}. However, annotating multi-label training data is more expensive and demanding than annotating single-label data. This is because each instance can be associated with an unknown number of relevant labels~\citep{durand2019learning,cole2021multi,xie2023class}, making it difficult to collect a large-scale multi-label dataset with precise annotations.

To address this, learning from weak supervision has become a prevailing way to mitigate the bottleneck of annotation cost for MLC~\citep{sugiyama2022machine}. Among them, partial multi-label learning~(PML) and complementary multi-label learning~(CML) have become two popular MLC paradigms. In PML, each instance is annotated with a \emph{candidate label set}, among which only some labels are relevant but inaccessible to the learning algorithm~\citep{xie2018partial,sun2019partial,gong2021understanding}. 
In CML, each instance is annotated with \emph{complementary labels}, which indicate the classes to which the instance does not belong~\citep{gao2023unbiased,zhu2025comrank,gao2026data}. Given that all relevant labels are included in the candidate label set, non-candidate labels contain no relevant labels and can be considered complementary labels, and vice versa. This suggests that the two problems are mathematically equivalent. Therefore, in this paper, we treat them as \emph{MLC under inexact supervision} in a unified way. Figure~\ref{fig:intro} shows an example image annotated with inexact annotations. The label space contains ten labels in total. The candidate label set consists of four relevant labels \{\textit{apple}, \textit{plate}, \textit{table}, \textit{jug}\} and two false-positive ones \{\textit{grapes}, \textit{pear}\}. By excluding the candidate labels from the label space, the remaining four labels are \{\textit{banana}, \textit{cup}, \textit{knife}, \textit{flower}\}, which can be considered complementary labels. PML and CML do not require precise determination of all relevant labels during annotation, which demonstrates their great potential for alleviating annotation challenges in MLC. 

\begin{wrapfigure}[12]{r}{0.5\textwidth}
\vspace{-10pt}
  \centering    \includegraphics[width=\linewidth]{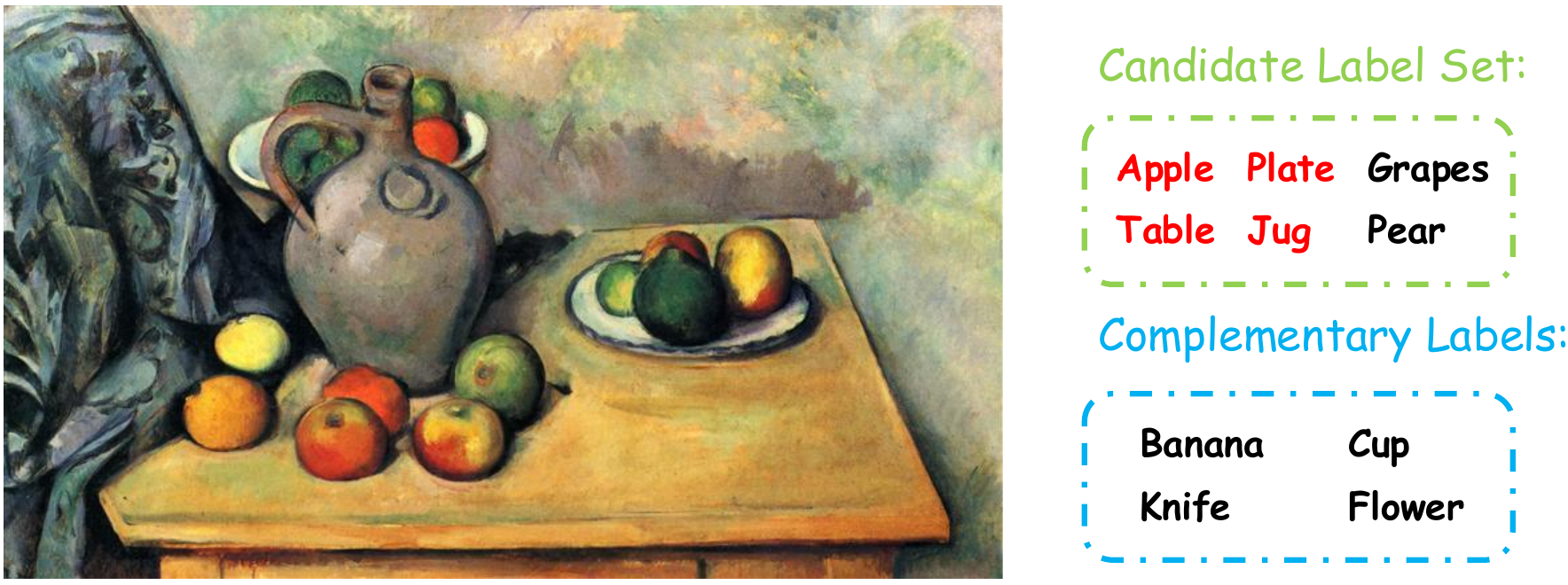}\\
  \caption{A multi-label image with inexact annotations.  Source: Paul Cézanne, Still Life, Jug and Fruit on a Table~(1894), public domain. }
  \label{fig:intro}
\end{wrapfigure}
In this paper, we investigate \emph{consistent} approaches for MLC under inexact supervision. Here, consistency means that classifiers learned with inexact supervision are theoretically guaranteed to converge to the optimal classifiers when infinitely many training samples are provided~\citep{wang2024learning}. The remedy began with \citet{xie2023ccmn}, which treated PML as a special case of MLC with class-conditional label noise~\citep{li2022estimating,xia2023holistic}, where irrelevant labels could flip to relevant labels but not vice versa. However, the flipping rate for each class is unknown and must be estimated using anchor points, i.e., instances belonging to a specific class with probability one~\citep{liu2015classification,xie2023ccmn}.
Similar to PML, CML assumes that complementary labels are generated by a certain flipping process~\citep{yu2018learning}. \citet{gao2023unbiased} proposed the \emph{uniform distribution assumption} that a label outside the relevant label set is sampled uniformly to be the CL. Then, \citet{gao2024complementary} generalized the data generation process with a transition matrix, but estimating the data generation process is still necessary. Recently, \citet{gao2025multi} extended the uniform distribution assumption to handle multiple complementary labels.

In summary, all existing consistent PML and CML approaches either estimate the generation process of the candidate label set or complementary labels, or adopt the uniform distribution assumption to eliminate the estimation problem. However, both conditions are difficult to satisfy in real-world scenarios. On the one hand, estimating the flipping rate heavily relies on accurate estimation of noisy class posterior probabilities of anchor points~\citep{xia2019are,yao2020dual,lin2023a}. However, estimating noisy class posterior probabilities is more difficult because their entropy is usually higher than that of clean labels~\citep{langford2005tutorial}. This difficulty is further amplified when using deep neural networks, where the over-confidence phenomenon typically occurs~\citep{yivan2021learning,wei2022mitigating}. The model outputs of deep neural networks are usually one-hot encoded, which means they cannot yield reliable probabilistic outputs~\citep{guo2017on}. On the other hand, the uniform distribution assumption treats different candidate label sets indiscriminately, which is too simple to be truly in accordance with imbalanced classes in real-world scenarios~\citep{wang2025realistic}. Additionally, many approaches model different labels independently and directly ignore label correlations existing in multi-label data~\citep{gao2025multi}. This prevents them from exploiting the rich semantic relationships of label correlations~\citep{zhu2017multi,mao2023label}.
\begin{table*}[t]
\small
\center
  \caption{Comparison of COMES with existing consistent PML and CML approaches.}
  \label{comparison}
  \setlength{\tabcolsep}{3pt}
  \resizebox{0.99\textwidth}{!}{
  \begin{tabular}{lcccc}
\midrule[1pt]
\multirow{2}{*}{Approach} & \multirow{2}{*}{\shortstack{Uniform distribution\\assumption-free}} & \multirow{2}{*}{\shortstack{Generation process\\ estimation unnecessary}} & \multirow{2}{*}{\shortstack{Label \\correlation-aware}} & \multirow{2}{*}{\shortstack{Multiple \\complementary labels}}  \\
&&&\\\midrule
CCMN~\citep{xie2023ccmn}& \cmark & \xmark &\cmark&\cmark\\
CTL~\citep{gao2023unbiased}& \xmark & \cmark &\xmark&\xmark\\
MLCL~\citep{gao2024complementary}& \cmark & \xmark &\cmark&\xmark\\
GDF~\citep{gao2025multi}& \xmark & \cmark &\xmark&\cmark\\
\midrule
COMES-HL~(Ours) & \cmark &\cmark &\xmark&\cmark \\
COMES-RL~(Ours) & \cmark &\cmark &\cmark&\cmark \\
\bottomrule[1pt]
  \end{tabular}}
\vspace{-10pt}
\end{table*}

To this end, we propose a novel framework named COMES, i.e.,~\emph{COnsistent Multi-label classification under inExact Supervision}. Based on a data generation process that does not use transition matrices, we introduce two instantiations with risk estimators w.r.t.~the Hamming loss and ranking loss, respectively. Table~\ref{comparison} compares our approach with existing consistent PML and CML approaches. Our contributions are summarized as follows:
\begin{itemize}[leftmargin=1em, itemsep=1pt, topsep=1pt, parsep=-1pt]
\item We propose a consistent framework for multi-label classification under inexact supervision that neither requires estimating the generation process of candidate or complementary labels nor relies on the uniform-distribution assumption.
\item We introduce risk-correction approaches to improve the generalization performance of the proposed risk estimators. We further prove consistency w.r.t.~two widely used metrics and derive convergence rates of estimation errors for the proposed risk estimators.
\item Our proposed approaches outperform state-of-the-art baselines on both real-world and synthetic PML and CML datasets with different label generation processes.
\end{itemize}

\section{Preliminaries}
In this section, we introduce the background of MLC and MLC under inexact supervision.
\subsection{Multi-Label Classification}
Let $\mathcal{X} \subseteq \mathbb{R}^d$ denote the $d$-dimensional feature space and $\mathcal{Y}=\left\{1,2,\ldots,q\right\}$ the label space consisting of $q$ class labels. A multi-label example is denoted as $(\bm{x},Y)$, where $\bm{x}\in\mathcal{X}$ is a feature vector and $Y\subseteq\mathcal{Y}$ is the set of relevant labels associated with $\bm{x}$. For ease of notation, we introduce $\bm{y}=[y_1,y_2,\ldots,y_q]\in\left\{0,1\right\}^{q}$ to denote the vector representation of $Y$, where $y_j=1$ if $j\in Y$ and $y_j=0$ otherwise.
Let $p(\bm{x},Y)$ denote the joint density of $\bm{x}$ and $Y$. Let $p(\bm{x})$ denote the marginal density, and $\pi_j=p(y_j=1)$ the prior of the $j$-th class. The task of MLC is to learn a prediction function $\bm{f}:\mathcal{X}\mapsto 2^{\mathcal{Y}}$. We use $f_j$ to denote the $j$-th entry of $\bm{f}$, where $f_j(\bm{x})=1$ indicates that the model predicts class $j$ to be relevant to $\bm{x}$ and $f_j(\bm{x})=0$ otherwise. Since learning $\bm{f}$ directly is often difficult, we use a real-valued decision function $\bm{g}:\mathcal{X}\mapsto \mathbb{R}^q$ to represent the model output. The prediction function $\bm{f}$ can be derived by thresholding $\bm{g}$. We use $g_j$ to denote the $j$-th entry of $\bm{g}$, which indicates the model output for class $j$.

Many evaluation metrics have been developed to calculate the difference between model predictions and true labels to evaluate the performance of multi-label classifiers~\citep{zhang2014review,wu2017unified}. In this paper, we focus primarily on the Hamming loss and ranking loss, the two most common metrics in the literature.\footnote{We will address the use of other metrics in future work.} Specifically, the Hamming loss calculates the fraction of misclassified instance-label pairs, and the risk of $\bm{f}$ w.r.t.~the Hamming loss is 
\begin{equation}\label{eq:hl_01risk}
R_{\mathrm{H}}^{\mathrm{0-1}}(\bm{f})=\mathbb{E}_{p(\bm{x},Y)}\left[\frac{1}{q}\sum\nolimits_{j=1}^{q}\mathbb{I}\left(f_j(\bm{x})\neq y_j\right)\right].
\end{equation}
Here, $\mathbb{I}$ denotes the indicator function that returns 1 if the predicate holds; otherwise, $\mathbb{I}$ returns 0. Since optimizing the 0-1 loss is difficult, a surrogate loss function $\ell$ is often adopted. The \emph{$\ell$-risk} w.r.t.~the Hamming loss is
\begin{equation}\label{eq:hl_risk}
R_{\mathrm{H}}^{\ell}(\bm{g})=\mathbb{E}_{p(\bm{x},Y)}\left[\frac{1}{q}\sum\nolimits_{j=1}^{q}\ell\left(g_j\left(\bm{x}\right),y_j\right)\right],
\end{equation}
where $\ell$ is a non-negative binary loss function, such as the binary cross-entropy loss. It is important to note that the Hamming loss only considers first-order model predictions and cannot account for label correlations. The ranking loss explicitly considers the ordering relationship between model outputs for a pair of labels. Specifically, the risk of $\bm{f}$ w.r.t.~the ranking loss is\footnote{To facilitate the analysis in this paper, we consider the coefficients of the losses for different label pairs to be 1~\citep{gao2013consistency,xie2021multi,xie2023ccmn}.}
\begin{align}\label{eq:rl_01risk}
R_{\mathrm{R}}^{\mathrm{0-1}}(\bm{f})=\mathbb{E}_{p(\bm{x},Y)}&\left[\sum\nolimits_{1\leq j<k\leq q}\mathbb{I}(y_j<y_k)\left(\mathbb{I}\left(f_j(\bm{x})>f_k(\bm{x})\right)+\frac{1}{2}\mathbb{I}\left(f_j(\bm{x})=f_k(\bm{x})\right)\right)\right.\nonumber\\
&\left.+\mathbb{I}(y_j>y_k)\left(\mathbb{I}\left(f_j(\bm{x})<f_k(\bm{x})\right)+\frac{1}{2}\mathbb{I}\left(f_j(\bm{x})=f_k(\bm{x})\right)\right)\right].
\end{align}
Similarly, when using a surrogate loss function $\ell$ to replace the 0-1 loss, the $\ell$-risk w.r.t.~the ranking loss is
\begin{equation}\label{eq:rl_risk}
R_{\mathrm{R}}^{\ell}(\bm{g})=\mathbb{E}_{p(\bm{x},Y)}\left[\sum\nolimits_{1\leq j<k\leq q}\mathbb{I}(y_j\neq y_k)\ell\left(g_j(\bm{x})-g_k(\bm{x}),\frac{y_j-y_k+1}{2}\right)\right].
\end{equation}
Notably, minimizing the Hamming loss does not consider label correlations and can be considered a first-order strategy. In contrast, minimizing the ranking loss considers label-ranking relationships and can be considered a second-order strategy.
\subsection{Multi-Label Classification under Inexact Supervision}
In PML, each example is denoted as $\left(\bm{x}, S \right)$, where $S$ is the candidate label set associated with $\bm{x}$. The basic assumption of PML is that all relevant labels are contained within the candidate label set, i.e.,~$Y\subseteq S$. Let $\bar{S}=\mathcal{Y}\backslash S$ denote the \emph{absolute complement} of $S$. Since $\bar{S}\bigcap Y=\emptyset$, $\bar{S}$ can be regarded as the set of complementary labels associated with $\bm{x}$. Therefore, PML and CML are mathematically equivalent, as partial multi-label data can equivalently be transformed into complementary multi-label data and vice versa. Without loss of generality, this paper mainly considers partial multi-label data. For ease of notation, we use $\bm{s}=[s_1,s_2,\ldots,s_q]$ to denote the vector representation of $S$. Here, $s_j=1$ indicates that the $j$-th class label is a candidate label of $\bm{x}$, and $s_j=0$ otherwise. Let $p(\bm{x},S)$ denote the joint density of $\bm{x}$ and the candidate label set $S$. The goal of PML or CML is to learn a prediction function $\bm{f}:\mathcal{X}\mapsto 2^{\mathcal{Y}}$ that can assign relevant labels to unseen instances based on a training set $\mathcal{D}=\left\{\left(\bm{x}_i, S_i\right)\right\}_{i=1}^{n}$ sampled i.i.d.~from $p(\bm{x},S)$. 
\section{Methodology}
In this section, we first introduce our data generation process. Then, we present the first- and second-order strategies for handling the PML problem and their respective theoretical analyses.
\subsection{Data Generation Process}
In this paper, we assume that the candidate labels are generated by querying \emph{whether each instance is irrelevant to a class} in turn. Specifically, if the $j$-th class is irrelevant to $\bm{x}$, we assume that the $j$-th class label is assigned as a non-candidate label to $\bm{x}$ with a constant probability $p_j$, i.e., $p\left(j\notin S|\bm{x},j\notin Y\right)=p_j$. Otherwise, if the $j$-th class is relevant to $\bm{x}$, we consider it as a candidate label. The candidate label set can then be obtained by excluding the non-candidate labels from the label space. Notably, all relevant labels are included in the candidate label set, as well as some irrelevant labels. This data generation process coincides well with the annotation process of candidate labels. For example, when asking annotators to provide candidate labels for an image dataset, we can show them an image and a class label and ask them to determine whether the image is irrelevant to that class. This is often an easier question to answer than directly asking all relevant labels, since it is less demanding to exclude some obviously irrelevant labels. If so, we assume that the image will be annotated with this label as a non-candidate label with a constant probability. Based on this data generation process, we have the following lemma.
\begin{lemma}\label{lm:data_generation}
Assume that $p(s_j=0|\bm{x},y_j=0)=p_j$, where $p_j$ is a constant. Then, we have $p(\bm{x}|s_j=0)=p(\bm{x}|y_j=0)$.
\end{lemma}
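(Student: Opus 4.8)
The plan is to apply Bayes' rule to $p(\bm{x}\mid s_j=0)$ and exploit two structural facts of the data generation process: the constancy of $p_j$ in $\bm{x}$, and the PML premise $Y\subseteq S$, which forces a relevant class never to be a non-candidate label, i.e.~$p(s_j=0\mid\bm{x},y_j=1)=0$. With these in hand the statement reduces to a one-line cancellation.

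First I would expand $p(s_j=0\mid\bm{x})$ by the law of total probability over $y_j\in\{0,1\}$:
\[
p(s_j=0\mid\bm{x})=p(s_j=0\mid\bm{x},y_j=0)\,p(y_j=0\mid\bm{x})+p(s_j=0\mid\bm{x},y_j=1)\,p(y_j=1\mid\bm{x}).
\]
Since every relevant label must appear in the candidate set, the second term vanishes, and substituting the assumption $p(s_j=0\mid\bm{x},y_j=0)=p_j$ collapses this to $p(s_j=0\mid\bm{x})=p_j\,p(y_j=0\mid\bm{x})$.

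Next I would apply Bayes' rule $p(\bm{x}\mid s_j=0)=p(s_j=0\mid\bm{x})\,p(\bm{x})/p(s_j=0)$, insert the identity above, and rewrite $p(y_j=0\mid\bm{x})\,p(\bm{x})=p(\bm{x}\mid y_j=0)\,p(y_j=0)$ in the numerator. To evaluate the normalizer I would integrate $p(s_j=0\mid\bm{x})\,p(\bm{x})$ over $\bm{x}$, obtaining
\[
p(s_j=0)=\int p_j\,p(y_j=0\mid\bm{x})\,p(\bm{x})\diff\bm{x}=p_j\,p(y_j=0),
\]
where $p_j$ pulls out of the integral precisely because it is constant in $\bm{x}$. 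The factor $p_j$ and the prior $p(y_j=0)$ then cancel between numerator and denominator, leaving exactly $p(\bm{x}\mid y_j=0)$.

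I do not expect a serious obstacle; the argument is essentially mechanical once the right identities are assembled. The only point requiring care is the vanishing of $p(s_j=0\mid\bm{x},y_j=1)$: this is not literally the stated hypothesis but follows from $Y\subseteq S$, so I would state it explicitly rather than leave it implicit. The crucial structural feature driving the whole result is the \emph{constancy} of $p_j$ in $\bm{x}$---were the non-candidate probability to depend on $\bm{x}$, it would not factor out of the normalizing integral, the cancellation would fail, and the two conditional densities would in general differ.
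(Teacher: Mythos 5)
Your proof is correct, and it reaches the conclusion by a genuinely different (though equally elementary) route from the paper's. The paper introduces the joint conditioning event $\{s_j=0,\ j\notin Y\}$ and applies Bayes' rule to it twice: once to show $p(\bm{x}\mid s_j=0, j\notin Y)=p(\bm{x}\mid s_j=0)$ (using $p(j\notin Y\mid \bm{x}, s_j=0)=p(j\notin Y\mid s_j=0)=1$, i.e.\ the same $Y\subseteq S$ fact you invoke), and once to show it equals $p(\bm{x}\mid y_j=0)$ (using the constancy of $p_j$); equating the two gives the lemma without ever writing down $p(s_j=0\mid\bm{x})$ or $p(s_j=0)$. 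You instead compute those quantities head-on: the law of total probability over $y_j$ gives $p(s_j=0\mid\bm{x})=p_j\,p(y_j=0\mid\bm{x})$, and integrating yields $p(s_j=0)=p_j\,p(y_j=0)$, after which Bayes' rule cancels the shared factor $p_j\,p(y_j=0)$. Both arguments rest on exactly the same two structural facts — the constancy of $p_j$ in $\bm{x}$ and the vanishing of $p(s_j=0\mid\bm{x},y_j=1)$ forced by $Y\subseteq S$ — so neither is more general, but your version makes the normalizer and the role of constancy fully explicit (your closing remark about why an $\bm{x}$-dependent flipping rate breaks the cancellation is exactly right), whereas the paper's double-Bayes trick is slicker and avoids the integration step. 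You are also right to flag that $p(s_j=0\mid\bm{x},y_j=1)=0$ is a consequence of the PML premise rather than of the stated hypothesis alone; the paper uses this implicitly without comment, so stating it explicitly is an improvement. The only regularity condition worth mentioning in either proof is $p(s_j=0)>0$ (equivalently $p_j>0$ and $\pi_j<1$), which is needed for the conditional density on the left-hand side to be defined at all.
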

The proof can be found in Appendix~\ref{proof:data_generation}. According to Lemma~\ref{lm:data_generation}, the conditional density of instances where the $j$-th class is considered a non-candidate label is equivalent to the conditional density of instances where the $j$-th class is irrelevant. Notably, our data distribution assumption differs from both the uniform distribution assumption and the use of a transition matrix to flip the labels. Since the conditional probabilities of different candidate label sets can be different, our setting is more general than the uniform distribution assumption~\citep{gao2023unbiased,gao2025multi}.
\subsection{First-Order Strategy}
A common strategy used in MLC is to decompose the problem into a number of binary classification problems by ignoring label correlations. This goal can be achieved by minimizing the $\ell$-risk w.r.t.~the Hamming loss in Eq.~(\ref{eq:hl_risk}). We show that the $\ell$-risk w.r.t.~the Hamming loss can be equivalently expressed with partial multi-label data.
\begin{theorem}\label{thm:rc_hl}
By the assumption in Lemma~\ref{lm:data_generation}, the $\ell$-risk w.r.t.~the Hamming loss in Eq.~(\ref{eq:hl_risk}) can be equivalently expressed as
\begin{align}\label{eq:rc_hl}
R_{\mathrm{H}}^{\ell}(\bm{g})=&\mathbb{E}_{p(\bm{x})}\left[\frac{1}{q}\sum\nolimits_{j=1}^{q}\ell\left(g_j\left(\bm{x}\right),1\right)\right]\nonumber\\
&+\sum\nolimits_{j=1}^{q}\mathbb{E}_{p(\bm{x}|s_j=0)}\left[\frac{1-\pi_j}{q}\left(\ell\left(g_j\left(\bm{x}\right),0\right)-\ell\left(g_j\left(\bm{x}\right),1\right)\right)\right].
\end{align}
\end{theorem}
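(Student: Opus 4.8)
The plan is to decompose the Hamming-loss $\ell$-risk label-by-label and reduce each per-label term to a form in which the unobservable clean conditional density $p(\bm{x}|y_j=0)$ can be swapped out for the observable non-candidate density $p(\bm{x}|s_j=0)$ via Lemma~\ref{lm:data_generation}.

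First, I would observe that the summand $\ell(g_j(\bm{x}),y_j)$ depends on $Y$ only through the single coordinate $y_j$, so marginalizing out the remaining labels gives $R_{\mathrm{H}}^{\ell}(\bm{g}) = \frac{1}{q}\sum_{j=1}^{q}\mathbb{E}_{p(\bm{x},y_j)}[\ell(g_j(\bm{x}),y_j)]$. Conditioning on $\bm{x}$ and expanding the expectation over the two possible values $y_j\in\{0,1\}$, each per-label term becomes $\mathbb{E}_{p(\bm{x})}[p(y_j=1|\bm{x})\ell(g_j(\bm{x}),1)+p(y_j=0|\bm{x})\ell(g_j(\bm{x}),0)]$.

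Next, using the identity $p(y_j=1|\bm{x})=1-p(y_j=0|\bm{x})$, I would rewrite this term as $\mathbb{E}_{p(\bm{x})}[\ell(g_j(\bm{x}),1)] + \mathbb{E}_{p(\bm{x})}[p(y_j=0|\bm{x})(\ell(g_j(\bm{x}),0)-\ell(g_j(\bm{x}),1))]$. After dividing by $q$ and summing over $j$, the first piece already reproduces the leading term of Eq.~(\ref{eq:rc_hl}). For the correction piece, I would apply Bayes' rule in the form $p(y_j=0|\bm{x})\,p(\bm{x}) = p(\bm{x}|y_j=0)(1-\pi_j)$, which cancels the $p(\bm{x})$ weight and converts the expectation into $(1-\pi_j)\,\mathbb{E}_{p(\bm{x}|y_j=0)}[\ell(g_j(\bm{x}),0)-\ell(g_j(\bm{x}),1)]$.

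The decisive step is then to invoke Lemma~\ref{lm:data_generation} to replace $p(\bm{x}|y_j=0)$ by $p(\bm{x}|s_j=0)$; this is the only place where the data-generation assumption enters, and it is precisely what turns an expression involving the inaccessible clean conditional into one estimable purely from partial multi-label data. Everything else is elementary application of the law of total expectation and Bayes bookkeeping, so I anticipate no genuine obstacle beyond keeping the $1/q$ factors and the sum over $j$ correctly aligned; collecting the two pieces back together yields exactly Eq.~(\ref{eq:rc_hl}).
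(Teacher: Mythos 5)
Your proposal is correct and follows essentially the same route as the paper's proof: marginalize out the labels other than $y_j$, split the per-label expectation over $y_j\in\{0,1\}$, use the complement identity to isolate an unconditional $\ell(g_j(\bm{x}),1)$ term plus a correction weighted by $(1-\pi_j)p(\bm{x}|y_j=0)$, and finally invoke Lemma~\ref{lm:data_generation} to replace $p(\bm{x}|y_j=0)$ with $p(\bm{x}|s_j=0)$. The only cosmetic difference is that you manipulate posteriors $p(y_j|\bm{x})$ while the paper works directly with joint densities $p(\bm{x},y_j)$, which amounts to the same Bayes bookkeeping.
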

The proof can be found in Appendix~\ref{proof:rc_hl}. Theorem~\ref{thm:rc_hl} shows that the $\ell$-risk w.r.t.~the Hamming loss can be expressed as the expectation w.r.t.~the marginal and conditional densities where the $j$-th class label is not considered as a candidate label. Since Eq.~(\ref{eq:rc_hl}) cannot be calculated directly, we perform \emph{empirical risk minimization~(ERM)} by approximating Eq.~(\ref{eq:rc_hl}) using datasets $\mathcal{D}_{\mathrm{U}}$ and $\mathcal{D}_{j} (j\in\mathcal{Y})$ sampled from densities $p(\bm{x})$ and $p(\bm{x}|s_j=0)$, respectively. In this paper, we consider generating these datasets by \emph{duplicating} instances from $\mathcal{D}$. Specifically, we first treat the duplicated instances of $\mathcal{D}$ as unlabeled data sampled from $p(\bm{x})$ and add them to $\mathcal{D}_{\mathrm{U}}$. Then, if an instance does not treat the $j$-th class label as a candidate label, we treat its duplicated instance as being sampled from $p(\bm{x}|s_j=0)$ and add it to $\mathcal{D}_{j}$. These processes can be expressed as follows:
\begin{equation}\label{eq:data_partition}
\mathcal{D}_{\mathrm{U}}=\left\{\bm{x}^{\mathrm{U}}_{i}\right\}_{i=1}^{n}=\left\{\bm{x}_i|(\bm{x}_i,S_i)\in\mathcal{D}\right\}, \quad \mathcal{D}_{j}=\left\{\bm{x}^{j}_{i}\right\}_{i=1}^{n_j}=\left\{\bm{x}_i|(\bm{x}_i,S_i)\in\mathcal{D},j\notin S_i\right\}, j\in\mathcal{Y}.
\end{equation}
Then, an unbiased risk estimator can be derived to approximate Eq.~(\ref{eq:rc_hl}) using datasets $\mathcal{D}_{\mathrm{U}}$ and $\mathcal{D}_{j}$:
\begin{align}\label{eq:rc_hl_erm}
\hat{R}_{\mathrm{H}}^{\ell}(\bm{g})=&\frac{1}{nq}\sum\nolimits_{i=1}^{n}\sum\nolimits_{j=1}^{q}\ell\left(g_j\left(\bm{x}^{\mathrm{U}}_i\right),1\right)\nonumber\\
&+\sum\nolimits_{j=1}^{q}\frac{1-\pi_j}{qn_j}\sum\nolimits_{i=1}^{n_j}\left(\ell\left(g_j\left(\bm{x}^{j}_{i}\right),0\right)-\ell\left(g_j\left(\bm{x}^{j}_{i}\right),1\right)\right).
\end{align}
When deep neural networks are used, the negative terms in the loss function can often lead to overfitting issues~\citep{kiryo2017positive,sugiyama2022machine}. Therefore, we use an absolute value function to wrap each potentially negative term~\citet{lu2020mitigating,wang2023binary}. The \emph{corrected risk estimator} is defined as 
\begin{align}\label{eq:rc_hl_cre}
\tilde{R}_{\mathrm{H}}^{\ell}(\bm{g})=&\frac{1}{q}\sum\nolimits_{j=1}^{q}\left|\frac{1}{n}\sum\nolimits_{i=1}^{n}\ell\left(g_j\left(\bm{x}^{\mathrm{U}}_i\right),1\right)-\frac{1-\pi_j}{n_j}\sum\nolimits_{i=1}^{n_j}\ell\left(g_j\left(\bm{x}^{j}_{i}\right),1\right)\right|\nonumber\\
&+\sum\nolimits_{j=1}^{q}\frac{1-\pi_j}{qn_j}\sum\nolimits_{i=1}^{n_j}\ell\left(g_j\left(\bm{x}^{j}_{i}\right),0\right).
\end{align}
Notably, our framework is very flexible so that the minimizer can be obtained using any network architecture and stochastic optimizer. The algorithmic details are summarized in Algorithm~\ref{alg:comes}. The class prior $\pi_j$ can be estimated by using off-the-shelf class prior estimation approaches only using candidate labels~(see Appendix~\ref{apd:cpe}).

We establish the consistency and estimation error bounds for the risk estimator proposed in Eq.~(\ref{eq:rc_hl_cre}). First, we demonstrate that the corrected risk estimator in Eq.~(\ref{eq:rc_hl_cre}) is biased yet consistent w.r.t. the $\ell$-risk w.r.t.~the Hamming loss in Eq.~(\ref{eq:hl_risk}). The following theorem holds.
\begin{theorem}\label{thm:bias_and_consistency_hl}
Assume that there exists a constant $C_{\mathcal{G}}$ such that $\sup_{g_j\in\mathcal{G}}\|{g_j}\|_{\infty} \leq C_{\mathcal{G}}$ and a constant $C_{\ell}$ such that $\sup_{|z|\leq C_{\mathcal{G}}}\ell(z, y) \leq C_{\ell}$, where $\mathcal{G}$ is the model class. We assume that there exists a positive constant $\alpha$ such that $\forall j\in\mathcal{Y}, \pi_j\mathbb{E}_{p(\bm{x}|y_j=1)}\left[\ell\left(g_j\left(\bm{x}\right),1\right)\right]\geq\alpha$. Then, the bias of the expectation of the corrected risk estimator w.r.t.~the $\ell$-risk w.r.t.~the Hamming loss has the following lower and upper bounds: 
\begin{equation}\label{eq:thm_bias_1}
0\leq\mathbb{E}\left[\tilde{R}_{\mathrm{H}}^{\ell}(\bm{g})\right]-R_{\mathrm{H}}^{\ell}(\bm{g}) \leq \frac{1}{q}\sum\nolimits_{j=1}^{q}(4-2\pi_j)C_{\ell}\Delta_j,
\end{equation}
where $\Delta_j=\exp\left(-2\alpha^2/\left(C_{\ell}^2/n+(1-\pi_j)^2 C_{\ell}^2/n_j\right)\right)$. Furthermore, for any $\delta > 0$, the following inequality holds with probability at least $1-\delta$:
\begin{equation}
\left|\tilde{R}_{\mathrm{H}}^{\ell}(\bm{g})-R_{\mathrm{H}}^{\ell}(\bm{g})\right|\leq\frac{1}{q}\sum\nolimits_{j=1}^{q}\left((4-2\pi_j)C_{\ell}\Delta_j+\frac{(2-2\pi_j)C_{\ell}}{q}\sqrt{\frac{\ln{\left(2/\delta\right)}}{2n_j}}\right)+C_{\ell}\sqrt{\frac{\ln{\left(2/\delta\right)}}{2n}}.
\end{equation}
\end{theorem}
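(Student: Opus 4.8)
The plan is to reduce the whole statement to controlling, for each class $j$, the single scalar
\[
A_j=\frac{1}{n}\sum\nolimits_{i=1}^{n}\ell\left(g_j\left(\bm{x}^{\mathrm{U}}_i\right),1\right)-\frac{1-\pi_j}{n_j}\sum\nolimits_{i=1}^{n_j}\ell\left(g_j\left(\bm{x}^{j}_i\right),1\right),
\]
since comparing Eq.~(\ref{eq:rc_hl_erm}) and Eq.~(\ref{eq:rc_hl_cre}) gives $\tilde{R}_{\mathrm{H}}^{\ell}(\bm{g})=\hat{R}_{\mathrm{H}}^{\ell}(\bm{g})+\frac{1}{q}\sum_{j=1}^{q}\left(|A_j|-A_j\right)$. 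The first thing I would record is that $\hat{R}_{\mathrm{H}}^{\ell}$ is unbiased for $R_{\mathrm{H}}^{\ell}$ (this is exactly Theorem~\ref{thm:rc_hl} after replacing expectations by sample averages over $\mathcal{D}_{\mathrm{U}}$ and $\mathcal{D}_j$), and that by Lemma~\ref{lm:data_generation} together with the decomposition $p(\bm{x})=\pi_j\,p(\bm{x}|y_j=1)+(1-\pi_j)\,p(\bm{x}|y_j=0)$ one obtains $\mathbb{E}[A_j]=\pi_j\mathbb{E}_{p(\bm{x}|y_j=1)}[\ell(g_j(\bm{x}),1)]$. The assumed lower bound $\mathbb{E}[A_j]\geq\alpha\geq 0$ is the hinge of the entire argument: it says the quantity being wrapped in the absolute value is \emph{positive in expectation}, so the correction acts only through lower-order fluctuations.

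For the bias bounds in Eq.~(\ref{eq:thm_bias_1}), the lower bound is immediate from $|A_j|\geq A_j$, which gives $\tilde{R}_{\mathrm{H}}^{\ell}\geq\hat{R}_{\mathrm{H}}^{\ell}$ pointwise and hence $\mathbb{E}[\tilde{R}_{\mathrm{H}}^{\ell}]\geq R_{\mathrm{H}}^{\ell}$ after taking expectations. For the upper bound I would use the identity $|A_j|-A_j=2(-A_j)_+$, where $(-A_j)_+=\max(0,-A_j)$, together with the crude pointwise bound $|A_j|\leq(2-\pi_j)C_{\ell}$ (triangle inequality on two nonnegative averages bounded by $C_{\ell}$ and $(1-\pi_j)C_{\ell}$), so that $\mathbb{E}[(-A_j)_+]\leq(2-\pi_j)C_{\ell}\,\mathbb{P}(A_j<0)$. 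It then remains to bound $\mathbb{P}(A_j<0)$. Viewing $A_j$ as a function of the $n$ points of $\mathcal{D}_{\mathrm{U}}$ and the $n_j$ points of $\mathcal{D}_j$, changing one point moves $A_j$ by at most $C_{\ell}/n$ or $(1-\pi_j)C_{\ell}/n_j$, respectively, so McDiarmid's inequality with $\mathbb{E}[A_j]\geq\alpha$ gives $\mathbb{P}(A_j<0)\leq\mathbb{P}(\mathbb{E}[A_j]-A_j\geq\alpha)\leq\Delta_j$, the sum of squared bounded differences being exactly $C_{\ell}^2/n+(1-\pi_j)^2C_{\ell}^2/n_j$. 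Combining, $\mathbb{E}[\tilde{R}_{\mathrm{H}}^{\ell}]-R_{\mathrm{H}}^{\ell}=\frac{1}{q}\sum_j 2\mathbb{E}[(-A_j)_+]\leq\frac{1}{q}\sum_j(4-2\pi_j)C_{\ell}\Delta_j$, as claimed.

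For the high-probability statement I would split $|\tilde{R}_{\mathrm{H}}^{\ell}-R_{\mathrm{H}}^{\ell}|\leq|\tilde{R}_{\mathrm{H}}^{\ell}-\mathbb{E}[\tilde{R}_{\mathrm{H}}^{\ell}]|+\big(\mathbb{E}[\tilde{R}_{\mathrm{H}}^{\ell}]-R_{\mathrm{H}}^{\ell}\big)$, bound the second term by the bias estimate just obtained, and attack the first term by McDiarmid applied to $\tilde{R}_{\mathrm{H}}^{\ell}$ as a function of all sampled points. Here changing one point of $\mathcal{D}_{\mathrm{U}}$ perturbs $\tilde{R}_{\mathrm{H}}^{\ell}$ by at most $C_{\ell}/n$ (it enters every $|A_j|$ through the $1$-Lipschitz absolute value, and the $\frac{1}{q}\sum_j$ averages the $q$ contributions back down), while changing one point of $\mathcal{D}_j$ perturbs it by at most $2(1-\pi_j)C_{\ell}/(qn_j)$ (through $|A_j|$ and the $\ell(\cdot,0)$ term). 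Feeding these into McDiarmid and using $\sqrt{a+\sum_j b_j}\leq\sqrt{a}+\sum_j\sqrt{b_j}$ to split the root yields a separate $C_{\ell}\sqrt{\ln(2/\delta)/(2n)}$ term and the $n_j$-dependent terms; adding the bias term gives the stated inequality.

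The main obstacle is the concentration step, not the bias step, and for two reasons. First, the datasets $\mathcal{D}_{\mathrm{U}}$ and $\{\mathcal{D}_j\}$ are all produced by \emph{duplicating} the same $n$ draws, so they are not independent; the clean bounded-difference bookkeeping above is legitimate only if each duplicated copy is treated as its own sampled unit (equivalently, one conditions on the realized sizes $n_j$), and I would state this explicitly rather than gloss over it. Second, the exact distribution of the factors of $1/q$ across the $n_j$-terms, and whether one uses a single McDiarmid with square-root subadditivity or $q$ separate applications with a union bound over $\delta$, is precisely the bookkeeping that pins down the final constants; my own route naturally lands on a term of order $\frac{(2-2\pi_j)C_{\ell}}{q}\sqrt{\ln(2/\delta)/(2n_j)}$, so I would carry the bounded differences symbolically and reconcile the precise power of $q$ against the stated bound only at the very end.
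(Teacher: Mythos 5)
Your proposal is correct and follows essentially the same route as the paper's proof: the lower bound from $|A_j|\geq A_j$, the upper bound by bounding the probability of the event $\{A_j\leq 0\}$ via McDiarmid together with the assumption $\mathbb{E}[A_j]=\pi_j\mathbb{E}_{p(\bm{x}|y_j=1)}[\ell(g_j(\bm{x}),1)]\geq\alpha$, and the high-probability statement by McDiarmid on $\tilde{R}_{\mathrm{H}}^{\ell}$ plus the triangle inequality, with identical bounded-difference constants. Your remark about the duplicated datasets not being independent is a fair caveat that the paper itself glosses over (it simply writes $p(\mathcal{D}_{\rm U},\mathcal{D}_j)=p(\mathcal{D}_{\rm U})p(\mathcal{D}_j)$), but it does not change the argument.
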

The proof can be found in Appendix~\ref{proof:bias_and_consistency_hl}. Notably, the bias of the corrected risk estimator from the original $\ell$-risk exists since it is lower bounded by zero. However, as $n\rightarrow\infty$, we have that $\tilde{R}_{\mathrm{H}}^{\ell}(\bm{g})\rightarrow R_{\mathrm{H}}^{\ell}(\bm{g})$, meaning that it is still consistent. 

Let $\tilde{\bm{g}}_{\mathrm{H}}=\mathop{\arg\min_{\{g_j\}\subseteq\mathcal{G}}}\tilde{R}_{\mathrm{H}}^{\ell}(\bm{g})$ and $\bm{g}^{*}_{\mathrm{H}}=\mathop{\arg\min_{\{g_j\}\subseteq\mathcal{G}}}R_{\mathrm{H}}^{\ell}(\bm{g})$ denote the minimizer of the corrected risk estimator and the $\ell$-risk w.r.t.~the Hamming loss, respectively. Let $\mathfrak{R}_{n,p}(\mathcal{G})$ and $\mathfrak{R}_{n_j,p_j}(\mathcal{G})$ denote the Rademacher complexities defined in Appendix~\ref{proof:thm_hl_eeb}. 
\begin{theorem}\label{thm:hl_eeb}
Assume that the loss function $\ell(z, y)$ is Lipschitz continuous in~$z$ with a Lipschitz constant $L_{\ell}$. By the assumptions in Theorem~\ref{thm:bias_and_consistency_hl}, for any $\delta>0$, the following inequality holds with probability at least $1-\delta$:
\begin{align}
&R_{\mathrm{H}}^{\ell}(\tilde{\bm{g}}_{\mathrm{H}})-R_{\mathrm{H}}^{\ell}(\bm{g}^{*}_{\mathrm{H}})\leq\frac{8L_{\ell}}{q}\sum\nolimits_{j=1}^{q}\mathfrak{R}_{n,p}(\mathcal{G})+\frac{16(1-\pi_j)L_{\ell}}{q}\sum\nolimits_{j=1}^{q}\mathfrak{R}_{n_j,p_j}(\mathcal{G})\nonumber\\
&+\frac{2}{q}\sum\nolimits_{j=1}^{q}(4-2\pi_j)C_{\ell}\Delta_j+2C_{\ell}\sqrt{\frac{\ln{\left(1/\delta\right)}}{2n}}+\sum\nolimits_{j=1}^{q}\frac{(4-4\pi_j)C_{\ell}}{q}\sqrt{\frac{\ln{\left(1/\delta\right)}}{2n_j}}.
\end{align}
\end{theorem}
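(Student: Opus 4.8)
The plan is to run the standard estimation-error analysis for empirical risk minimization, adapted to the fact that $\tilde R_{\mathrm H}^\ell$ is the \emph{biased} (absolute-value corrected) estimator rather than the plain unbiased one. First I would exploit the optimality of $\tilde{\bm g}_{\mathrm H}$ for $\tilde R_{\mathrm H}^\ell$ through the decomposition
\[
R_{\mathrm H}^\ell(\tilde{\bm g}_{\mathrm H})-R_{\mathrm H}^\ell(\bm g^*_{\mathrm H})
=\bigl[R_{\mathrm H}^\ell(\tilde{\bm g}_{\mathrm H})-\tilde R_{\mathrm H}^\ell(\tilde{\bm g}_{\mathrm H})\bigr]
+\bigl[\tilde R_{\mathrm H}^\ell(\tilde{\bm g}_{\mathrm H})-\tilde R_{\mathrm H}^\ell(\bm g^*_{\mathrm H})\bigr]
+\bigl[\tilde R_{\mathrm H}^\ell(\bm g^*_{\mathrm H})-R_{\mathrm H}^\ell(\bm g^*_{\mathrm H})\bigr].
\]
The middle bracket is non-positive since $\tilde{\bm g}_{\mathrm H}$ minimizes $\tilde R_{\mathrm H}^\ell$, so the excess risk is at most $2\sup_{\{g_j\}\subseteq\mathcal G}\bigl|\tilde R_{\mathrm H}^\ell(\bm g)-R_{\mathrm H}^\ell(\bm g)\bigr|$. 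This leading factor of $2$ is exactly what propagates into every term of the stated bound.

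Next I would insert the expectation of the corrected estimator and use the triangle inequality,
\[
\bigl|\tilde R_{\mathrm H}^\ell(\bm g)-R_{\mathrm H}^\ell(\bm g)\bigr|
\leq
\bigl|\tilde R_{\mathrm H}^\ell(\bm g)-\mathbb E\tilde R_{\mathrm H}^\ell(\bm g)\bigr|
+\bigl|\mathbb E\tilde R_{\mathrm H}^\ell(\bm g)-R_{\mathrm H}^\ell(\bm g)\bigr|,
\]
so that the second summand is precisely the bias already controlled in Theorem~\ref{thm:bias_and_consistency_hl} by $\frac1q\sum_{j}(4-2\pi_j)C_\ell\Delta_j$. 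Combined with the factor $2$, this reproduces the deterministic bias term $\frac2q\sum_{j}(4-2\pi_j)C_\ell\Delta_j$ in the statement, and it remains only to control the data-dependent fluctuation $\sup_{\bm g}\bigl|\tilde R_{\mathrm H}^\ell(\bm g)-\mathbb E\tilde R_{\mathrm H}^\ell(\bm g)\bigr|$.

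For the fluctuation I would apply McDiarmid's inequality to the supremum and then symmetrize. The delicate bookkeeping is in the bounded-differences constants, which must respect the structure of $\tilde R_{\mathrm H}^\ell$. Perturbing a single unlabeled instance $\bm x^{\mathrm U}_i$ moves the average $\frac1n\sum_i\ell(g_j(\bm x^{\mathrm U}_i),1)$ by at most $C_\ell/n$; since it sits inside the $1$-Lipschitz absolute value with weight $1/q$ and is then summed over $j$, the overall change is at most $C_\ell/n$, which yields the $2C_\ell\sqrt{\ln(1/\delta)/(2n)}$ term. A single class-$j$ instance $\bm x^{j}_i$, however, occurs in \emph{two} places---inside the absolute value through $\frac{1-\pi_j}{n_j}\sum_i\ell(g_j(\bm x^{j}_i),1)$ and in the standalone term $\frac{1-\pi_j}{qn_j}\sum_i\ell(g_j(\bm x^{j}_i),0)$---so its bounded difference is $2(1-\pi_j)C_\ell/(qn_j)$, producing $\frac{(4-4\pi_j)C_\ell}{q}\sqrt{\ln(1/\delta)/(2n_j)}$. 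Symmetrization then replaces the expected supremum by Rademacher averages, after which I would peel off the $1$-Lipschitz absolute value and the $L_\ell$-Lipschitz loss $\ell$ by Talagrand's contraction lemma, and split the difference $A_j$ inside $|\cdot|$ into its $p$-average and $p_j$-average by sub-additivity of the supremum, giving $\mathfrak R_{n,p}(\mathcal G)$ and $(1-\pi_j)\mathfrak R_{n_j,p_j}(\mathcal G)$ respectively. The doubled appearance of the class-$j$ samples is precisely why their Rademacher coefficient ($16$) is twice the unlabeled one ($8$).

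The step I expect to be the main obstacle is the Rademacher/McDiarmid treatment of the absolute-value term, because it couples samples drawn from $p$ and from $p_j$ inside a single nonlinearity, so neither the bounded-differences constants nor the symmetrization factorize immediately. The clean way around this is to use that $|\cdot|$ is $1$-Lipschitz with $|0|=0$: it neither inflates the bounded-differences constant nor the Rademacher complexity of its argument $A_j$ beyond that of $A_j$ itself, after which sub-additivity bounds the complexity of $A_j$ by the sum of the complexities of its two empirical averages. Assembling these pieces---the factor $2$ from the excess-risk decomposition, the symmetrization factor, and the contraction through $|\cdot|$ and $\ell$---then yields the stated Rademacher, bias, and concentration terms and completes the high-probability bound.
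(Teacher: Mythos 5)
Your proposal is correct and follows essentially the same route as the paper's proof: the three-term decomposition yielding $2\sup_{\bm g}\bigl|\tilde R_{\mathrm H}^{\ell}(\bm g)-R_{\mathrm H}^{\ell}(\bm g)\bigr|$, the split into the bias term from Theorem~\ref{thm:bias_and_consistency_hl} plus a uniform deviation term, McDiarmid with exactly the bounded-differences constants you state ($C_\ell/n$ and $(2-2\pi_j)C_\ell/(qn_j)$), and symmetrization plus Lipschitz contraction giving the $8$ and $16$ Rademacher coefficients. The only cosmetic difference is that the paper removes the absolute value via the reverse triangle inequality $\bigl||a|-|b|\bigr|\le|a-b|$ in the two-sample symmetrized expression rather than contracting through $|\cdot|$ directly, which is an equivalent maneuver.
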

The proof can be found in Appendix~\ref{proof:thm_hl_eeb}. Theorem~\ref{thm:hl_eeb} shows that, as $n\rightarrow\infty$, $R_{\mathrm{H}}^{\ell}(\tilde{\bm{g}}_{\mathrm{H}})\rightarrow R_{\mathrm{H}}^{\ell}(\bm{g}^{*}_{\mathrm{H}})$, since $\Delta_j\rightarrow 0$, $\mathfrak{R}_{n,p}(\mathcal{G})\rightarrow 0$, and $\mathfrak{R}_{n_j,p_j}(\mathcal{G})\rightarrow 0$ for all parametric models with a bounded norm~\citep{mohri2012foundations}. This means that the minimizer of the corrected risk estimator will approach the desired classifier that minimize the $\ell$-risk w.r.t.~the Hamming loss. 

Let $R_{\mathrm{H}}^{\ell*}=\mathop{\inf}_{\bm{g}}R_{\mathrm{H}}^{\ell}(\bm{g})$ and $R_{\mathrm{H}}^{*}=\mathop{\inf}_{\bm{f}}R_{\mathrm{H}}^{\mathrm{0-1}}(\bm{f})$ denote the minima of the $\ell$-risk and the risk w.r.t.~the Hamming loss, respectively. Then, the following corollary holds.
\begin{corollary}\label{cor:hl}
If $\ell$ is a convex function such that $\forall y, \ell'(0,y) < 0$, then the $\ell$-risk w.r.t.~the Hamming loss in Eq.~(\ref{eq:rc_hl}) is consistent with the risk w.r.t.~the Hamming loss in Eq.~(\ref{eq:hl_01risk}). This means that, for any sequence of decision functions $\left\{\bm{g}_t\right\}$ with corresponding prediction functions $\left\{\bm{f}_t\right\}$, if $R^{\ell}_{\mathrm{H}}(\bm{g}_t)\rightarrow R^{\ell*}_{\mathrm{H}}$, then $R^{\mathrm{0-1}}_{\mathrm{H}}(\bm{f}_t)\rightarrow R^{*}_{\mathrm{H}}$.
\end{corollary}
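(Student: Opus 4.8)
The plan is to reduce Corollary~\ref{cor:hl} to a pointwise, per-label classification-calibration argument and then lift it to the expected risks by a Jensen-type averaging. First I would exploit the \emph{separability} of the Hamming loss across labels: both $R_{\mathrm{H}}^{\mathrm{0-1}}$ and $R_{\mathrm{H}}^{\ell}$ are averages over $j\in\mathcal{Y}$ of per-label risks, and the $j$-th summand depends only on $g_j$ (resp.\ $f_j$). Fixing $j$ and conditioning on $\bm{x}$, I would write $\eta_j(\bm{x})=p(y_j=1\mid\bm{x})$ so that the inner conditional $\ell$-risk is $C_{\eta_j}(z)=\eta_j\ell(z,1)+(1-\eta_j)\ell(z,0)$ and the inner conditional $0$-$1$ risk is $\eta_j\mathbb{I}(f_j\neq1)+(1-\eta_j)\mathbb{I}(f_j\neq0)$, which is minimized by the Bayes rule $f_j^{*}=\mathbb{I}(\eta_j>1/2)$. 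Since both total risks decompose as $\frac{1}{q}\sum_{j}\mathbb{E}_{p(\bm{x})}[\,\cdot\,]$, it suffices to control each inner conditional excess risk and recombine.

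For fixed $\eta$, the convexity of $\ell$ in its first argument makes $C_{\eta}(z)$ convex, so its minimizer $z^{*}(\eta)$ is well-behaved and, crucially, the calibration check reduces to a \emph{first-order} condition at the decision threshold used to obtain $f_j$ from $g_j$. Here the hypothesis $\ell'(0,y)<0$ for all $y$ enters: inspecting the sign of the one-sided derivative of $C_{\eta}$ at the threshold determines on which side $z^{*}(\eta)$ lies, and I would use this to show that thresholding the surrogate minimizer yields the Bayes-correct label whenever $\eta$ is bounded away from $1/2$. Formally, following the Bartlett--Jordan--McAuliffe framework, I would set $H(\eta)=\inf_{z}C_{\eta}(z)$ and the wrong-side value $H^{-}(\eta)=\inf\{C_{\eta}(z):\ \text{thresholding } z \text{ disagrees with } f_j^{*}\}$, and verify \emph{classification-calibration}, namely $H^{-}(\eta)>H(\eta)$ for every $\eta\neq1/2$.

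With pointwise calibration established, I would construct the associated convex, nondecreasing calibration ($\psi$-)transform with $\psi(0)=0$ satisfying the pointwise comparison $\psi\big(|2\eta_j-1|\,\mathbb{I}(f_j\neq f_j^{*})\big)\le C_{\eta_j}(g_j)-H(\eta_j)$, where the left factor is exactly the conditional excess $0$-$1$ risk of label $j$. Taking $\mathbb{E}_{p(\bm{x})}$, averaging over $j$, and applying Jensen's inequality to the convex $\psi$ gives $\psi\big(R_{\mathrm{H}}^{\mathrm{0-1}}(\bm{f})-R_{\mathrm{H}}^{*}\big)\le R_{\mathrm{H}}^{\ell}(\bm{g})-R_{\mathrm{H}}^{\ell*}$. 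Applying this to the sequence $\{\bm{g}_t\}$ with predictions $\{\bm{f}_t\}$: since $\psi$ is continuous, nondecreasing and vanishes only at $0$, $R_{\mathrm{H}}^{\ell}(\bm{g}_t)\to R_{\mathrm{H}}^{\ell*}$ forces $\psi(R_{\mathrm{H}}^{\mathrm{0-1}}(\bm{f}_t)-R_{\mathrm{H}}^{*})\to 0$ and hence $R_{\mathrm{H}}^{\mathrm{0-1}}(\bm{f}_t)\to R_{\mathrm{H}}^{*}$, which is the claimed consistency.

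I expect the main obstacle to be the calibration step of the second paragraph: rigorously proving that the convex conditional surrogate risk is minimized strictly on the Bayes-correct side of the threshold for all $\eta\neq1/2$ (so that $H^{-}(\eta)>H(\eta)$), and correctly disposing of the degenerate boundary set $\{\eta_j=1/2\}$, on which either prediction is Bayes-optimal and the comparison is vacuous. A secondary difficulty is producing a single calibration function $\psi$ that is valid uniformly over all coordinates $j$ and all $\bm{x}$, and checking that the Jensen averaging survives the mean over the $q$ labels; convexity of $\psi$ together with the common structure of the per-label problems is what I would rely on to push this through.
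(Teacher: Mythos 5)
Your proof is correct, but it takes a genuinely different and more self-contained route than the paper. The paper's own proof of Corollary~\ref{cor:hl} is essentially two citations: it first observes that the risk in Eq.~(\ref{eq:rc_hl}) is \emph{identically equal} to the standard $\ell$-risk in Eq.~(\ref{eq:hl_risk}) (this is exactly Theorem~\ref{thm:rc_hl}), so the two functionals share the same value, the same infimum $R_{\mathrm{H}}^{\ell*}$, and the same minimizing sequences; it then invokes Theorem~32 of \citet{gao2013consistency}, which asserts that Eq.~(\ref{eq:hl_risk}) is consistent w.r.t.\ the Hamming loss whenever $\ell$ is convex with $\ell'(0,y)<0$, together with their Theorem~4 characterizing consistency as precisely the sequence property stated in the corollary. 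You instead re-derive the content of that cited theorem from scratch via the Bartlett--Jordan--McAuliffe machinery: per-label decomposition of the Hamming loss, conditional risks $C_{\eta_j}$, the calibration condition $H^{-}(\eta)>H(\eta)$ for $\eta\neq 1/2$, a $\psi$-transform, and a Jensen lift across the $q$ labels. These steps are sound --- the same $\ell$ is used for every label, so a single $\psi$ is automatic rather than a difficulty, and the set $\{\eta_j=1/2\}$ is indeed vacuous --- and your route buys strictly more than the paper needs, since the $\psi$-comparison inequality is a quantitative regret-transfer bound rather than mere asymptotic consistency. Two small cautions. First, you should state explicitly, as the paper does, that Eq.~(\ref{eq:rc_hl}) and Eq.~(\ref{eq:hl_risk}) define the same functional: that identity is the only reason the corollary says anything about the inexact-supervision formulation, and it is the sole ingredient of the corollary beyond the known calibration result. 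Second, be careful with the hypothesis $\forall y,\ \ell'(0,y)<0$: read literally for a generic bivariate loss it would give $C'_{\eta}(0)=\eta\,\ell'(0,1)+(1-\eta)\,\ell'(0,0)<0$ for \emph{every} $\eta$, which would place the minimizer on the positive side regardless of $\eta$ and destroy calibration; the condition must be interpreted in the margin form $\ell(z,y)=\phi\bigl((2y-1)z\bigr)$ with $\phi'(0)<0$ inherited from \citet{gao2013consistency}, which is what your sign analysis of $C_{\eta}$ at the threshold implicitly assumes.
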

The proof can be found in Appendix~\ref{proof:cor_hl}. If the model is flexible enough to include the optimal classifier, according to Theorem~\ref{thm:hl_eeb}, we have $R^{\ell}_{\mathrm{H}}(\tilde{\bm{g}}_{\mathrm{H}})\rightarrow R^{\ell*}_{\mathrm{H}}$. Then, Corollary~\ref{cor:hl} demonstrates that $R^{\mathrm{0-1}}_{\mathrm{H}}(\tilde{\bm{f}}_{\mathrm{H}})\rightarrow R^{*}_{\mathrm{H}}$ where $\tilde{\bm{f}}_{\mathrm{H}}$ is the corresponding prediction function of $\tilde{\bm{g}}_{\mathrm{H}}$. This indicates that the prediction function obtained by minimizing the corrected risk estimator in Eq.~(\ref{eq:rc_hl_cre}) achieves the Bayes risk.
\subsection{Second-Order Strategy}
The first-order strategy is straightforward but does not consider label correlations, which may be incompatible with multi-label data that exhibit semantic dependencies. Therefore, we explore the ranking loss to model the relationship between pairs of labels. The following theorem applies.

\begin{theorem}\label{thm:rl_rc}
When the binary loss function $\ell$ is symmetric, i.e.,~$\ell(z,\cdot)+\ell(-z,\cdot)=M$ where $M$ is a non-negative constant, then under the assumption in Lemma~\ref{lm:data_generation}, the $\ell$-risk w.r.t.~the ranking loss in Eq.~(\ref{eq:rl_risk}) can be equivalently expressed as
\begin{align}\label{eq:rl_pml_risk}
R_{\mathrm{R}}^{\ell}(\bm{g})=&\sum\nolimits_{1\leq j<k\leq q}\left((1-\pi_j)\mathbb{E}_{p(\bm{x}|s_j=0)}\left[\ell\left(g_j(\bm{x})-g_k(\bm{x}),0\right)\right]\right.\nonumber\\
&\left.+(1-\pi_k)\mathbb{E}_{p(\bm{x}|s_k=0)}\left[\ell\left(g_j(\bm{x})-g_k(\bm{x}),1\right)\right]-Mp(y_j=0,y_k=0)\right).
\end{align}
\end{theorem}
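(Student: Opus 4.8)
The plan is to begin from the definition of the $\ell$-risk w.r.t.~the ranking loss in Eq.~(\ref{eq:rl_risk}) and rewrite it as an expectation over the marginal $p(\bm{x})$ weighted by class-posterior probabilities. Fix a pair $j<k$ and write $z=g_j(\bm{x})-g_k(\bm{x})$. The indicator $\mathbb{I}(y_j\neq y_k)$ keeps only the two configurations $(y_j,y_k)\in\{(1,0),(0,1)\}$, and the target $\tfrac{y_j-y_k+1}{2}$ equals $1$ in the first case and $0$ in the second. Marginalizing over all labels other than $y_j,y_k$ therefore yields
\begin{align*}
R_{\mathrm{R}}^{\ell}(\bm{g})=\sum_{1\leq j<k\leq q}\mathbb{E}_{p(\bm{x})}\Big[&p(y_j=1,y_k=0\mid\bm{x})\,\ell(z,1)\\
&+p(y_j=0,y_k=1\mid\bm{x})\,\ell(z,0)\Big].
\end{align*}

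Next I would perform the key algebraic step: decompose each joint posterior into marginals using $p(y_j=1,y_k=0\mid\bm{x})=p(y_k=0\mid\bm{x})-p(y_j=0,y_k=0\mid\bm{x})$ and $p(y_j=0,y_k=1\mid\bm{x})=p(y_j=0\mid\bm{x})-p(y_j=0,y_k=0\mid\bm{x})$. This splits each pairwise term into a part weighted by $p(y_k=0\mid\bm{x})$ (attached to target $1$), a part weighted by $p(y_j=0\mid\bm{x})$ (attached to target $0$), and a remainder proportional to $p(y_j=0,y_k=0\mid\bm{x})$ that carries both losses.

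For the first two parts I would apply Bayes' rule, $\mathbb{E}_{p(\bm{x})}[p(y_j=0\mid\bm{x})\,h(\bm{x})]=(1-\pi_j)\mathbb{E}_{p(\bm{x}\mid y_j=0)}[h(\bm{x})]$, and then invoke Lemma~\ref{lm:data_generation} to replace $p(\bm{x}\mid y_j=0)$ with $p(\bm{x}\mid s_j=0)$ (and symmetrically for $k$). These become precisely $(1-\pi_j)\mathbb{E}_{p(\bm{x}\mid s_j=0)}[\ell(z,0)]$ and $(1-\pi_k)\mathbb{E}_{p(\bm{x}\mid s_k=0)}[\ell(z,1)]$. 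For the remainder, both $\ell(z,1)$ and $\ell(z,0)$ appear at the same argument $z$ but opposite targets; invoking the symmetry hypothesis — which for a margin-type surrogate gives $\ell(z,0)=\ell(-z,1)$ and hence $\ell(z,1)+\ell(z,0)=\ell(z,1)+\ell(-z,1)=M$ — collapses the remainder to $-M\,\mathbb{E}_{p(\bm{x})}[p(y_j=0,y_k=0\mid\bm{x})]=-M\,p(y_j=0,y_k=0)$. Summing over all pairs then gives Eq.~(\ref{eq:rl_pml_risk}).

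The main obstacle is the symmetry step: one must argue that the stated condition $\ell(z,\cdot)+\ell(-z,\cdot)=M$, phrased for sign-flips of the argument, forces the sum of the two opposite-target losses at a common argument to equal $M$, which hinges on the margin structure implicit in feeding the single scalar $g_j(\bm{x})-g_k(\bm{x})$ into $\ell$. The remaining manipulations — the marginal decomposition and the Bayes/Lemma conversion — are routine, but they demand care to keep $p(y_j=0\mid\bm{x})$ paired with target $0$ and $p(y_k=0\mid\bm{x})$ paired with target $1$ throughout.
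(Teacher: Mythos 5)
Your proposal is correct and follows essentially the same route as the paper's proof: marginalize out all labels except $y_j,y_k$, split the joint (here written as posterior times marginal, which is only a notational difference) via $p(y_j=1,y_k=0\mid\bm{x})=p(y_k=0\mid\bm{x})-p(y_j=0,y_k=0\mid\bm{x})$ and its counterpart, convert the marginal-weighted terms with Bayes' rule and Lemma~\ref{lm:data_generation}, and collapse the remainder using symmetry of $\ell$. You are in fact more explicit than the paper about the one delicate point — that $\ell(z,0)+\ell(z,1)=M$ requires the margin-type identification $\ell(z,0)=\ell(-z,1)$ — which the paper's proof uses silently.
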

The proof can be found in Appendix~\ref{apd:proof_rl_rc}. Here, the symmetric-loss assumption is often used to ensure statistical consistency of the ranking loss for MLC~\citep{gao2013consistency}. According to Theorem~\ref{thm:rl_rc}, the $\ell$-risk w.r.t.~the ranking loss can be expressed as the expectation w.r.t.~the conditional density where the $j$-th class label is not regarded as a candidate label. Notably, $Mp(y_j=0,y_k=0)$ in Eq.~(\ref{eq:rl_pml_risk}) is a constant that does not affect training the classifier, so it can be neglected. Similar to the first-order strategy, an unbiased risk estimator can be obtained using $\mathcal{D}_j$:
\begin{align}
\hat{R}_{\mathrm{R}}^{\ell}(\bm{g})=&\sum\nolimits_{1\leq j<k\leq q}\left(\frac{1-\pi_j}{n_j}\sum\nolimits_{i=1}^{n_j}\ell\left(g_j(\bm{x}_i^j)-g_k(\bm{x}_i^j),0\right)\right.\nonumber\\
&\left.+\frac{1-\pi_k}{n_k}\sum\nolimits_{i=1}^{n_k}\ell\left(g_j(\bm{x}_i^k)-g_k(\bm{x}_i^k),1\right)\right).
\end{align}
To improve generalization performance, we use the flooding regularization technique~\citep{ishida2020do,liu2022flooding,bae2024adaflood} to mitigate overfitting issues:
\begin{equation}\label{eq:rl_cre}
\tilde{R}_{\mathrm{R}}^{\ell}(\bm{g})=\left|\hat{R}_{\mathrm{R}}^{\ell}(\bm{g})-\beta\right|+\beta,
\end{equation}
where $\beta\geq 0$ is a hyper-parameter that controls the minimum of the loss value. Then, we can perform ERM by using Eq.~(\ref{eq:rl_cre}). The algorithmic details are summarized in Algorithm~\ref{alg:comes}. We also establish consistency and estimation error bounds for the proposed risk estimator in Eq.~(\ref{eq:rl_cre}). The following theorem then holds.
\begin{theorem}\label{thm:rl_bias_consistency}
We assume that there exists a positive constant $\gamma$ such that $R_{\mathrm{R}}^{\ell}(\bm{g})\geq \gamma$. We also assume that $\beta$ is chosen such that $\beta\leq\sum\nolimits_{1\leq j<k\leq q}Mp(y_j=0,y_k=0)z$. By the assumptions in Theorem~\ref{thm:bias_and_consistency_hl}, the bias of the expectation of the corrected risk estimator w.r.t.~the ranking loss has the following lower and upper bounds: 
\begin{equation}\label{eq:bias_rl}
0\leq\mathbb{E}\left[\tilde{R}_{\mathrm{R}}^{\ell}(\bm{g})\right]-\sum\nolimits_{ j<k}Mp(y_j=0,y_k=0)-R_{\mathrm{R}}^{\ell}(\bm{g})\leq\left(2\beta+2C_{\ell}(q-1)\sum\nolimits_{j=1}^{q}(1-\pi_j)\right)\Delta',
\end{equation}
where $\Delta'=\exp{\left(-2\gamma^2/\sum_{j=1}^{q}(1-\pi_j)^2(q-1)^2 C_{\ell}^2/n_j\right)}$. Furthermore, for any $\delta > 0$, the following inequality holds with probability at least $1-\delta$:
\begin{align}
&\left|\tilde{R}_{\mathrm{R}}^{\ell}(\bm{g})-\sum\nolimits_{ j<k}Mp(y_j=0,y_k=0)-R_{\mathrm{R}}^{\ell}(\bm{g})\right|\leq \sum\nolimits_{j=1}^{q}(1-\pi_j)(q-1)C_{\ell}\sqrt{\frac{\ln{\left(2/\delta\right)}}{2n_j}}\nonumber\\
&+\left(2\beta+2C_{\ell}(q-1)\sum\nolimits_{j=1}^{q}(1-\pi_j)\right)\Delta'.
\end{align}
\end{theorem}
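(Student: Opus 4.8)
The plan is to reduce everything to the unbiased estimator $\hat{R}_{\mathrm{R}}^{\ell}(\bm{g})$ and the concentration of its flooded version. First I would record the exact relationship between the two: by Theorem~\ref{thm:rl_rc} and linearity of expectation, the unbiased estimator satisfies $\mathbb{E}[\hat{R}_{\mathrm{R}}^{\ell}(\bm{g})] = R_{\mathrm{R}}^{\ell}(\bm{g}) + \sum_{j<k}Mp(y_j=0,y_k=0)$, so the quantity whose bias must be controlled is exactly $\mathbb{E}[\tilde{R}_{\mathrm{R}}^{\ell}(\bm{g})] - \mathbb{E}[\hat{R}_{\mathrm{R}}^{\ell}(\bm{g})]$. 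The central algebraic fact is the pointwise identity $\tilde{R}_{\mathrm{R}}^{\ell}(\bm{g}) - \hat{R}_{\mathrm{R}}^{\ell}(\bm{g}) = |\hat{R}_{\mathrm{R}}^{\ell}(\bm{g})-\beta|+\beta-\hat{R}_{\mathrm{R}}^{\ell}(\bm{g}) = 2(\beta-\hat{R}_{\mathrm{R}}^{\ell}(\bm{g}))^{+}$, which is nonnegative and hence immediately yields the lower bound $0 \le \mathbb{E}[\tilde{R}_{\mathrm{R}}^{\ell}(\bm{g})]-\sum_{j<k}Mp(y_j=0,y_k=0)-R_{\mathrm{R}}^{\ell}(\bm{g})$.

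For the upper bound on the bias, I would bound $2\mathbb{E}[(\beta-\hat{R}_{\mathrm{R}}^{\ell}(\bm{g}))^{+}]$ by separating the magnitude of the integrand from the probability of the event $\{\hat{R}_{\mathrm{R}}^{\ell}(\bm{g})<\beta\}$ on which it is nonzero. The magnitude is controlled crudely by $(\beta-\hat{R}_{\mathrm{R}}^{\ell}(\bm{g}))^{+}\le \beta + (q-1)C_{\ell}\sum_{j=1}^{q}(1-\pi_j)$, using that $\hat{R}_{\mathrm{R}}^{\ell}(\bm{g})$ lies in $[0,(q-1)C_{\ell}\sum_j(1-\pi_j)]$ because each of the $q-1$ per-pair loss terms is in $[0,C_{\ell}]$. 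For the probability, I would combine the two assumptions $R_{\mathrm{R}}^{\ell}(\bm{g})\ge\gamma$ and $\beta\le\sum_{j<k}Mp(y_j=0,y_k=0)$ to deduce $\mathbb{E}[\hat{R}_{\mathrm{R}}^{\ell}(\bm{g})]\ge\gamma+\beta$, so that $\{\hat{R}_{\mathrm{R}}^{\ell}(\bm{g})<\beta\}\subseteq\{\mathbb{E}[\hat{R}_{\mathrm{R}}^{\ell}(\bm{g})]-\hat{R}_{\mathrm{R}}^{\ell}(\bm{g})>\gamma\}$; a one-sided McDiarmid inequality then bounds this probability by $\Delta'$. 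Multiplying the two pieces and the factor $2$ gives the stated upper bound.

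The concentration step is where I would spend the most care, and it is the main obstacle. Writing $\hat{R}_{\mathrm{R}}^{\ell}(\bm{g})=\sum_{m=1}^{q}\frac{1-\pi_m}{n_m}\sum_{i=1}^{n_m}h_m(\bm{x}_i^m)$ with $h_m(\bm{x})=\sum_{k>m}\ell(g_m(\bm{x})-g_k(\bm{x}),0)+\sum_{j<m}\ell(g_j(\bm{x})-g_m(\bm{x}),1)$ and $0\le h_m\le (q-1)C_{\ell}$, changing one sample in $\mathcal{D}_m$ perturbs $\hat{R}_{\mathrm{R}}^{\ell}(\bm{g})$ by at most $c_{m}=(1-\pi_m)(q-1)C_{\ell}/n_m$; summing $n_m c_m^2$ over $m$ produces the variance proxy $\sum_m (1-\pi_m)^2(q-1)^2C_{\ell}^2/n_m$ appearing in $\Delta'$. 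The subtlety is that the duplication construction in Eq.~(\ref{eq:data_partition}) makes a single original instance appear in several $\mathcal{D}_m$ simultaneously; I would handle this by treating the per-class subsamples as the independent coordinates underlying the bounded-differences argument, following the convention of the preceding theorems.

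Finally, for the high-probability deviation bound I would apply McDiarmid directly to $\tilde{R}_{\mathrm{R}}^{\ell}(\bm{g})$. Since the flooding map $x\mapsto|x-\beta|+\beta$ is $1$-Lipschitz, $\tilde{R}_{\mathrm{R}}^{\ell}(\bm{g})$ inherits the same bounded differences $c_m$ as $\hat{R}_{\mathrm{R}}^{\ell}(\bm{g})$, so with probability at least $1-\delta$ one gets $|\tilde{R}_{\mathrm{R}}^{\ell}(\bm{g})-\mathbb{E}[\tilde{R}_{\mathrm{R}}^{\ell}(\bm{g})]|\le\sqrt{\frac{\ln(2/\delta)}{2}\sum_m\frac{(1-\pi_m)^2(q-1)^2C_{\ell}^2}{n_m}}$, which I would loosen via subadditivity of the square root, $\sqrt{\sum_m a_m}\le\sum_m\sqrt{a_m}$, into $\sum_j(1-\pi_j)(q-1)C_{\ell}\sqrt{\ln(2/\delta)/(2n_j)}$. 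Adding the already-established bias bound $|\mathbb{E}[\tilde{R}_{\mathrm{R}}^{\ell}(\bm{g})]-\mathbb{E}[\hat{R}_{\mathrm{R}}^{\ell}(\bm{g})]|$ through the triangle inequality, and substituting $\mathbb{E}[\hat{R}_{\mathrm{R}}^{\ell}(\bm{g})]=R_{\mathrm{R}}^{\ell}(\bm{g})+\sum_{j<k}Mp(y_j=0,y_k=0)$, yields the claimed inequality.
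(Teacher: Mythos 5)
Your proposal is correct and follows essentially the same route as the paper's proof: the lower bound from nonnegativity of the flooding correction $\tilde{R}_{\mathrm{R}}^{\ell}-\hat{R}_{\mathrm{R}}^{\ell}=2(\beta-\hat{R}_{\mathrm{R}}^{\ell})^{+}$, the bias upper bound by splitting into a supremum of the integrand times the probability of the event $\{\hat{R}_{\mathrm{R}}^{\ell}\leq\beta\}$ controlled via McDiarmid with per-coordinate increments $(1-\pi_j)(q-1)C_{\ell}/n_j$, and the deviation bound by McDiarmid on $\tilde{R}_{\mathrm{R}}^{\ell}$ plus the triangle inequality. Your explicit remarks on the $1$-Lipschitzness of the flooding map and the square-root subadditivity step only make explicit what the paper leaves implicit.
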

The proof can be found in Appendix~\ref{apd:rl_bias_consistency}. According to Theorem~\ref{thm:rl_bias_consistency}, as $n\rightarrow\infty$, the bias between the corrected risk estimator in Eq.~(\ref{eq:rl_cre}) and the $\ell$-risk of ranking loss will become a constant. This implies that the minimizer of the corrected risk estimator is equivalent to the desired classifier that minimizes the $\ell$-risk w.r.t.~the Hamming loss.

Let $\tilde{\bm{g}}_{\mathrm{R}}=\mathop{\arg\min_{\{g_j\}\subseteq\mathcal{G}}}\tilde{R}_{\mathrm{R}}^{\ell}(\bm{g})$ and $\bm{g}^{*}_{\mathrm{R}}=\mathop{\arg\min_{\{g_j\}\subseteq\mathcal{G}}}R_{\mathrm{R}}^{\ell}(\bm{g})$ denote the minimizers of the corrected risk estimator and the $\ell$-risk w.r.t.~the ranking loss, respectively. 
\begin{theorem}\label{thm:rl_eeb}
By the assumptions in Theorem~\ref{thm:hl_eeb} and~\ref{thm:rl_bias_consistency}, for any $\delta>0$, the following inequality holds with probability at least $1-\delta$:
\begin{align}
&R_{\mathrm{R}}^{\ell}(\tilde{\bm{g}}_{\mathrm{R}})-R_{\mathrm{R}}^{\ell}(\bm{g}^{*}_{\mathrm{R}})\leq\left(2\beta+2C_{\ell}(q-1)\sum\nolimits_{j=1}^{q}(1-\pi_j)\right)\Delta'\nonumber\\
&+\sum\nolimits_{j=1}^{q}(1-\pi_j)(q-1)C_{\ell}\sqrt{\frac{\ln{\left(1/\delta\right)}}{n_j}}+\sum\nolimits_{j=1}^{q}4L_{\ell}(q-1)(1-\pi_j)\mathfrak{R}_{n_j,p_j}\left(\mathcal{G}\right).
\end{align}
\end{theorem}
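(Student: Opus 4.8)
The plan is to run the standard estimation-error decomposition, but to organize it so that the flooding correction contributes the bias factor $\Delta'$ exactly once. Write $B=\sum_{j<k}Mp(y_j=0,y_k=0)$ and $\bar R(\bm g)=R_{\mathrm R}^\ell(\bm g)+B=\mathbb E[\hat R_{\mathrm R}^\ell(\bm g)]$, the last equality being the unbiasedness that underlies Theorem~\ref{thm:rl_rc}. Since $B$ is a constant it cancels in any difference, so $R_{\mathrm R}^\ell(\tilde{\bm g}_{\mathrm R})-R_{\mathrm R}^\ell(\bm g^*_{\mathrm R})=\bar R(\tilde{\bm g}_{\mathrm R})-\bar R(\bm g^*_{\mathrm R})$ and $\bm g^*_{\mathrm R}$ also minimizes $\bar R$. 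First I would use the two elementary facts about flooding, namely $\hat R_{\mathrm R}^\ell\le\tilde R_{\mathrm R}^\ell$ pointwise (because $|x-\beta|+\beta\ge x$) and $\tilde R_{\mathrm R}^\ell(\tilde{\bm g}_{\mathrm R})\le\tilde R_{\mathrm R}^\ell(\bm g^*_{\mathrm R})$ (optimality of $\tilde{\bm g}_{\mathrm R}$). Chaining these gives $\hat R_{\mathrm R}^\ell(\tilde{\bm g}_{\mathrm R})\le\tilde R_{\mathrm R}^\ell(\bm g^*_{\mathrm R})$, hence
\begin{equation*}
\bar R(\tilde{\bm g}_{\mathrm R})-\bar R(\bm g^*_{\mathrm R})\le\underbrace{\bigl[\bar R(\tilde{\bm g}_{\mathrm R})-\hat R_{\mathrm R}^\ell(\tilde{\bm g}_{\mathrm R})\bigr]}_{\mathrm{(I)}}+\underbrace{\bigl[\tilde R_{\mathrm R}^\ell(\bm g^*_{\mathrm R})-\bar R(\bm g^*_{\mathrm R})\bigr]}_{\mathrm{(II)}}.
\end{equation*}

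The point of this particular split is that term~(I) involves only the unbiased estimator $\hat R_{\mathrm R}^\ell$, which is a genuine empirical average and therefore amenable to a uniform-convergence argument, whereas term~(II) is evaluated at the single fixed function $\bm g^*_{\mathrm R}$, so no uniform bound is needed there. For (I) I would pass to $\sup_{\{g_j\}\subseteq\mathcal G}\bigl[\mathbb E\hat R_{\mathrm R}^\ell(\bm g)-\hat R_{\mathrm R}^\ell(\bm g)\bigr]$ and control it by McDiarmid's inequality together with symmetrization and Talagrand's contraction lemma. The bounded-difference constant is the crux of the bookkeeping: a single instance in $\mathcal D_j$ enters the $q-1$ pairwise terms that involve class~$j$, each with coefficient $(1-\pi_j)/n_j$ and loss bounded by $C_\ell$, so perturbing it moves $\hat R_{\mathrm R}^\ell$ by at most $(q-1)(1-\pi_j)C_\ell/n_j$; this produces a concentration term of the form $\sum_j(1-\pi_j)(q-1)C_\ell\sqrt{\ln(1/\delta)/n_j}$. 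The expected supremum is then controlled by the Rademacher complexity: applying contraction to the $L_\ell$-Lipschitz loss and sub-additivity of the Rademacher complexity to the difference $g_j-g_k$ decomposes each pair into two per-class terms, and collecting the $q-1$ pairs attached to each class gives $\sum_j 4L_\ell(q-1)(1-\pi_j)\mathfrak R_{n_j,p_j}(\mathcal G)$.

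For term~(II) I would split once more as $\bigl[\tilde R_{\mathrm R}^\ell(\bm g^*_{\mathrm R})-\mathbb E\tilde R_{\mathrm R}^\ell(\bm g^*_{\mathrm R})\bigr]+\bigl[\mathbb E\tilde R_{\mathrm R}^\ell(\bm g^*_{\mathrm R})-\bar R(\bm g^*_{\mathrm R})\bigr]$. The second bracket is exactly the expectation bias, upper bounded by $\bigl(2\beta+2C_\ell(q-1)\sum_j(1-\pi_j)\bigr)\Delta'$ via the upper estimate of Theorem~\ref{thm:rl_bias_consistency} (whose hypotheses $R_{\mathrm R}^\ell\ge\gamma$ and $\beta\le B$ are carried over); this is where $\Delta'$ enters, and because (II) sits at a fixed point it enters only once. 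The first bracket is a fixed-point fluctuation of $\tilde R_{\mathrm R}^\ell$; since $x\mapsto|x-\beta|+\beta$ is $1$-Lipschitz, $\tilde R_{\mathrm R}^\ell$ inherits the same bounded-difference constants as $\hat R_{\mathrm R}^\ell$, so another McDiarmid application yields a concentration term of the same shape. Finally I would assemble the pieces, take a union bound over the high-probability events used in (I) and (II), and consolidate the two concentration contributions into the single $\sum_j(1-\pi_j)(q-1)C_\ell\sqrt{\ln(1/\delta)/n_j}$ term stated in the theorem, using $\sqrt{\sum_j a_j}\le\sum_j\sqrt{a_j}$ where needed.

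The main obstacle will be the uniform-convergence analysis in term~(I), specifically the correct handling of the pairwise ranking structure: because each instance participates in $q-1$ label pairs and the loss acts on the difference $g_j-g_k$ rather than on a single output, both the McDiarmid bounded-difference constant and the contraction/sub-additivity step must be tracked carefully to recover the $(q-1)(1-\pi_j)$ factors and the per-class complexities $\mathfrak R_{n_j,p_j}(\mathcal G)$ (note there is no $\mathfrak R_{n,p}$ term here, since the ranking-loss estimator never uses $\mathcal D_{\mathrm U}$). By contrast the flooding operator is cheap to dispatch: the inequality $\hat R_{\mathrm R}^\ell\le\tilde R_{\mathrm R}^\ell$ and the $1$-Lipschitzness of $x\mapsto|x-\beta|+\beta$ are all that is needed to confine its effect to the single bias term~(II), so the hardest conceptual move is choosing the split above rather than the subsequent estimates.
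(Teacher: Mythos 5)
Your proposal is correct, but it takes a genuinely different route from the paper. The paper proves a single uniform two-sided bound (its Lemma~\ref{lm:rl_eeb_lemma}) on $\sup_{\bm g}\bigl|R_{\mathrm R}^{\ell}(\bm g)+\sum_{j<k}Mp(y_j=0,y_k=0)-\tilde R_{\mathrm R}^{\ell}(\bm g)\bigr|$ --- assembling the bias bound of Theorem~\ref{thm:rl_bias_consistency}, McDiarmid applied to the \emph{corrected} estimator $\tilde R_{\mathrm R}^{\ell}$, and the symmetrization/contraction step via its Lemma~\ref{lm:composite_rade} --- and then invokes the generic chain $R_{\mathrm R}^{\ell}(\tilde{\bm g}_{\mathrm R})-R_{\mathrm R}^{\ell}(\bm g^{*}_{\mathrm R})\le 2\sup_{\bm g}|\cdots|$. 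Your decomposition instead exploits $\hat R_{\mathrm R}^{\ell}\le\tilde R_{\mathrm R}^{\ell}$ and the optimality of $\tilde{\bm g}_{\mathrm R}$ to confine the flooding correction to a single fixed-function evaluation at $\bm g^{*}_{\mathrm R}$, so that uniform convergence is only ever needed for the unbiased estimator $\hat R_{\mathrm R}^{\ell}$ and the bias factor $\Delta'$ enters exactly once. This is arguably sharper: the paper's argument, taken literally, should double every term in Lemma~\ref{lm:rl_eeb_lemma} (the final display in Appendix~\ref{proof:rl_eeb} ends with a factor of $2$ that is not reflected in the stated theorem), whereas your split avoids doubling the $\Delta'$ and Rademacher contributions; the price is two separate concentration events (one uniform, one at the fixed point), so the union bound yields $\ln(2/\delta)$ rather than $\ln(1/\delta)$ factors and a sum of two concentration terms that you must consolidate --- a constant-level discrepancy of the same order as the ones already present in the paper's own bookkeeping. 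Your accounting of the pairwise structure (bounded difference $(q-1)(1-\pi_j)C_{\ell}/n_j$ per instance of $\mathcal D_j$, contraction on $g_j-g_k$ giving $4L_{\ell}\mathfrak R_{n_j,p_j}(\mathcal G)$ per class, and the absence of any $\mathfrak R_{n,p}$ term) matches the paper's Lemmas~\ref{lm:pro_mass_rl} and~\ref{lm:composite_rade} exactly.
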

The proof can be found in Appendix~\ref{proof:rl_eeb}. Theorem~\ref{thm:rl_eeb} shows that as $n\rightarrow\infty$, $R_{\mathrm{R}}^{\ell}(\tilde{\bm{g}}_{\mathrm{R}})\rightarrow R_{\mathrm{R}}^{\ell}(\bm{g}^{*}_{\mathrm{R}})$, since $\Delta'\rightarrow 0$ and $\mathfrak{R}_{n_j,p_j}(\mathcal{G})\rightarrow 0$ for all parametric models with a bounded norm~\citep{mohri2012foundations}. This means that the minimizers of Eq.~(\ref{eq:rl_cre}) will approach the desired classifiers of the $\ell$-risk w.r.t.~the ranking loss when the number of training data increases. 
Let $R_{\mathrm{R}}^{\ell*}=\mathop{\inf}_{\bm{g}}R_{\mathrm{R}}^{\ell}(\bm{g})$ and $R_{\mathrm{R}}^{*}=\mathop{\inf}_{\bm{f}}R_{\mathrm{R}}^{\mathrm{0-1}}(\bm{f})$ denote the minima of the $\ell$-risk and the risk w.r.t.~the ranking loss, respectively. Then we have the following corollary.
\begin{corollary}\label{cor:rl}
If $\ell$ is a differentiable, symmetric, and non-increasing function such that $\forall y, \ell'(0,y)<0$ and $\ell(z,y)+\ell(-z,y)=M$, then the $\ell$-risk w.r.t.~the ranking loss in Eq.~(\ref{eq:rl_pml_risk}) is consistent with the risk w.r.t.~the ranking loss in Eq.~(\ref{eq:rl_01risk}). This means that for any sequences of decision functions $\left\{\bm{g}_t\right\}$ with corresponding prediction functions $\left\{\bm{f}_t\right\}$, if $R^{\ell}_{\mathrm{R}}(\bm{g}_t)\rightarrow R^{\ell*}_{\mathrm{R}}$, then $R^{\mathrm{0-1}}_{\mathrm{R}}(\bm{f}_t)\rightarrow R^{*}_{\mathrm{R}}$.
\end{corollary}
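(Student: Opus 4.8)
The plan is to reduce the claim to the consistency of the \emph{standard} ranking surrogate and then run a pointwise conditional-risk calibration argument in the spirit of \citet{gao2013consistency}. By Theorem~\ref{thm:rl_rc}, the risk in Eq.~(\ref{eq:rl_pml_risk}) equals the $\ell$-risk w.r.t.~the ranking loss in Eq.~(\ref{eq:rl_risk}) minus the additive constant $\sum_{j<k}Mp(y_j=0,y_k=0)$; since the two differ only by a constant, they have identical minimizers and sublevel structure, so it suffices to prove that $R^{\ell}_{\mathrm{R}}$ in Eq.~(\ref{eq:rl_risk}) is consistent with $R^{\mathrm{0-1}}_{\mathrm{R}}$ in Eq.~(\ref{eq:rl_01risk}). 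Both are expectations over $p(\bm{x})$ of inner sums over label pairs, so I would fix $\bm{x}$ and define, for a pair $(j,k)$, the conditional masses $a_{jk}=p(y_j=1,y_k=0\mid\bm{x})$ and $b_{jk}=p(y_j=0,y_k=1\mid\bm{x})$. The conditional $0$-$1$ pairwise risk is then $b_{jk}$ when the predicted order is $g_j>g_k$, $a_{jk}$ when it is $g_j<g_k$, and $\tfrac12(a_{jk}+b_{jk})$ under a tie, with pointwise Bayes value $\min(a_{jk},b_{jk})$; the predicted order of the thresholded $\bm{f}_t$ is inherited from that of $\bm{g}_t$, which is the object the surrogate acts on.

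The next step is to identify the Bayes-optimal ordering and verify it is globally coherent. The crucial algebraic fact is that $a_{jk}-b_{jk}=p(y_j=1\mid\bm{x})-p(y_k=1\mid\bm{x})=\eta_j-\eta_k$, writing $\eta_j=p(y_j=1\mid\bm{x})$, because the common $y_j=y_k=1$ mass cancels. Hence the pairwise-optimal rule ``order $g_j$ above $g_k$ iff $a_{jk}>b_{jk}$'' is exactly ``order by $\eta_j$ versus $\eta_k$'', so the pairwise decisions are generated by the single total order of $\eta_1,\dots,\eta_q$ and are cycle-free; any $\bm{g}$ that sorts labels by $\eta_j$ attains $\sum_{j<k}\min(a_{jk},b_{jk})$ for all pairs simultaneously, which is the conditional Bayes $0$-$1$ ranking risk.

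For the surrogate side, I would exploit the symmetry $\ell(z,y)+\ell(-z,y)=M$ and the margin structure it encodes, writing $\ell(z,1)=\psi(z)$ and $\ell(z,0)=\psi(-z)$ for a differentiable non-increasing base function $\psi$ with $\psi(z)+\psi(-z)=M$ and $\psi'(0)<0$. The conditional surrogate risk of pair $(j,k)$ then becomes $a_{jk}\psi(g_j-g_k)+b_{jk}\psi(g_k-g_j)=(\eta_j-\eta_k)\psi(g_j-g_k)+b_{jk}M$, so minimizing the conditional surrogate risk amounts to minimizing $\sum_{j<k}(\eta_j-\eta_k)\psi(g_j-g_k)$. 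I would then use a rearrangement/exchange argument: if a minimizer had $\eta_j>\eta_k$ yet $g_j^\star\le g_k^\star$, swapping $g_j^\star$ and $g_k^\star$ changes the objective by $(\eta_j-\eta_k)[\psi(g_k^\star-g_j^\star)-\psi(g_j^\star-g_k^\star)]$ on the $(j,k)$ term and, for every third label $l$, by $(\eta_j-\eta_k)[\psi(g_k^\star-g_l^\star)-\psi(g_j^\star-g_l^\star)]$, both of which are $\le 0$ because $\eta_j-\eta_k>0$ and $\psi$ is non-increasing; the $(j,k)$ term decreases strictly when $g_j^\star<g_k^\star$, while the boundary case $g_j^\star=g_k^\star$ is excluded by a first-order perturbation of $g_j^\star$ using $\psi'(0)<0$. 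This forces $\eta_j>\eta_k\Rightarrow g_j^\star>g_k^\star$, so the surrogate minimizer reproduces the Bayes ordering and attains the Bayes $0$-$1$ risk pointwise.

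Finally, I would promote this pointwise calibration to the stated convergence. If $R^{\ell}_{\mathrm{R}}(\bm{g}_t)\to R^{\ell*}_{\mathrm{R}}$, then the nonnegative conditional excess surrogate risks tend to $0$ in $L^1(p(\bm{x}))$, hence to $0$ pointwise a.e.~along a subsequence; at each such $\bm{x}$, a near-minimizer of $\sum_{j<k}(\eta_j-\eta_k)\psi(g_j-g_k)$ can misorder a pair only when $\eta_j$ and $\eta_k$ are close, and such a misordering costs exactly $|a_{jk}-b_{jk}|=|\eta_j-\eta_k|$ in excess $0$-$1$ risk, so the excess conditional $0$-$1$ ranking risk is dominated by the excess conditional surrogate risk and vanishes; dominated convergence (the conditional risks are bounded by $q(q-1)/2$) then yields $R^{\mathrm{0-1}}_{\mathrm{R}}(\bm{f}_t)\to R^{*}_{\mathrm{R}}$. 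The hard part will be the surrogate-minimizer analysis under the coupling of the scores: each $g_j$ enters every pair containing $j$, so the pairwise terms cannot be optimized separately and both the exchange argument and the near-minimizer bound must be handled jointly across all labels. The identity $a_{jk}-b_{jk}=\eta_j-\eta_k$ is the linchpin that makes this possible, since it guarantees the pairwise-optimal orders are mutually consistent and lets one total order resolve every pair at once.
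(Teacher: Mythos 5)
Your opening reduction is exactly the paper's proof: by Theorem~\ref{thm:rl_rc} the risk in Eq.~(\ref{eq:rl_pml_risk}) is an exact rewriting of the $\ell$-risk in Eq.~(\ref{eq:rl_risk}) (note they are \emph{equal}, not ``equal up to an additive constant''---the $Mp(y_j=0,y_k=0)$ term is already subtracted inside Eq.~(\ref{eq:rl_pml_risk})), so the claim reduces to consistency of the standard ranking surrogate. Where you diverge is that the paper stops there and invokes Theorem~10 of \citet{gao2013consistency} as a black box, whereas you sketch a self-contained re-derivation of that calibration result: the identity $a_{jk}-b_{jk}=\eta_j-\eta_k$, the cycle-free Bayes ordering, the rearrangement/exchange argument for the surrogate minimizer, and the passage from excess surrogate risk to excess $0$-$1$ risk. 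Your exchange argument is correct (the third-label terms do combine to $(\eta_j-\eta_k)[\psi(g_k^\star-g_l)-\psi(g_j^\star-g_l)]\le 0$, and the tie case is handled by $\psi'(0)<0$), and this buys transparency about \emph{why} the symmetry and non-increasing hypotheses are needed, which the paper's citation hides. Two caveats: first, writing $\ell(z,1)=\psi(z)$ and $\ell(z,0)=\psi(-z)$ assumes $\ell(z,0)=\ell(-z,1)$, which is the standard margin-loss structure but is not literally implied by the stated condition $\ell(z,y)+\ell(-z,y)=M$; you should state it as an assumption or derive it from the intended form of $\ell$. Second, your final step (``the excess conditional $0$-$1$ ranking risk is dominated by the excess conditional surrogate risk'') is asserted rather than proved---making the near-minimizer bound quantitative across all coupled pairs simultaneously is precisely the delicate content of \citet{gao2013consistency}, so as written your proof still implicitly leans on that result at its hardest point. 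If you intend the citation route, your argument matches the paper; if you intend a self-contained proof, that last step needs to be filled in.
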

The proof can be found in Appendix~\ref{proof:cor_rl}. If the model is very flexible, we have $R^{\ell}_{\mathrm{R}}(\tilde{\bm{g}}_{\mathrm{R}})\rightarrow R^{\ell*}_{\mathrm{R}}$ according to Theorem~\ref{thm:rl_eeb}. Then, Corollary~\ref{cor:rl} demonstrates that $R^{\mathrm{0-1}}_{\mathrm{R}}(\tilde{\bm{f}}_{\mathrm{R}})\rightarrow R^{*}_{\mathrm{R}}$ where $\tilde{\bm{f}}_{\mathrm{R}}$ is the corresponding prediction function of $\tilde{\bm{g}}_{\mathrm{R}}$. This indicates that the prediction function obtained by minimizing Eq.~(\ref{eq:rl_cre}) achieves the Bayes risk.
\begin{table*}[ht]
\centering
\small
\caption{Experimental results on real-world benchmark datasets. Lower is better for the \textit{ranking loss}, \textit{one error}, \textit{Hamming loss}, \textit{coverage}; higher is better for the \textit{average precision}.} 
\label{table:exp1}
\resizebox{0.99\textwidth}{!}{
\begin{tabular}{lllllll}
\toprule[1pt]
\multicolumn{7}{c}{\textbf{Ranking Loss $\downarrow$}}\\
\midrule
Approach & \multicolumn{1}{c}{mirflickr} & \multicolumn{1}{c}{music\_emotion} & \multicolumn{1}{c}{music\_style} & \multicolumn{1}{c}{yeastBP} & \multicolumn{1}{c}{yeastCC} & \multicolumn{1}{c}{yeastMF} \\
\midrule
BCE & 0.106 $\pm$ 0.008$\bullet$ & 0.244 $\pm$ 0.007$\bullet$ & 0.137 $\pm$ 0.009 & 0.328 $\pm$ 0.013$\bullet$ & 0.206 $\pm$ 0.011$\bullet$ & 0.251 $\pm$ 0.010$\bullet$ \\
CCMN & 0.106 $\pm$ 0.011$\bullet$ & 0.224 $\pm$ 0.007$\bullet$ & 0.155 $\pm$ 0.012$\bullet$ & 0.328 $\pm$ 0.011$\bullet$ & 0.210 $\pm$ 0.013$\bullet$ & 0.245 $\pm$ 0.011$\bullet$ \\
GDF & 0.159 $\pm$ 0.007$\bullet$ & 0.278 $\pm$ 0.010$\bullet$ & 0.160 $\pm$ 0.008$\bullet$ & 0.501 $\pm$ 0.009$\bullet$ & 0.504 $\pm$ 0.016$\bullet$ & 0.495 $\pm$ 0.029$\bullet$ \\
CTL & 0.130 $\pm$ 0.006$\bullet$ & 0.266 $\pm$ 0.010$\bullet$ & 0.179 $\pm$ 0.008$\bullet$ & 0.498 $\pm$ 0.007$\bullet$ & 0.467 $\pm$ 0.014$\bullet$ & 0.471 $\pm$ 0.026$\bullet$ \\
MLCL & 0.498 $\pm$ 0.035$\bullet$ & 0.470 $\pm$ 0.046$\bullet$ & \textbf{0.130 $\pm$ 0.010} & 0.453 $\pm$ 0.033$\bullet$ & 0.222 $\pm$ 0.047$\bullet$ & 0.231 $\pm$ 0.077$\bullet$ \\
\midrule
COMES-HL & \textbf{0.095 $\pm$ 0.009} & 0.214 $\pm$ 0.005 & 0.132 $\pm$ 0.010 & \textbf{0.154 $\pm$ 0.010} & 0.124 $\pm$ 0.011 & 0.173 $\pm$ 0.021$\bullet$ \\
COMES-RL & 0.106 $\pm$ 0.006$\bullet$ & \textbf{0.213 $\pm$ 0.003} & 0.147 $\pm$ 0.013$\bullet$ & 0.166 $\pm$ 0.010$\bullet$ & \textbf{0.117 $\pm$ 0.009} & \textbf{0.151 $\pm$ 0.014} \\
\bottomrule[1pt]
\end{tabular}}

\resizebox{0.99\textwidth}{!}{
\begin{tabular}{lllllll}
\multicolumn{7}{c}{\textbf{One Error $\downarrow$}}\\
\midrule
Approach & \multicolumn{1}{c}{mirflickr} & \multicolumn{1}{c}{music\_emotion} & \multicolumn{1}{c}{music\_style} & \multicolumn{1}{c}{yeastBP} & \multicolumn{1}{c}{yeastCC} & \multicolumn{1}{c}{yeastMF} \\
\midrule
BCE & 0.275 $\pm$ 0.021$\bullet$ & 0.462 $\pm$ 0.015$\bullet$ & 0.345 $\pm$ 0.019 & 0.871 $\pm$ 0.008$\bullet$ & 0.814 $\pm$ 0.019$\bullet$ & 0.886 $\pm$ 0.020$\bullet$ \\
CCMN & 0.282 $\pm$ 0.030$\bullet$ & 0.385 $\pm$ 0.018 & 0.346 $\pm$ 0.017 & 0.878 $\pm$ 0.016$\bullet$ & 0.823 $\pm$ 0.016$\bullet$ & 0.882 $\pm$ 0.012$\bullet$ \\
GDF & 0.409 $\pm$ 0.027$\bullet$ & 0.531 $\pm$ 0.012$\bullet$ & 0.367 $\pm$ 0.018$\bullet$ & 0.976 $\pm$ 0.006$\bullet$ & 0.971 $\pm$ 0.008$\bullet$ & 0.972 $\pm$ 0.007$\bullet$ \\
CTL & 0.366 $\pm$ 0.017$\bullet$ & 0.469 $\pm$ 0.019$\bullet$ & 0.394 $\pm$ 0.022$\bullet$ & 0.970 $\pm$ 0.006$\bullet$ & 0.964 $\pm$ 0.004$\bullet$ & 0.963 $\pm$ 0.010$\bullet$ \\
MLCL & 0.810 $\pm$ 0.066$\bullet$ & 0.793 $\pm$ 0.041$\bullet$ & 0.405 $\pm$ 0.068$\bullet$ & 0.961 $\pm$ 0.038$\bullet$ & 0.862 $\pm$ 0.066$\bullet$ & 0.887 $\pm$ 0.066$\bullet$ \\
\midrule 
COMES-HL & \textbf{0.171 $\pm$ 0.019} & \textbf{0.382 $\pm$ 0.015} & \textbf{0.333 $\pm$ 0.012} & \textbf{0.641 $\pm$ 0.030} & \textbf{0.744 $\pm$ 0.020} & \textbf{0.800 $\pm$ 0.023} \\
COMES-RL & 0.206 $\pm$ 0.036$\bullet$ & 0.409 $\pm$ 0.015$\bullet$ & 0.351 $\pm$ 0.021$\bullet$ & 0.808 $\pm$ 0.016$\bullet$ & 0.754 $\pm$ 0.022 & 0.805 $\pm$ 0.020 \\
\bottomrule[1pt]
\end{tabular}}

\resizebox{0.99\textwidth}{!}{
\begin{tabular}{lllllll}
\multicolumn{7}{c}{\textbf{Hamming Loss $\downarrow$}}\\
\midrule
Approach & \multicolumn{1}{c}{mirflickr} & \multicolumn{1}{c}{music\_emotion} & \multicolumn{1}{c}{music\_style} & \multicolumn{1}{c}{yeastBP} & \multicolumn{1}{c}{yeastCC} & \multicolumn{1}{c}{yeastMF} \\
\midrule
BCE & 0.220 $\pm$ 0.007$\bullet$ & 0.307 $\pm$ 0.007$\bullet$ & 0.186 $\pm$ 0.005$\bullet$ & 0.148 $\pm$ 0.007$\bullet$ & 0.162 $\pm$ 0.007$\bullet$ & 0.153 $\pm$ 0.006$\bullet$ \\
CCMN & 0.220 $\pm$ 0.006$\bullet$ & 0.284 $\pm$ 0.013$\bullet$ & 0.239 $\pm$ 0.020$\bullet$ & 0.151 $\pm$ 0.007$\bullet$ & 0.163 $\pm$ 0.008$\bullet$ & 0.150 $\pm$ 0.005$\bullet$ \\
GDF & 0.277 $\pm$ 0.007$\bullet$ & 0.374 $\pm$ 0.009$\bullet$ & 0.251 $\pm$ 0.008$\bullet$ & 0.499 $\pm$ 0.016$\bullet$ & 0.489 $\pm$ 0.026$\bullet$ & 0.497 $\pm$ 0.030$\bullet$ \\
CTL & 0.237 $\pm$ 0.006$\bullet$ & 0.349 $\pm$ 0.006$\bullet$ & 0.298 $\pm$ 0.008$\bullet$ & 0.493 $\pm$ 0.009$\bullet$ & 0.499 $\pm$ 0.007$\bullet$ & 0.496 $\pm$ 0.006$\bullet$ \\
MLCL & 0.601 $\pm$ 0.020$\bullet$ & 0.480 $\pm$ 0.025$\bullet$ & 0.246 $\pm$ 0.019$\bullet$ & 0.881 $\pm$ 0.096$\bullet$ & 0.845 $\pm$ 0.051$\bullet$ & 0.837 $\pm$ 0.024$\bullet$ \\
\midrule COMES-HL & \textbf{0.164 $\pm$ 0.003} & \textbf{0.247 $\pm$ 0.005} & \textbf{0.120 $\pm$ 0.006} & 0.073 $\pm$ 0.008$\bullet$ & 0.119 $\pm$ 0.015$\bullet$ & 0.101 $\pm$ 0.005$\bullet$ \\
COMES-RL & 0.186 $\pm$ 0.008$\bullet$ & 0.278 $\pm$ 0.005 $\bullet$ & 0.210 $\pm$ 0.008$\bullet$ & \textbf{0.051 $\pm$ 0.001} & \textbf{0.045 $\pm$ 0.004} & \textbf{0.048 $\pm$ 0.003} \\
\bottomrule[1pt]
\end{tabular}}
\resizebox{0.99\textwidth}{!}{
\begin{tabular}{lllllll}
\multicolumn{7}{c}{\textbf{Coverage $\downarrow$}}\\
\midrule
Approach & \multicolumn{1}{c}{mirflickr} & \multicolumn{1}{c}{music\_emotion} & \multicolumn{1}{c}{music\_style} & \multicolumn{1}{c}{yeastBP} & \multicolumn{1}{c}{yeastCC} & \multicolumn{1}{c}{yeastMF} \\
\midrule
BCE & 0.212 $\pm$ 0.009 & 0.408 $\pm$ 0.010$\bullet$ & 0.197 $\pm$ 0.011 & 0.437 $\pm$ 0.021$\bullet$ & 0.123 $\pm$ 0.010$\bullet$ & 0.125 $\pm$ 0.011$\bullet$ \\
CCMN & 0.212 $\pm$ 0.012 & 0.392 $\pm$ 0.010$\bullet$ & 0.216 $\pm$ 0.015$\bullet$ & 0.436 $\pm$ 0.022$\bullet$ & 0.125 $\pm$ 0.012$\bullet$ & 0.123 $\pm$ 0.012$\bullet$ \\
GDF & 0.254 $\pm$ 0.006$\bullet$ & 0.440 $\pm$ 0.010$\bullet$ & 0.220 $\pm$ 0.010$\bullet$ & 0.569 $\pm$ 0.019$\bullet$ & 0.273 $\pm$ 0.018$\bullet$ & 0.242 $\pm$ 0.022$\bullet$ \\
CTL & 0.229 $\pm$ 0.008$\bullet$ & 0.441 $\pm$ 0.014$\bullet$ & 0.240 $\pm$ 0.011$\bullet$ & 0.567 $\pm$ 0.016$\bullet$ & 0.259 $\pm$ 0.017$\bullet$ & 0.231 $\pm$ 0.015$\bullet$ \\
MLCL & 0.492 $\pm$ 0.036$\bullet$ & 0.596 $\pm$ 0.047$\bullet$ & \textbf{0.177 $\pm$ 0.013} & 0.530 $\pm$ 0.072$\bullet$ & 0.137 $\pm$ 0.045$\bullet$ & 0.099 $\pm$ 0.032$\bullet$ \\
\midrule COMES-HL & \textbf{0.211 $\pm$ 0.008} & 0.379 $\pm$ 0.008 & 0.192 $\pm$ 0.012 & 0.229 $\pm$ 0.016 & 0.070 $\pm$ 0.008 & 0.085 $\pm$ 0.006$\bullet$ \\
COMES-RL & 0.224 $\pm$ 0.008$\bullet$ & \textbf{0.377 $\pm$ 0.006} & 0.208 $\pm$ 0.015$\bullet$ & \textbf{0.219 $\pm$ 0.015} & \textbf{0.070 $\pm$ 0.006} & \textbf{0.073 $\pm$ 0.005} \\
\bottomrule[1pt]
\end{tabular}}
\resizebox{0.99\textwidth}{!}{
\begin{tabular}{lllllll}
\multicolumn{7}{c}{\textbf{Average Precision $\uparrow$}}\\
\midrule
Approach & \multicolumn{1}{c}{mirflickr} & \multicolumn{1}{c}{music\_emotion} & \multicolumn{1}{c}{music\_style} & \multicolumn{1}{c}{yeastBP} & \multicolumn{1}{c}{yeastCC} & \multicolumn{1}{c}{yeastMF} \\
\midrule
BCE & 0.813 $\pm$ 0.011$\bullet$ & 0.616 $\pm$ 0.009$\bullet$ & 0.738 $\pm$ 0.013$\bullet$ & 0.150 $\pm$ 0.013$\bullet$ & 0.487 $\pm$ 0.016$\bullet$ & 0.379 $\pm$ 0.019$\bullet$ \\
CCMN & 0.811 $\pm$ 0.016$\bullet$ & 0.660 $\pm$ 0.010 & 0.728 $\pm$ 0.013$\bullet$ & 0.150 $\pm$ 0.012$\bullet$ & 0.479 $\pm$ 0.016$\bullet$ & 0.386 $\pm$ 0.021$\bullet$ \\
GDF & 0.742 $\pm$ 0.013$\bullet$ & 0.574 $\pm$ 0.008$\bullet$ & 0.711 $\pm$ 0.011$\bullet$ & 0.057 $\pm$ 0.002$\bullet$ & 0.135 $\pm$ 0.010$\bullet$ & 0.144 $\pm$ 0.016$\bullet$ \\
CTL & 0.772 $\pm$ 0.009$\bullet$ & 0.600 $\pm$ 0.011$\bullet$ & 0.692 $\pm$ 0.012$\bullet$ & 0.060 $\pm$ 0.002$\bullet$ & 0.154 $\pm$ 0.004$\bullet$ & 0.165 $\pm$ 0.013$\bullet$ \\
MLCL & 0.446 $\pm$ 0.038$\bullet$ & 0.381 $\pm$ 0.029$\bullet$ & 0.719 $\pm$ 0.035$\bullet$ & 0.082 $\pm$ 0.015$\bullet$ & 0.402 $\pm$ 0.080$\bullet$ & 0.375 $\pm$ 0.124$\bullet$ \\
\midrule COMES-HL & \textbf{0.843 $\pm$ 0.013} & 0.665 $\pm$ 0.009 & \textbf{0.749 $\pm$ 0.010} & \textbf{0.458 $\pm$ 0.020} & \textbf{0.657 $\pm$ 0.020} & \textbf{0.552 $\pm$ 0.023} \\
COMES-RL & 0.818 $\pm$ 0.011$\bullet$ & \textbf{0.665 $\pm$ 0.006} & 0.732 $\pm$ 0.013$\bullet$ & 0.315 $\pm$ 0.015$\bullet$ & 0.651 $\pm$ 0.023 & 0.549 $\pm$ 0.019 \\
\bottomrule[1pt]
\end{tabular}}
\end{table*}
\section{Experiments}
In this section, we validate the effectiveness of the proposed approaches with experimental results.
\subsection{Experimental Setup}
We conducted experiments on both real-world and synthetic PML benchmark datasets. For real-world datasets, we used mirflickr~\citep{huiskes2008mir}, music\_emotion~\citep{zhang2020partial}, music\_style, yeastBP~\citep{yu2018feature}, yeastCC and yeastMF; for synthetic datasets, we used VOC2007~\citep{everingham2007voc}, VOC2012~\citep{everingham2012voc}, CUB~\citep{wah2011cub} and COCO2014~\citep{lin2014coco}, where candidate labels were generated by two different data generation processes. Full experimental details are given in Appendix~\ref{apd:exp}. Following standard practice~\citep{liu2023revisiting}, we evaluated with \textit{ranking loss, one error, Hamming loss, coverage} and \textit{average precision} on real-world sets, and with \textit{mean average precision~(mAP)} on synthetic datasets.

For real-world datasets, we used an MLP encoder for all baselines, trained for 200 epochs with a learning rate of 5e-3, weight decay of 1e-4, and the SGD optimizer with cosine decay. For synthetic image datasets, we adopted a ResNet-50 backbone pretrained on ImageNet~\citep{deng2009imagenet}, trained for 30 epochs with a learning rate of 1e-4 using the Adam optimizer. For fair comparisons, we used the same setup across all baselines. We assumed that the class priors were accessible to the learning algorithm. We instantiated $\ell$ with the binary cross-entropy loss for COMES-HL and the sigmoid loss for COMES-RL. We performed ten-fold cross-validation on real-world datasets. This means we used nine folds for training and one fold for testing. Then, we recorded the mean accuracy and standard deviation. For the synthetic datasets, we generated synthetic labels three times and recorded the mean accuracy and standard deviation. Finally, we conducted paired t-tests at a 0.05 significance level. 
\subsection{Experimental Results}
Tables~\ref{table:exp1} and~\ref{table:exp2} summarize results on real-world and synthetic datasets, respectively. Here $\bullet$ indicates that the best method is significantly better than its competitor~(paired $t$-test at 0.05 significance level). We observe that both instantiations of COMES consistently outperform other baselines across various datasets, clearly validating the effectiveness of our proposed approaches. We attribute this to:~(1)~our data generation assumptions are more realistic and better match statistics of real-world datasets;~(2)~our risk-correction approaches effectively mitigates overfitting, an issue overlooked by previous unbiased methods~\citep{xie2023ccmn}.

\subsection{Sensitivity Analysis}
\noindent{\textbf{Influence of Inaccurate Class Priors.}}~~~To investigate the influence of inaccurate class priors, we added Gaussian noise $\epsilon \sim \mathcal{N}(0,\sigma^{2})$ to each class prior $\pi_j$. The experimental results are shown in Figure~\ref{fig:inaccurate_pi}. We observe that COMES-HL is more sensitive to inaccurate class priors. Overall performance remains stable within a reasonable range of class priors, but may degrade when the priors become highly inaccurate.

\begin{table}[t]
\centering
\scriptsize
\caption{Classification performance in terms of mAP on synthetic benchmark datasets.}\label{table:exp2}
\setlength{\tabcolsep}{3pt} 
\renewcommand{\arraystretch}{0.9} 
\resizebox{0.99\textwidth}{!}{
\begin{tabular}{l*{8}{l}}
\toprule
& \multicolumn{2}{c}{VOC2007} & \multicolumn{2}{c}{VOC2012} & \multicolumn{2}{c}{CUB} & \multicolumn{2}{c}{COCO2014} \\
\cmidrule(lr){2-3}\cmidrule(lr){4-5}\cmidrule(lr){6-7}\cmidrule(lr){8-9}
Approach & \multicolumn{1}{c}{Case-a} & \multicolumn{1}{c}{Case-b} & \multicolumn{1}{c}{Case-a} & \multicolumn{1}{c}{Case-b} & \multicolumn{1}{c}{Case-a} & \multicolumn{1}{c}{Case-b} & \multicolumn{1}{c}{Case-a} & \multicolumn{1}{c}{Case-b} \\
\midrule
BCE      
& 40.26 $\pm$ 2.79${\bullet}$ & 38.87 $\pm$ 1.12${\bullet}$
& 37.59 $\pm$ 1.29${\bullet}$ & 41.17 $\pm$ 2.98${\bullet}$
& 16.30 $\pm$ 0.48${\bullet}$ & 16.09 $\pm$ 0.14${\bullet}$
& 26.73 $\pm$ 1.12${\bullet}$ & 27.10 $\pm$ 0.40${\bullet}$ \\

CCMN     
& 40.02 $\pm$ 4.98${\bullet}$ & 39.84 $\pm$ 1.22${\bullet}$
& 39.16 $\pm$ 2.34${\bullet}$ & 42.05 $\pm$ 4.84${\bullet}$
& 16.51 $\pm$ 0.25${\bullet}$ & 16.97 $\pm$ 0.90${\bullet}$
& 25.24 $\pm$ 1.81${\bullet}$ & 26.79 $\pm$ 1.23${\bullet}$ \\

GDF      
& 21.27 $\pm$ 1.03${\bullet}$ & 20.19 $\pm$ 0.43${\bullet}$
& 23.58 $\pm$ 2.55${\bullet}$ & 22.96 $\pm$ 2.87${\bullet}$
& 12.83 $\pm$ 0.15${\bullet}$ & 12.77 $\pm$ 0.10${\bullet}$
& 17.32 $\pm$ 0.62${\bullet}$ & 15.86 $\pm$ 0.35${\bullet}$ \\

CTL      
& 17.05 $\pm$ 0.90${\bullet}$ & 18.87 $\pm$ 1.86${\bullet}$
& 19.38 $\pm$ 0.81${\bullet}$ & 18.51 $\pm$ 0.71${\bullet}$
& 11.94 $\pm$ 0.23${\bullet}$ & 11.94 $\pm$ 0.23${\bullet}$
& ~~6.31 $\pm$ 0.30${\bullet}$ & ~~6.34 $\pm$ 0.14${\bullet}$ \\

MLCL     
& 23.42 $\pm$ 1.66${\bullet}$ & 17.78 $\pm$ 1.18${\bullet}$
& 15.02 $\pm$ 4.68${\bullet}$ & 15.00 $\pm$ 3.54${\bullet}$
& 16.80 $\pm$ 0.04${\bullet}$ & 17.92 $\pm$ 0.10${\bullet}$
& 10.59 $\pm$ 0.63${\bullet}$ & 10.67 $\pm$ 0.81${\bullet}$ \\
\cmidrule(lr){1-9}
COMES-HL 
& 42.33 $\pm$ 1.74${\bullet}$ & 42.43 $\pm$ 4.17${\bullet}$
& 48.72 $\pm$ 1.08${\bullet}$ & 47.93 $\pm$ 1.05${\bullet}$
& \textbf{18.94 $\pm$ 0.30} & \textbf{18.95 $\pm$ 0.39}
& \textbf{33.62 $\pm$ 0.57} & \textbf{32.76 $\pm$ 1.45} \\

COMES-RL 
& \textbf{51.46 $\pm$ 3.09} & \textbf{49.42 $\pm$ 4.27}
& \textbf{53.26 $\pm$ 0.74} & \textbf{52.29 $\pm$ 4.15}
& 17.50 $\pm$ 0.33${\bullet}$ & 17.34 $\pm$ 0.03${\bullet}$
& 27.98 $\pm$ 0.30${\bullet}$ & 28.69 $\pm$ 1.62${\bullet}$ \\
\bottomrule
\end{tabular}}
\end{table}
\begin{figure}[tbp]
  \centering
  \includegraphics[width=0.99\linewidth]
  {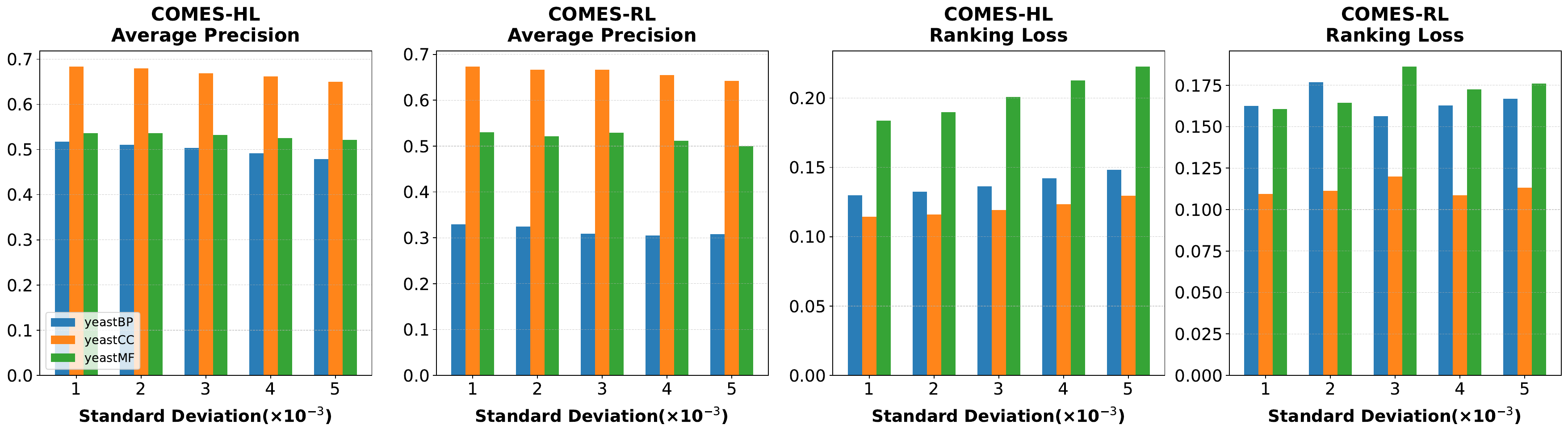}
  \vspace{-12pt}
  \caption{Classification performance with inaccurate class priors on different datasets.}\label{fig:inaccurate_pi}
\end{figure}
\begin{figure}[t!]
  \centering
  \includegraphics[width=0.99\linewidth]{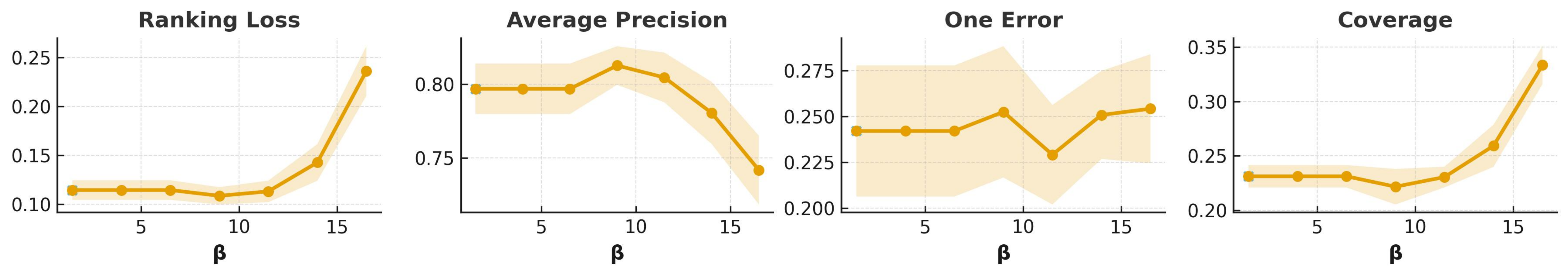}
  \vspace{-15pt}
  \caption{Sensitivity analysis w.r.t.~$\beta$ on the mirflickr dataset.}
  \label{fig:beta_sens}
\end{figure}
\noindent{\textbf{Influence of $\beta$.}}~~~We also investigated the influence of the hyperparameter $\beta$ for the flooding regularization used in COMES-RL. From Figure~\ref{fig:beta_sens}, we observe that the performance of COMES-RL is rather stable when $\beta$ is set within a reasonable range on the mirflickr dataset. During our experiments, we found that the performance was already competitive by setting $\beta=0$ on many datasets. However, the performance may degrade when $\beta$ is set to a large value. This also matches our theoretical results in Theorem~\ref{thm:rl_bias_consistency}, where consistency holds when $\beta$ is not too large.
\section{Conclusion}
In this paper, we rethought MLC under inexact supervision by proposing a novel framework. We proposed two instantiations of risk estimators w.r.t.~the Hamming loss and ranking loss, two widely used evaluation metrics for MLC, respectively. We also introduced risk-correction approaches to improve generalization performance with theoretical guarantees. Extensive experiments on ten real-world and synthetic benchmark datasets validated the effectiveness of the proposed approaches. A limitation of this work is that we consider the generation process to be independent of the instances. In the future, it is promising to extend our proposed methodologies to instance-dependent settings.
\subsubsection*{Acknowledgments}
WW was supported by the Junior Research Associate~(JRA) program of RIKEN. MS was supported by JST ASPIRE Grant Number JPMJAP2405.

\bibliographystyle{iclr2026_conference}
\bibliography{comes}
\newpage
\appendix
\subsubsection*{The Use of Large Language Models~(LLMs)}
We used LLMs to check the manuscript for typos and grammatical errors.
\section{More Details about the Algorithm}
\subsection{Algorithmic Pseudo-Code}
\begin{algorithm}
\caption{COMES-HL and COMES-RL}\label{alg:comes}
\noindent {\bf Input:} Multi-label classifiers $\bm{g}$, PML dataset $\mathcal{D}$, epoch number $T_{\text{max}}$, iteration number $I_{\text{max}}$.
\begin{algorithmic}[1]
\For{$t$ = $1,2,\ldots,T_{\text{max}}$}
\State \textbf{Shuffle} $\mathcal{D}$; 
    \For{$j = 1,\ldots,I_{\text{max}}$}
    \State \textbf{Fetch} mini-batch $\mathcal{D}_{j}$ from $\mathcal{D}$;
    \State \textbf{Forward} $\mathcal{D}$ and get the outputs of $\bm{g}$;
    \If{using the COMES-HL algorithm}
    \State \textbf{Calculating} the loss based on Eq.~(\ref{eq:rc_hl_cre}); 
    \ElsIf{using the COMES-RL algorithm}
    \State \textbf{Calculating} the loss based on Eq.~(\ref{eq:rl_cre});    
    \EndIf
    \State \textbf{Update} $\bm{g}$ using a stochastic optimizer to minimize the loss;
    \EndFor
\EndFor
\end{algorithmic}
\hspace*{0.02in} {\bf Output:} $\bm{g}$.
\end{algorithm}

\subsection{Class Prior Estimation}\label{apd:cpe}
We can use any off-the-shelf mixture proportion estimation algorithm to estimate the class priors~\citep{ramaswamy2016mixture,garg2021mixture,yao2022rethinking,zhu2023mixture}, which are mainly designed to estimate the class prior with positive and unlabeled data for binary classification. Specifically, we generate negative and unlabeled datasets according to Eq.~(\ref{eq:data_partition}) and then apply the mixture proportion estimation algorithm. The algorithmic details are summarized in Algorithm~\ref{algo:cpe_algo}.
\begin{algorithm}
\caption{Class-prior Estimation}
\label{algo:cpe_algo}
\noindent {\bf Input:} Mixture proportion estimation algorithm $\mathcal{A}$, PML dataset $\mathcal{D}$. 
\begin{algorithmic}[1]
\For{$k\in\mathcal{Y}$}
\State{\bf Generate} unlabeled and negative datasets according to Eq.~(\ref{eq:data_partition});
\State {\bf Estimate} the value of $(1-\pi_{k})$ by using $\mathcal{A}$ and interchanging the positive and negative classes;
\EndFor
\end{algorithmic}
\hspace*{0.02in} {\bf Output:} Class priors $\pi_{k}$~($k\in\mathcal{Y}$).
\end{algorithm}
\section{Proofs}
\subsection{Proof of Lemma~\ref{lm:data_generation}}\label{proof:data_generation}
\begin{proof}[\unskip \nopunct]
According to the definition of PML, when $s_{j}=0$, the $j$-th class is impossible to be a relavant label, and we have $p(j\notin Y|\bm{x}, s_{j}=0)=p(j\notin Y|s_{j}=0)=1$. Therefore, on one hand, we have 
\begin{equation}
p(\bm{x}|s_{j}=0,j\notin Y)=\frac{p\left(\bm{x}|s_{j}=0\right)p(j\notin Y|\bm{x}, s_{j}=0)}{p(j\notin Y|s_{j}=0)}=p\left(\bm{x}|s_{j}=0\right). \nonumber
\end{equation}

On the other hand, we have 
\begin{equation}
p(\bm{x}|s_{j}=0,j\notin Y)=\frac{p(\bm{x}|j\notin Y)p(s_{j}=0|\bm{x}, j\notin Y)}{p(s_{j}=0|j\notin Y)}=p(\bm{x}|j\notin Y), \nonumber
\end{equation}
where the second equation is due to $p(s_{j}=0|\bm{x}, j\notin Y)=p(s_{j}=0|j\notin Y)=p_j$. The proof is completed.
\end{proof}
\subsection{Proof of Theorem~\ref{thm:rc_hl}}\label{proof:rc_hl}
\begin{proof}[\unskip \nopunct]
\begin{align}
R_{\mathrm{H}}^{\ell}(\bm{g})=&\mathbb{E}_{p(\bm{x},Y)}\left[\frac{1}{q}\sum\nolimits_{j=1}^{q}\ell\left(g_j\left(\bm{x}\right),y_j\right)\right] \nonumber\\
=&\int\sum\nolimits_{Y}\frac{1}{q}\sum\nolimits_{j=1}^{q}\ell\left(g_j\left(\bm{x}\right),y_j\right)p\left(\bm{x}, Y\right)\diff\bm{x} \nonumber\\
=&\int\frac{1}{q}\sum\nolimits_{j=1}^{q}\sum\nolimits_{y_j=0}^{1}\sum\nolimits_{Y'=Y\backslash j}\ell\left(g_j\left(\bm{x}\right),y_j\right)p(\bm{x}, y_j)p\left(Y'|\bm{x}, y_j\right)\diff\bm{x} \nonumber\\
=&\int\frac{1}{q}\sum\nolimits_{j=1}^{q}\sum\nolimits_{y_j=0}^{1}\ell\left(g_j\left(\bm{x}\right),y_j\right)p(\bm{x}, y_j)\sum\nolimits_{Y'=Y\backslash j}p\left(Y'|\bm{x}, y_j\right)\diff\bm{x} \nonumber\\
=&\int\frac{1}{q}\sum\nolimits_{j=1}^{q}\sum\nolimits_{y_j=0}^{1}\ell\left(g_j\left(\bm{x}\right),y_j\right)p(\bm{x}, y_j)\diff\bm{x} \nonumber\\
=&\int\frac{1}{q}\sum\nolimits_{j=1}^{q}\ell\left(g_j\left(\bm{x}\right),1\right)p(\bm{x},y_j=1)\diff\bm{x}+\int\frac{1}{q}\sum\nolimits_{j=1}^{q}\ell\left(g_j\left(\bm{x}\right),0\right) p\left(\bm{x},y_j=0\right)\diff\bm{x} \nonumber\\
=&\int\frac{1}{q}\sum\nolimits_{j=1}^{q}\ell\left(g_j\left(\bm{x}\right),1\right)p(\bm{x})\diff\bm{x}-\int\frac{1}{q}\sum\nolimits_{j=1}^{q}\ell\left(g_j\left(\bm{x}\right),1\right)p(\bm{x},y_j=0)\diff\bm{x}\nonumber\\
&+\int\frac{1}{q}\sum\nolimits_{j=1}^{q}\ell\left(g_j\left(\bm{x}\right),0\right) p\left(\bm{x},y_j=0\right)\diff\bm{x} \nonumber\\
=&\int\frac{1}{q}\sum\nolimits_{j=1}^{q}\ell\left(g_j\left(\bm{x}\right),1\right)p(\bm{x})\diff\bm{x}+\int\frac{1}{q}\sum\nolimits_{j=1}^{q}\left(\ell\left(g_j\left(\bm{x}\right),0\right)-\ell\left(g_j\left(\bm{x}\right),1\right)\right)\left(1-\pi_j\right)p\left(\bm{x}|y_j=0\right)\diff\bm{x} \nonumber\\
=&\mathbb{E}_{p(\bm{x})}\left[\frac{1}{q}\sum\nolimits_{j=1}^{q}\ell\left(g_j\left(\bm{x}\right),1\right)\right]+\mathbb{E}_{p(\bm{x}|y_j=0)}\left[\frac{1}{q}\sum\nolimits_{j=1}^{q}\left(1-\pi_j\right)\left(\ell\left(g_j\left(\bm{x}\right),0\right)-\ell\left(g_j\left(\bm{x}\right),1\right)\right)\right]\nonumber\\
=&\mathbb{E}_{p(\bm{x})}\left[\frac{1}{q}\sum\nolimits_{j=1}^{q}\ell\left(g_j\left(\bm{x}\right),1\right)\right]+\mathbb{E}_{p(\bm{x}|s_j=0)}\left[\frac{1}{q}\sum\nolimits_{j=1}^{q}\left(1-\pi_j\right)\left(\ell\left(g_j\left(\bm{x}\right),0\right)-\ell\left(g_j\left(\bm{x}\right),1\right)\right)\right],\nonumber
\end{align}
where the last equation is by Lemma~\ref{lm:data_generation}. The proof is completed.
\end{proof}
\subsection{Proof of Theorem~\ref{thm:bias_and_consistency_hl}}\label{proof:bias_and_consistency_hl}
Let
\begin{equation}
\mathfrak{D}^{+}_{j}(g_{j})=\left\{\left(\mathcal{D}_{\rm U}, \mathcal{D}_{j}\right)|\frac{1}{n}\sum\nolimits_{i=1}^{n}\ell\left(g_j\left(\bm{x}^{\mathrm{U}}_i\right),1\right)-\frac{1-\pi_j}{n_j}\sum\nolimits_{i=1}^{n_j}\ell\left(g_j\left(\bm{x}^{j}_{i}\right),1\right) > 0\right\}\nonumber
\end{equation}
and 
\begin{equation}
\mathfrak{D}^{-}_{j}(g_{j})=\left\{\left(\mathcal{D}_{\rm U}, \mathcal{D}_{j}\right)|\frac{1}{n}\sum\nolimits_{i=1}^{n}\ell\left(g_j\left(\bm{x}^{\mathrm{U}}_i\right),1\right)-\frac{1-\pi_j}{n_j}\sum\nolimits_{i=1}^{n_j}\ell\left(g_j\left(\bm{x}^{j}_{i}\right),1\right) \leq 0\right\}\nonumber
\end{equation}
denote the set of data pairs with positive and negative empirical losses, respectively. Then, we have the following lemma.
\begin{lemma}\label{lm:pro_mass_hl}
The probability measure of $\mathfrak{D}^{-}_{j}(g_{j})$ can be bounded as follows: 
\begin{equation}
\mathbb{P}\left(\mathfrak{D}^{-}_{j}(g_{j})\right)\leq \exp\left(\frac{-2\alpha^2}{C_{\ell}^2/n+(1-\pi_j)^2 C_{\ell}^2/n_j}\right).
\end{equation}
\end{lemma}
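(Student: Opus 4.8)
The plan is to apply the bounded-differences (McDiarmid) inequality to the random quantity
\[
Z := \frac{1}{n}\sum\nolimits_{i=1}^{n}\ell\left(g_j(\bm{x}^{\mathrm{U}}_i),1\right) - \frac{1-\pi_j}{n_j}\sum\nolimits_{i=1}^{n_j}\ell\left(g_j(\bm{x}^{j}_i),1\right),
\]
since $\mathfrak{D}^{-}_{j}(g_{j})$ is by definition the event $\{Z \leq 0\}$. Following the generation convention of Eq.~(\ref{eq:data_partition}), I would treat the duplicated sets $\mathcal{D}_{\mathrm{U}}$ and $\mathcal{D}_j$, of fixed sizes $n$ and $n_j$, as independent i.i.d.\ samples from $p(\bm{x})$ and $p(\bm{x}|s_j=0)$, so that $Z$ is a function of $n+n_j$ independent instances.

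First I would compute $\mathbb{E}[Z]$. The first average contributes $\mathbb{E}_{p(\bm{x})}[\ell(g_j(\bm{x}),1)]$, and by Lemma~\ref{lm:data_generation} the identity $p(\bm{x}|s_j=0)=p(\bm{x}|y_j=0)$ turns the second average into $(1-\pi_j)\mathbb{E}_{p(\bm{x}|y_j=0)}[\ell(g_j(\bm{x}),1)]$. Decomposing the marginal as $p(\bm{x})=\pi_j\,p(\bm{x}|y_j=1)+(1-\pi_j)\,p(\bm{x}|y_j=0)$ and cancelling the $(1-\pi_j)$ terms gives
\[
\mathbb{E}[Z] = \pi_j\,\mathbb{E}_{p(\bm{x}|y_j=1)}\left[\ell(g_j(\bm{x}),1)\right] \geq \alpha,
\]
where the inequality is exactly the non-negativity assumption of Theorem~\ref{thm:bias_and_consistency_hl}.

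Next I would convert the event into a one-sided deviation: since $\mathbb{E}[Z]\geq\alpha$, we have $\{Z\leq 0\}\subseteq\{Z-\mathbb{E}[Z]\leq-\alpha\}$, so it suffices to bound $\mathbb{P}(Z-\mathbb{E}[Z]\leq-\alpha)$. To invoke McDiarmid I would verify the per-coordinate increments: because $0\leq\ell\leq C_{\ell}$, replacing one instance in $\mathcal{D}_{\mathrm{U}}$ changes $Z$ by at most $C_{\ell}/n$, while replacing one in $\mathcal{D}_j$ changes it by at most $(1-\pi_j)C_{\ell}/n_j$. The sum of squared increments is $n\cdot(C_{\ell}/n)^2+n_j\cdot((1-\pi_j)C_{\ell}/n_j)^2=C_{\ell}^2/n+(1-\pi_j)^2C_{\ell}^2/n_j$, and McDiarmid's inequality with deviation $t=\alpha$ yields precisely the stated bound.

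The main obstacle here is conceptual rather than computational: one must justify treating the overlapping, duplicated collections $\mathcal{D}_{\mathrm{U}}$ and $\mathcal{D}_j$ as independent i.i.d.\ samples, which the generation convention of Eq.~(\ref{eq:data_partition}) licenses, and regard $n_j$ as fixed so that the bounded-difference constants are well defined. The expectation computation via Lemma~\ref{lm:data_generation} and the mixture decomposition is the only algebraic step, and it is routine.
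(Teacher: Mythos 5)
Your proposal is correct and follows essentially the same route as the paper: both apply McDiarmid's inequality to $Z$ with per-coordinate increments $C_{\ell}/n$ and $(1-\pi_j)C_{\ell}/n_j$, and then use the assumption $\pi_j\mathbb{E}_{p(\bm{x}|y_j=1)}[\ell(g_j(\bm{x}),1)]\geq\alpha$ to convert $\{Z\leq 0\}$ into a one-sided deviation of size $\alpha$. The only difference is cosmetic: you spell out the computation $\mathbb{E}[Z]=\pi_j\mathbb{E}_{p(\bm{x}|y_j=1)}[\ell(g_j(\bm{x}),1)]$ via Lemma~\ref{lm:data_generation} and the mixture decomposition, which the paper leaves implicit.
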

\begin{proof}
Let 
\begin{equation}
p\left(\mathcal{D}_{\rm U}\right)=p(\bm{x}_1^{\rm U})p(\bm{x}_2^{\rm U})\cdots p(\bm{x}_n^{\rm U}) \quad\text{and}\quad
p\left(\mathcal{D}_{j}\right)=p(\bm{x}_1^{j})p(\bm{x}_2^{j})\cdots p(\bm{x}_{n_j}^{j})\nonumber
\end{equation}
denote the densities of $\mathcal{D}_{\rm U}$ and $\mathcal{D}_{j}$, respectively. Then, the joint density of $\left(\mathcal{D}_{\rm U}, \mathcal{D}_{j}\right)$ is 
\begin{equation}
p\left(\mathcal{D}_{\rm U}, \mathcal{D}_{j}\right)=p\left(\mathcal{D}_{\rm U}\right)p\left(\mathcal{D}_{j}\right).\nonumber
\end{equation}
Then, the probability measure of $\mathfrak{D}^{-}_{j}(g_{j})$ can be expressed as 
\begin{align}
\mathbb{P}\left(\mathfrak{D}^{-}_{j}(g_{j})\right)=&\int_{\left(\mathcal{D}_{\rm U}, \mathcal{D}_{j}\right)\in\mathfrak{D}^{-}_{j}(g_{j})}p\left(\mathcal{D}_{\rm U}, \mathcal{D}_{j}\right)\diff\left(\mathcal{D}_{\rm U}, \mathcal{D}_{j}\right)\nonumber\\
=&\int_{\left(\mathcal{D}_{\rm U}, \mathcal{D}_{j}\right)\in\mathfrak{D}^{-}_{j}(g_{j})}p\left(\mathcal{D}_{\rm U}, \mathcal{D}_{j}\right)\diff \bm{x}_1^{\rm U}\diff \bm{x}_2^{\rm U}\cdots\diff \bm{x}_n^{\rm U}\diff \bm{x}_1^{j}\diff \bm{x}_2^{j}\cdots\diff \bm{x}_{n_j}^{j}\nonumber
\end{align}
When an instance in $\mathcal{D}_{\rm U}$ is replaced by another instance, the value of $\sum\nolimits_{i=1}^{n}\ell\left(g_j\left(\bm{x}^{\mathrm{U}}_i\right),1\right)/n-(1-\pi_j)\sum\nolimits_{i=1}^{n_j}\ell\left(g_j\left(\bm{x}^{j}_{i}\right),1\right)/n_j$ changes no more than $C_{\ell}/n$. When an instance in $\mathcal{D}_{j}$ is replaced by another instance, the value of $\sum\nolimits_{i=1}^{n}\ell\left(g_j\left(\bm{x}^{\mathrm{U}}_i\right),1\right)/n-(1-\pi_j)\sum\nolimits_{i=1}^{n_j}\ell\left(g_j\left(\bm{x}^{j}_{i}\right),1\right)/n_j$ changes no more than $(1-\pi_j)C_{\ell}/n_j$. Therefore, by applying the McDiarmid’s inequality, we can obtain the following inequality:
\begin{align}
&p\left(\pi_j\mathbb{E}_{p(\bm{x}|y_j=1)}\left[\ell\left(g_j\left(\bm{x}\right),1\right)\right]-\left(\frac{1}{n}\sum\nolimits_{i=1}^{n}\ell\left(g_j\left(\bm{x}^{\mathrm{U}}_i\right),1\right)-\frac{1-\pi_j}{n_j}\sum\nolimits_{i=1}^{n_j}\ell\left(g_j\left(\bm{x}^{j}_{i}\right),1\right)\right)\geq \alpha\right)\nonumber\\
&\leq\exp\left(\frac{-2\alpha^2}{C_{\ell}^2/n+(1-\pi_j)^2 C_{\ell}^2/n_j}\right). \nonumber
\end{align}
Then we have 
\begin{align}
\mathbb{P}\left(\mathfrak{D}^{-}_{j}(g_{j})\right)=&p\left(\frac{1}{n}\sum\nolimits_{i=1}^{n}\ell\left(g_j\left(\bm{x}^{\mathrm{U}}_i\right),1\right)-\frac{1-\pi_j}{n_j}\sum\nolimits_{i=1}^{n_j}\ell\left(g_j\left(\bm{x}^{j}_{i}\right),1\right) \leq 0\right)\nonumber\\
\leq&p\left(\frac{1}{n}\sum\nolimits_{i=1}^{n}\ell\left(g_j\left(\bm{x}^{\mathrm{U}}_i\right),1\right)-\frac{1-\pi_j}{n_j}\sum\nolimits_{i=1}^{n_j}\ell\left(g_j\left(\bm{x}^{j}_{i}\right),1\right) \leq \pi_j\mathbb{E}_{p(\bm{x}|y_j=1)}\left[\ell\left(g_j\left(\bm{x}\right),1\right)\right]-\alpha\right)\nonumber\\
\leq&\exp\left(\frac{-2\alpha^2}{C_{\ell}^2/n+(1-\pi_j)^2 C_{\ell}^2/n_j}\right), \nonumber
\end{align}
which concludes the proof.
\end{proof}
Then, we provide the proof of Theorem~\ref{thm:bias_and_consistency_hl}.
\begin{proof}[Proof of Theorem~\ref{thm:bias_and_consistency_hl}]
To begin with, we have
\begin{equation}
\mathbb{E}\left[\tilde{R}_{\mathrm{H}}^{\ell}(\bm{g})\right]-R_{\mathrm{H}}^{\ell}(\bm{g})=\mathbb{E}\left[\tilde{R}_{\mathrm{H}}^{\ell}(\bm{g})-\hat{R}_{\mathrm{H}}^{\ell}(\bm{g})\right]\geq 0.\nonumber
\end{equation}
Besides, we have 
\begin{align}
&\left|\frac{1}{n}\sum\nolimits_{i=1}^{n}\ell\left(g_j\left(\bm{x}^{\mathrm{U}}_i\right),1\right)-\frac{1-\pi_j}{n_j}\sum\nolimits_{i=1}^{n_j}\ell\left(g_j\left(\bm{x}^{j}_{i}\right),1\right)\right|\nonumber\\
\leq&\left|\frac{1}{n}\sum\nolimits_{i=1}^{n}\ell\left(g_j\left(\bm{x}^{\mathrm{U}}_i\right),1\right)\right|+\left|\frac{1-\pi_j}{n_j}\sum\nolimits_{i=1}^{n_j}\ell\left(g_j\left(\bm{x}^{j}_{i}\right),1\right)\right|\nonumber\\
\leq&(2-\pi_j)C_{\ell}.\nonumber
\end{align}
Then,
\begin{align}
&\mathbb{E}\left[\tilde{R}_{\mathrm{H}}^{\ell}(\bm{g})\right]-R_{\mathrm{H}}^{\ell}(\bm{g})\nonumber\\
=&\mathbb{E}\left[\tilde{R}_{\mathrm{H}}^{\ell}(\bm{g})-\hat{R}_{\mathrm{H}}^{\ell}(\bm{g})\right]\nonumber\\
=&\frac{1}{q}\sum\nolimits_{j=1}^{q}\int_{\left(\mathcal{D}_{\rm U}, \mathcal{D}_{j}\right)\in\mathfrak{D}^{-}_{j}(g_{j})}\left(\left|\frac{1}{n}\sum\nolimits_{i=1}^{n}\ell\left(g_j\left(\bm{x}^{\mathrm{U}}_i\right),1\right)-\frac{1-\pi_j}{n_j}\sum\nolimits_{i=1}^{n_j}\ell\left(g_j\left(\bm{x}^{j}_{i}\right),1\right)\right|\right.\nonumber\\
&\left.-\left(\frac{1}{n}\sum\nolimits_{i=1}^{n}\ell\left(g_j\left(\bm{x}^{\mathrm{U}}_i\right),1\right)-\frac{1-\pi_j}{n_j}\sum\nolimits_{i=1}^{n_j}\ell\left(g_j\left(\bm{x}^{j}_{i}\right),1\right)\right)\right)
p\left(\mathcal{D}_{\rm U}, \mathcal{D}_{j}\right)\diff\left(\mathcal{D}_{\rm U}, \mathcal{D}_{j}\right)\nonumber\\
\leq&\frac{1}{q}\sum\nolimits_{j=1}^{q}\sup_{\left(\mathcal{D}_{\rm U}, \mathcal{D}_{j}\right)\in\mathfrak{D}^{-}_{j}(g_{j})}\left(\left|\frac{1}{n}\sum\nolimits_{i=1}^{n}\ell\left(g_j\left(\bm{x}^{\mathrm{U}}_i\right),1\right)-\frac{1-\pi_j}{n_j}\sum\nolimits_{i=1}^{n_j}\ell\left(g_j\left(\bm{x}^{j}_{i}\right),1\right)\right|\right.\nonumber\\
&\left.-\left(\frac{1}{n}\sum\nolimits_{i=1}^{n}\ell\left(g_j\left(\bm{x}^{\mathrm{U}}_i\right),1\right)-\frac{1-\pi_j}{n_j}\sum\nolimits_{i=1}^{n_j}\ell\left(g_j\left(\bm{x}^{j}_{i}\right),1\right)\right)\right)\int_{\left(\mathcal{D}_{\rm U}, \mathcal{D}_{j}\right)\in\mathfrak{D}^{-}_{j}(g_{j})}p\left(\mathcal{D}_{\rm U}, \mathcal{D}_{j}\right)\diff\left(\mathcal{D}_{\rm U}, \mathcal{D}_{j}\right)\nonumber\\
\leq&\frac{1}{q}\sum\nolimits_{j=1}^{q}(4-2\pi_j)C_{\ell}\mathbb{P}\left(\mathfrak{D}^{-}_{j}(g_{j})\right) \nonumber\\
\leq&\frac{1}{q}\sum\nolimits_{j=1}^{q}(4-2\pi_j)C_{\ell}\exp\left(\frac{-2\alpha^2}{C_{\ell}^2/n+(1-\pi_j)^2 C_{\ell}^2/n_j}\right)\nonumber\\
=&\frac{1}{q}\sum\nolimits_{j=1}^{q}(4-2\pi_j)C_{\ell}\Delta_j,\nonumber
\end{align}
which concludes the proof of the first part of the theorem. Then, we provide an upper bound for $\left|\mathbb{E}\left[\tilde{R}_{\mathrm{H}}^{\ell}(\bm{g})\right]-\tilde{R}_{\mathrm{H}}^{\ell}(\bm{g})\right|$. When an instance in $\mathcal{D}_{\rm U}$ is replaced by another instance, the value of $\tilde{R}_{\mathrm{H}}^{\ell}(\bm{g})$ changes at most $C_{\ell}/n$; when an instance in $\mathcal{D}_{j}$ is replaced by another instance, the value of $\tilde{R}_{\mathrm{H}}^{\ell}(\bm{g})$ changes at most $(2-2\pi_j)C_{\ell}/(qn_j)$. By applying McDiarmid’s inequality, we have the following inequalities with probability at least $1-\delta/2$:
\begin{align}
&\mathbb{E}\left[\tilde{R}_{\mathrm{H}}^{\ell}(\bm{g})\right]-\tilde{R}_{\mathrm{H}}^{\ell}(\bm{g})\leq C_{\ell}\sqrt{\frac{\ln{\left(2/\delta\right)}}{2n}}+\sum\nolimits_{j=1}^{q}\frac{(2-2\pi_j)C_{\ell}}{q}\sqrt{\frac{\ln{\left(2/\delta\right)}}{2n_j}},\nonumber\\
&\tilde{R}_{\mathrm{H}}^{\ell}(\bm{g})-\mathbb{E}\left[\tilde{R}_{\mathrm{H}}^{\ell}(\bm{g})\right]\leq C_{\ell}\sqrt{\frac{\ln{\left(2/\delta\right)}}{2n}}+\sum\nolimits_{j=1}^{q}\frac{(2-2\pi_j)C_{\ell}}{q}\sqrt{\frac{\ln{\left(2/\delta\right)}}{2n_j}},\nonumber
\end{align}
where we use the inequality that $\sqrt{a+b}\leq\sqrt{a}+\sqrt{b}$. Therefore, the following inequality holds with probability at least $1-\delta$:
\begin{equation}
\left|\mathbb{E}\left[\tilde{R}_{\mathrm{H}}^{\ell}(\bm{g})\right]-\tilde{R}_{\mathrm{H}}^{\ell}(\bm{g})\right|\leq C_{\ell}\sqrt{\frac{\ln{\left(2/\delta\right)}}{2n}}+\sum\nolimits_{j=1}^{q}\frac{(2-2\pi_j)C_{\ell}}{q}\sqrt{\frac{\ln{\left(2/\delta\right)}}{2n_j}}.\nonumber
\end{equation}
Finally, we have 
\begin{align}
&\left|\tilde{R}_{\mathrm{H}}^{\ell}(\bm{g})-R_{\mathrm{H}}^{\ell}(\bm{g})\right|\nonumber\\
=&\left|\tilde{R}_{\mathrm{H}}^{\ell}(\bm{g})-\mathbb{E}\left[\tilde{R}_{\mathrm{H}}^{\ell}(\bm{g})\right]+\mathbb{E}\left[\tilde{R}_{\mathrm{H}}^{\ell}(\bm{g})\right]-R_{\mathrm{H}}^{\ell}(\bm{g})\right|\nonumber\\
\leq&\left|\tilde{R}_{\mathrm{H}}^{\ell}(\bm{g})-\mathbb{E}\left[\tilde{R}_{\mathrm{H}}^{\ell}(\bm{g})\right]\right|+\left|\mathbb{E}\left[\tilde{R}_{\mathrm{H}}^{\ell}(\bm{g})\right]-R_{\mathrm{H}}^{\ell}(\bm{g})\right|\nonumber\\
=&\left|\tilde{R}_{\mathrm{H}}^{\ell}(\bm{g})-\mathbb{E}\left[\tilde{R}_{\mathrm{H}}^{\ell}(\bm{g})\right]\right|+\mathbb{E}\left[\tilde{R}_{\mathrm{H}}^{\ell}(\bm{g})\right]-R_{\mathrm{H}}^{\ell}(\bm{g})\nonumber\\
\leq& \frac{1}{q}\sum\nolimits_{j=1}^{q}\left((4-2\pi_j)C_{\ell}\exp\left(\frac{-2\alpha^2}{C_{\ell}^2/n+(1-\pi_j)^2 C_{\ell}^2/n_j}\right)+\frac{(2-2\pi_j)C_{\ell}}{q}\sqrt{\frac{\ln{\left(2/\delta\right)}}{2n_j}}\right)+C_{\ell}\sqrt{\frac{\ln{\left(2/\delta\right)}}{2n}}\nonumber\\
=& \frac{1}{q}\sum\nolimits_{j=1}^{q}\left((4-2\pi_j)C_{\ell}\Delta_j+\frac{(2-2\pi_j)C_{\ell}}{q}\sqrt{\frac{\ln{\left(2/\delta\right)}}{2n_j}}\right)+C_{\ell}\sqrt{\frac{\ln{\left(2/\delta\right)}}{2n}},\nonumber
\end{align}
which concludes the proof.
\end{proof}
\subsection{Proof of Theorem~\ref{thm:hl_eeb}}\label{proof:thm_hl_eeb}
\begin{definition}[Rademacher complexity] Let $\mathcal{X}_{n}=\{\bm{x}_{1}, \ldots \bm{x}_{n}\}$ denote $n$ i.i.d. random variables drawn from a probability distribution with density $p(\bm{x})$, $\mathcal{G}=\{g_k:\mathcal{X}\mapsto \mathbb{R}\}$ denote a class of measurable functions of model outputs for the $k$,-th class, and $\bm{\sigma}=(\sigma_{1}, \sigma_{2}, \ldots, \sigma_{n})$ denote Rademacher variables taking values from $\{+1, -1\}$ uniformly. Then, the (expected) Rademacher complexity of $\mathcal{G}$ is defined as
\begin{equation}
\mathfrak{R}_{n,p}(\mathcal{G})=\mathbb{E}_{\mathcal{X}_{n}} \mathbb{E}_{\bm{\sigma}}\left[\sup _{g_j \in \mathcal{G}} \frac{1}{n} \sum\nolimits_{i=1}^{n} \sigma_{i} g_j(\bm{x}_{i})\right].
\end{equation}
We also introduce an alternative definition of Rademacher complexity:
\begin{equation}
\mathfrak{R}'_{n,p}(\mathcal{G})=\mathbb{E}_{\mathcal{X}_{n}} \mathbb{E}_{\bm{\sigma}}\left[\sup _{g_j \in \mathcal{G}} \left|\frac{1}{n} \sum\nolimits_{i=1}^{n} \sigma_{i} g_j(\bm{x}_{i})\right|\right].
\end{equation}
\end{definition}
Then, we introduce the following lemmas.
\begin{lemma}\label{lm:new_rademacher_equality}
Without any composition, for any $\mathcal{G}$, we have $\mathfrak{R}'_{n,p}(\mathcal{G})\geq\mathfrak{R}_{n,p}(\mathcal{G})$. If $\mathcal{G}$ is closed under negation, we have $\mathfrak{R}'_{n,p}(\mathcal{G})=\mathfrak{R}_{n,p}(\mathcal{G})$.
\end{lemma}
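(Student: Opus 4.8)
The plan is to work pointwise in the randomness: I would fix a realization of the sample $\mathcal{X}_n$ and of the Rademacher vector $\bm{\sigma}$, compare the two inner suprema, and only afterwards take the expectation $\mathbb{E}_{\mathcal{X}_n}\mathbb{E}_{\bm{\sigma}}[\cdot]$. For brevity write $A(g_j)=\frac{1}{n}\sum_{i=1}^{n}\sigma_i g_j(\bm{x}_i)$, so that the two quantities inside the expectations are $\sup_{g_j\in\mathcal{G}}A(g_j)$ and $\sup_{g_j\in\mathcal{G}}|A(g_j)|$, respectively.

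For the inequality $\mathfrak{R}'_{n,p}(\mathcal{G})\geq\mathfrak{R}_{n,p}(\mathcal{G})$, I would use the trivial pointwise bound $|A(g_j)|\geq A(g_j)$, valid for every $g_j\in\mathcal{G}$. Taking the supremum over $g_j\in\mathcal{G}$ on both sides gives $\sup_{g_j\in\mathcal{G}}|A(g_j)|\geq\sup_{g_j\in\mathcal{G}}A(g_j)$ for each fixed $(\mathcal{X}_n,\bm{\sigma})$. Applying the expectation and using its monotonicity lifts this to $\mathfrak{R}'_{n,p}(\mathcal{G})\geq\mathfrak{R}_{n,p}(\mathcal{G})$. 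This step requires no assumption on $\mathcal{G}$, matching the ``without any composition, for any $\mathcal{G}$'' part of the statement.

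For the equality under closure under negation, I would decompose $|A(g_j)|=\max\{A(g_j),-A(g_j)\}$, so that $\sup_{g_j\in\mathcal{G}}|A(g_j)|=\max\{\sup_{g_j\in\mathcal{G}}A(g_j),\ \sup_{g_j\in\mathcal{G}}(-A(g_j))\}$. The key identity is $A(-g_j)=-A(g_j)$, immediate from the linearity of $A$ in $g_j$. When $\mathcal{G}$ is closed under negation, the map $g_j\mapsto-g_j$ is a bijection of $\mathcal{G}$ onto itself, whence $\sup_{g_j\in\mathcal{G}}(-A(g_j))=\sup_{g_j\in\mathcal{G}}A(-g_j)=\sup_{h\in\mathcal{G}}A(h)=\sup_{g_j\in\mathcal{G}}A(g_j)$. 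Substituting this back yields $\sup_{g_j\in\mathcal{G}}|A(g_j)|=\sup_{g_j\in\mathcal{G}}A(g_j)$ for every fixed $(\mathcal{X}_n,\bm{\sigma})$, and taking expectations gives $\mathfrak{R}'_{n,p}(\mathcal{G})=\mathfrak{R}_{n,p}(\mathcal{G})$.

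The argument is elementary, so I do not anticipate a serious obstacle; the only points needing care are the splitting of $\sup_{g_j}\max\{A(g_j),-A(g_j)\}$ into the maximum of the two separate suprema, and the recognition that the negation-closure hypothesis is exactly what forces the second supremum to coincide with the first. I would emphasize that all of these manipulations are carried out for a fixed realization of the random variables, so that only measurability and monotonicity of the expectation are needed to transfer the pointwise relations to the Rademacher complexities themselves.
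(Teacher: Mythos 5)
Your proof is correct. The paper states Lemma~\ref{lm:new_rademacher_equality} without any proof (treating it as a standard fact), and your argument --- the pointwise bound $|A(g_j)|\geq A(g_j)$ for the inequality, and the decomposition $\sup_{g_j}|A(g_j)|=\max\{\sup_{g_j}A(g_j),\sup_{g_j}A(-g_j)\}$ combined with the fact that negation is a bijection of $\mathcal{G}$ onto itself for the equality --- is exactly the standard derivation one would supply, with all steps (the interchange of $\sup$ and $\max$, and the transfer to expectations by monotonicity) justified.
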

\begin{lemma}[Theorem 4.12 in~\citep{ledoux1991probability}] \label{lm:new_rademacher}
If $\ell:\mathbb{R}\times\left\{0,1\right\}\rightarrow\mathbb{R}$ is a Lipschitz continuous function with a Lipschitz constant $L_{\ell}$ and satisfies $\forall y, \ell(0,y)=0$, we have 
\begin{equation}
\mathfrak{R}'_{n,p}(\ell\circ\mathcal{G})\leq 2L_{\ell}\mathfrak{R}'_{n,p}(\mathcal{G}), \nonumber
\end{equation}
where $\ell\circ\mathcal{G}=\{\ell\circ g_j|g_j\in \mathcal{G}\}$.
\end{lemma}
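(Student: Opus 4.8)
The plan is to prove this as the absolute-value form of the Ledoux--Talagrand contraction principle, deriving the factor $2L_\ell$ from the ordinary signed contraction. First I would normalize: writing $\phi_i(z)=\ell(z,y_i)$ for the label $y_i$ attached to the $i$-th sample point, each $\phi_i$ is $L_\ell$-Lipschitz and, by the hypothesis $\ell(0,y)=0$, satisfies $\phi_i(0)=0$; dividing by $L_\ell$ reduces everything to $1$-Lipschitz maps with $\phi_i(0)=0$, so it suffices to show (conditionally on the sample, before taking $\mathbb{E}_{\mathcal{X}_n}$ and dividing by $n$) that $\mathbb{E}_{\bm{\sigma}}\sup_{g\in\mathcal{G}}|\sum_i\sigma_i\phi_i(g(\bm{x}_i))|\le 2\,\mathbb{E}_{\bm{\sigma}}\sup_{g\in\mathcal{G}}|\sum_i\sigma_i g(\bm{x}_i)|$.

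The core ingredient is the \emph{signed} one-coordinate contraction: for $1$-Lipschitz $\phi_i$, $\mathbb{E}_{\bm{\sigma}}\sup_g\sum_i\sigma_i\phi_i(g(\bm{x}_i))\le\mathbb{E}_{\bm{\sigma}}\sup_g\sum_i\sigma_i g(\bm{x}_i)$. I would establish this by peeling one Rademacher variable at a time: conditioning on $\sigma_1,\dots,\sigma_{n-1}$ and writing $u(g)$ for the partial sum over the first $n-1$ coordinates, averaging over $\sigma_n\in\{\pm1\}$ turns $\mathbb{E}_{\sigma_n}\sup_g[u(g)+\sigma_n\phi_n(g(\bm{x}_n))]$ into $\tfrac12\sup_{g,g'}[u(g)+u(g')+\phi_n(g(\bm{x}_n))-\phi_n(g'(\bm{x}_n))]$; bounding $\phi_n(g(\bm{x}_n))-\phi_n(g'(\bm{x}_n))\le|g(\bm{x}_n)-g'(\bm{x}_n)|$ and using the symmetry of the supremum in the pair $(g,g')$ to drop the absolute value recovers $\mathbb{E}_{\sigma_n}\sup_g[u(g)+\sigma_n g(\bm{x}_n)]$, and iterating over all coordinates removes every $\phi_i$. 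Note that this signed step does \emph{not} use $\phi_i(0)=0$.

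To pass from the signed statement to the absolute-value statement with its factor of two, I would use two elementary observations. For any class $H$, the pointwise bound $\sup_{g\in H}|X_g|\le(\sup_{g\in H}X_g)^{+}+(\sup_{g\in H}(-X_g))^{+}$ together with the Rademacher symmetry $-\bm{\sigma}\stackrel{d}{=}\bm{\sigma}$ (which makes $\sup_g X_g$ and $\sup_g(-X_g)$ identically distributed) yields $\mathbb{E}_{\bm{\sigma}}\sup_g|X_g|\le 2\,\mathbb{E}_{\bm{\sigma}}(\sup_g X_g)^{+}$, where $X_g=\sum_i\sigma_i\phi_i(g(\bm{x}_i))$. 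I would then augment the class to $\mathcal{G}_0=\mathcal{G}\cup\{\bm{0}\}$; the assumption $\phi_i(0)=0$ forces $X_{\bm{0}}=0$, so $\sup_{\mathcal{G}_0}X_g\ge 0$ and the positive part becomes redundant, while $\sup_{\mathcal{G}_0}|X_g|=\sup_{\mathcal{G}}|X_g|$ since $|X_{\bm{0}}|=0$. Applying the signed contraction to $\mathcal{G}_0$ and then discarding the added zero function again (via $(\sup\cdot)^{+}\le\sup|\cdot|$) chains everything to $2\,\mathbb{E}_{\bm{\sigma}}\sup_{g\in\mathcal{G}}|\sum_i\sigma_i g(\bm{x}_i)|$; taking $\mathbb{E}_{\mathcal{X}_n}$, dividing by $n$, and restoring the factor $L_\ell$ gives the claim.

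The main obstacle is the signed one-coordinate peeling step, specifically the manoeuvre that replaces $\phi_n(g(\bm{x}_n))-\phi_n(g'(\bm{x}_n))$ by $g(\bm{x}_n)-g'(\bm{x}_n)$: the Lipschitz property only controls the absolute difference, and it is precisely the symmetry of the joint supremum over $(g,g')$ that legitimately removes the absolute value without losing a constant. Everything else—the normalization, the distributional symmetry that produces the factor $2$, and the zero-function augmentation that consumes the hypothesis $\ell(0,y)=0$—is comparatively routine bookkeeping.
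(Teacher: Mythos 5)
Your proof is correct. Note that the paper does not prove this lemma at all: it is invoked as a citation (Theorem 4.12 of Ledoux and Talagrand), so there is no in-paper argument to compare against. What you have written is a faithful, self-contained reconstruction of the classical contraction-principle proof: the signed one-coordinate peeling (with the swap-symmetry trick that lets the Lipschitz bound $|\phi_n(g(\bm{x}_n))-\phi_n(g'(\bm{x}_n))|\leq|g(\bm{x}_n)-g'(\bm{x}_n)|$ be de-absolutized inside the joint supremum), followed by the passage to the absolute-value form via $\sup_g|X_g|\leq(\sup_g X_g)^{+}+(\sup_g(-X_g))^{+}$, the distributional symmetry $-\bm{\sigma}\stackrel{d}{=}\bm{\sigma}$ producing the factor $2$, and the augmentation by the zero function, which is exactly where the hypothesis $\ell(0,y)=0$ is consumed. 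Your closing remarks are also accurate: the signed contraction needs no anchoring at zero, while the absolute-value version does, and handling the per-coordinate labels by setting $\phi_i(z)=\ell(z,y_i)$ is the right normalization. The only things glossed over are routine (measurability/attainment of suprema and the Fubini bookkeeping in the coordinate-by-coordinate iteration), none of which affect correctness.
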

Then, we provide the following lemma.
\begin{lemma}\label{lm:eeb_hl_lemma}
Based on the above assumptions, for any $\delta>0$, the following inequality holds with probability at least $1-\delta$:
\begin{align}
&\mathop{\sup}_{g_1,g_2,\ldots,g_q\in\mathcal{G}}\left|R_{\mathrm{H}}^{\ell}(\bm{g})-\tilde{R}_{\mathrm{H}}^{\ell}(\bm{g})\right|\leq\frac{4L_{\ell}}{q}\sum\nolimits_{j=1}^{q}\mathfrak{R}'_{n,p}(\mathcal{G})+\frac{8(1-\pi_j)L_{\ell}}{q}\sum\nolimits_{j=1}^{q}\mathfrak{R}'_{n_j,p_j}(\mathcal{G})\nonumber\\
&+\frac{1}{q}\sum\nolimits_{j=1}^{q}(4-2\pi_j)C_{\ell}\Delta_j+C_{\ell}\sqrt{\frac{\ln{\left(1/\delta\right)}}{2n}}+\sum\nolimits_{j=1}^{q}\frac{(2-2\pi_j)C_{\ell}}{q}\sqrt{\frac{\ln{\left(1/\delta\right)}}{2n_j}}.\nonumber
\end{align}
\end{lemma}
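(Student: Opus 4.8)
The plan is to split the uniform deviation into a bias part and a concentration part via the triangle inequality: for every $\bm{g}$,
$$\left|R_{\mathrm{H}}^{\ell}(\bm{g})-\tilde{R}_{\mathrm{H}}^{\ell}(\bm{g})\right|\leq\left|R_{\mathrm{H}}^{\ell}(\bm{g})-\mathbb{E}\left[\tilde{R}_{\mathrm{H}}^{\ell}(\bm{g})\right]\right|+\left|\mathbb{E}\left[\tilde{R}_{\mathrm{H}}^{\ell}(\bm{g})\right]-\tilde{R}_{\mathrm{H}}^{\ell}(\bm{g})\right|,$$
and then take the supremum of both sides. The first term is exactly the bias already controlled in Theorem~\ref{thm:bias_and_consistency_hl}; since its upper bound $\frac{1}{q}\sum_{j}(4-2\pi_j)C_{\ell}\Delta_j$ does not depend on $\bm{g}$ (the constant $\alpha$ is assumed uniformly over $\mathcal{G}$), it bounds the supremum directly and accounts for the $\Delta_j$ term in the claim. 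All remaining work is to control $\Phi:=\sup_{\bm{g}}\left|\mathbb{E}[\tilde{R}_{\mathrm{H}}^{\ell}(\bm{g})]-\tilde{R}_{\mathrm{H}}^{\ell}(\bm{g})\right|$.

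For $\Phi$ I would first apply McDiarmid's inequality, viewing $\Phi$ as a function of the $n$ samples in $\mathcal{D}_{\mathrm{U}}$ and the $n_j$ samples in each $\mathcal{D}_{j}$. The bounded-difference constants are inherited verbatim from the proof of Theorem~\ref{thm:bias_and_consistency_hl} — replacing one instance of $\mathcal{D}_{\mathrm{U}}$ moves $\tilde{R}_{\mathrm{H}}^{\ell}$ by at most $C_{\ell}/n$, and one instance of $\mathcal{D}_{j}$ by at most $(2-2\pi_j)C_{\ell}/(qn_j)$ — because taking a supremum over $\bm{g}$ and subtracting a constant expectation preserves these increments. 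Collecting $\sum_i c_i^2$ and using $\sqrt{a+b}\leq\sqrt{a}+\sqrt{b}$ yields, with probability at least $1-\delta$, the bound $\Phi\leq\mathbb{E}[\Phi]+C_{\ell}\sqrt{\ln(1/\delta)/(2n)}+\sum_{j}\frac{(2-2\pi_j)C_{\ell}}{q}\sqrt{\ln(1/\delta)/(2n_j)}$, which produces the two square-root terms in the claim.

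It then remains to bound $\mathbb{E}[\Phi]$ by Rademacher complexities. Writing $\tilde{R}_{\mathrm{H}}^{\ell}=\frac{1}{q}\sum_{j}|A_j|+\frac{1}{q}\sum_{j}B_j$, where $A_j$ is the signed gap between the $\mathcal{D}_{\mathrm{U}}$ and $\mathcal{D}_{j}$ averages of $\ell(g_j(\cdot),1)$ and $B_j$ is the $\mathcal{D}_{j}$ average of $(1-\pi_j)\ell(g_j(\cdot),0)$, I bound $\mathbb{E}[\Phi]$ by $\frac{1}{q}\sum_{j}\mathbb{E}[\sup_{g_j}||A_j|-\mathbb{E}|A_j||]+\frac{1}{q}\sum_{j}\mathbb{E}[\sup_{g_j}|B_j-\mathbb{E}B_j|]$, using subadditivity of the supremum and the fact that $A_j,B_j$ depend only on $g_j$. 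The $B_j$ terms are routine: standard symmetrization (factor $2$), the contraction Lemma~\ref{lm:new_rademacher} (factor $2L_{\ell}$), and the scaling $(1-\pi_j)$ give $4(1-\pi_j)L_{\ell}\mathfrak{R}'_{n_j,p_j}(\mathcal{G})$ apiece.

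The main obstacle is the absolute value wrapping $A_j$, compounded by $A_j$ being a difference of empirical averages over two \emph{independent} samples. I would handle it with a ghost-sample argument: introducing an independent copy $A_j'$ and pulling its expectation inside the supremum by Jensen gives $\mathbb{E}[\sup_{g_j}||A_j|-\mathbb{E}|A_j||]\leq\mathbb{E}[\sup_{g_j}||A_j|-|A_j'||]$, after which the $1$-Lipschitzness of $|\cdot|$ through $||a|-|b||\leq|a-b|$ strips the outer absolute value, leaving $\mathbb{E}[\sup_{g_j}|A_j-A_j'|]$. Splitting this by the triangle inequality into its $\mathcal{D}_{\mathrm{U}}$ and $\mathcal{D}_{j}$ parts, inserting Rademacher signs into each symmetric difference, and applying contraction yields $4L_{\ell}\mathfrak{R}'_{n,p}(\mathcal{G})+4(1-\pi_j)L_{\ell}\mathfrak{R}'_{n_j,p_j}(\mathcal{G})$ for each $j$. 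Adding the $B_j$ contribution merges the two $\mathfrak{R}'_{n_j,p_j}$ coefficients into $8(1-\pi_j)L_{\ell}$, and averaging over $j$ reproduces exactly the Rademacher terms in the statement; combining with the bias and McDiarmid bounds closes the argument. The one technical caveat is the hypothesis $\ell(0,y)=0$ needed by Lemma~\ref{lm:new_rademacher}, which must be assumed or arranged through an offset that cancels in the centered deviation.
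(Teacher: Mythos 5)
Your proposal is correct and follows essentially the same route as the paper's proof: triangle-inequality split into the (uniform) bias term from Theorem~\ref{thm:bias_and_consistency_hl}, one-sided McDiarmid on the supremum of the centered deviation with the same bounded-difference constants, and a ghost-sample/symmetrization argument in which $\bigl||a|-|b|\bigr|\leq|a-b|$ strips the outer absolute value before applying the contraction Lemma~\ref{lm:new_rademacher}, yielding the $4L_{\ell}\mathfrak{R}'_{n,p}$ and $8(1-\pi_j)L_{\ell}\mathfrak{R}'_{n_j,p_j}$ coefficients. Even the technical caveat you flag at the end is resolved exactly as you suggest: the paper introduces the offset $\bar{\ell}(z)=\ell(z)-\ell(0)$, which satisfies $\bar{\ell}(0)=0$ and leaves all differences unchanged.
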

\begin{proof}
When an instance in $\mathcal{D}_{\rm U}$ is replaced by another instance, the value of $\mathop{\sup}_{g_1,g_2,\ldots,g_q\in\mathcal{G}}\left|\mathbb{E}\left[\tilde{R}_{\mathrm{H}}^{\ell}(\bm{g})\right]-\tilde{R}_{\mathrm{H}}^{\ell}(\bm{g})\right|$ changes at most $C_{\ell}/n$; when an instance in $\mathcal{D}_{j}$ is replaced by another instance, the value of $\mathop{\sup}_{g_1,g_2,\ldots,g_q\in\mathcal{G}}\left|\mathbb{E}\left[\tilde{R}_{\mathrm{H}}^{\ell}(\bm{g})\right]-\tilde{R}_{\mathrm{H}}^{\ell}(\bm{g})\right|$ changes at most $(2-2\pi_j)C_{\ell}/(qn_j)$. By applying McDiarmid’s inequality, we have the following inequality with probability at least $1-\delta$:
\begin{align}\label{eq:apd_eq_3}
&\mathop{\sup}_{g_1,g_2,\ldots,g_q\in\mathcal{G}}\left|\mathbb{E}\left[\tilde{R}_{\mathrm{H}}^{\ell}(\bm{g})\right]-\tilde{R}_{\mathrm{H}}^{\ell}(\bm{g})\right|-\mathbb{E}\left[\mathop{\sup}_{g_1,g_2,\ldots,g_q\in\mathcal{G}}\left|\mathbb{E}\left[\tilde{R}_{\mathrm{H}}^{\ell}(\bm{g})\right]-\tilde{R}_{\mathrm{H}}^{\ell}(\bm{g})\right|\right]\nonumber\\
\leq& C_{\ell}\sqrt{\frac{\ln{\left(1/\delta\right)}}{2n}}+\sum\nolimits_{j=1}^{q}\frac{(2-2\pi_j)C_{\ell}}{q}\sqrt{\frac{\ln{\left(1/\delta\right)}}{2n_j}}.
\end{align}
For ease of notations, let $\overline{\mathcal{D}}=\mathcal{D}_{\rm U}\bigcup\mathcal{D}_{1}\bigcup\mathcal{D}_{2}\ldots\bigcup\mathcal{D}_{q}$ denote set of all the data. We have 
\begin{align}
&\mathbb{E}\left[\mathop{\sup}_{g_1,g_2,\ldots,g_q\in\mathcal{G}}\left|\mathbb{E}\left[\tilde{R}_{\mathrm{H}}^{\ell}(\bm{g})\right]-\tilde{R}_{\mathrm{H}}^{\ell}(\bm{g})\right|\right]\nonumber\\
=&\mathbb{E}_{\overline{\mathcal{D}}}\left[\mathop{\sup}_{g_1,g_2,\ldots,g_q\in\mathcal{G}}\left|\mathbb{E}_{\overline{\mathcal{D}}'}\left[\tilde{R}_{\mathrm{H}}^{\ell}(\bm{g})\right]-\tilde{R}_{\mathrm{H}}^{\ell}(\bm{g})\right|\right]\nonumber\\
\leq&\mathbb{E}_{\overline{\mathcal{D}},\overline{\mathcal{D}}'}\left[\mathop{\sup}_{g_1,g_2,\ldots,g_q\in\mathcal{G}}\left|\tilde{R}_{\mathrm{H}}^{\ell}(\bm{g};\overline{\mathcal{D}})-\tilde{R}_{\mathrm{H}}^{\ell}(\bm{g}';\overline{\mathcal{D}}')\right|\right],\label{eq:apd_eq_1}
\end{align}
where the last inequality is deduced by applying Jensen’s inequality twice. Here, $\tilde{R}_{\mathrm{H}}^{\ell}(\bm{g};\widehat{\mathcal{D}})$ denotes the value of $\tilde{R}_{\mathrm{H}}^{\ell}(\bm{g})$ on $\widehat{\mathcal{D}}$. We introduce $\bar{\ell}(z)=\ell(z)-\ell(0)$ and we have $\bar{\ell}(z_1)-\bar{\ell}(z_2)=\ell(z_1)-\ell(z_2)$. It is obvious that $\bar{\ell}(z)$ is a Lipschitz continuous function with a Lipschitz constant $L_{\ell}$. Then, we have 
\begin{align}
&\left|\tilde{R}_{\mathrm{H}}^{\ell}(\bm{g};\widehat{\mathcal{D}})-\tilde{R}_{\mathrm{H}}^{\ell}(\bm{g};\widehat{\mathcal{D}}')\right|\nonumber\\
\leq&\frac{1}{q}\sum\nolimits_{j=1}^{q}\left|\left|\frac{1}{n}\sum\nolimits_{i=1}^{n}\bar{\ell}\left(g_j\left(\bm{x}^{\mathrm{U}}_i\right),1\right)-\frac{1-\pi_j}{n_j}\sum\nolimits_{i=1}^{n_j}\bar{\ell}\left(g_j\left(\bm{x}^{j}_{i}\right),1\right)\right|-\right.\nonumber\\
&\left.\left|\frac{1}{n}\sum\nolimits_{i=1}^{n}\bar{\ell}\left(g_j\left(\bm{x}^{\mathrm{U}'}_i\right),1\right)-\frac{1-\pi_j}{n_j}\sum\nolimits_{i=1}^{n_j}\bar{\ell}\left(g_j\left(\bm{x}^{j'}_{i}\right),1\right)\right|\right|\nonumber\\
&+\sum\nolimits_{j=1}^{q}\left|\frac{1-\pi_j}{qn_j}\sum\nolimits_{i=1}^{n_j}\bar{\ell}\left(g_j\left(\bm{x}^{j}_{i}\right),0\right)-\frac{1-\pi_j}{qn_j}\sum\nolimits_{i=1}^{n_j}\bar{\ell}\left(g_j\left(\bm{x}^{j'}_{i}\right),0\right)\right|\nonumber\\
\leq&\frac{1}{q}\sum\nolimits_{j=1}^{q}\left|\frac{1}{n}\sum\nolimits_{i=1}^{n}\bar{\ell}\left(g_j\left(\bm{x}^{\mathrm{U}}_i\right),1\right)-\frac{1-\pi_j}{n_j}\sum\nolimits_{i=1}^{n_j}\bar{\ell}\left(g_j\left(\bm{x}^{j}_{i}\right),1\right)\right.\nonumber\\
&\left.-\frac{1}{n}\sum\nolimits_{i=1}^{n}\bar{\ell}\left(g_j\left(\bm{x}^{\mathrm{U}'}_i\right),1\right)+\frac{1-\pi_j}{n_j}\sum\nolimits_{i=1}^{n_j}\bar{\ell}\left(g_j\left(\bm{x}^{j'}_{i}\right),1\right)\right|\nonumber\\
&+\sum\nolimits_{j=1}^{q}\left|\frac{1-\pi_j}{qn_j}\sum\nolimits_{i=1}^{n_j}\bar{\ell}\left(g_j\left(\bm{x}^{j}_{i}\right),0\right)-\frac{1-\pi_j}{qn_j}\sum\nolimits_{i=1}^{n_j}\bar{\ell}\left(g_j\left(\bm{x}^{j'}_{i}\right),0\right)\right|\nonumber\\
\leq&\frac{1}{q}\sum\nolimits_{j=1}^{q}\left|\frac{1}{n}\sum\nolimits_{i=1}^{n}\bar{\ell}\left(g_j\left(\bm{x}^{\mathrm{U}}_i\right),1\right)-\frac{1}{n}\sum\nolimits_{i=1}^{n}\bar{\ell}\left(g_j\left(\bm{x}^{\mathrm{U}'}_i\right),1\right)\right|\nonumber\\
&+\frac{1}{q}\sum\nolimits_{j=1}^{q}\left|\frac{1-\pi_j}{n_j}\sum\nolimits_{i=1}^{n_j}\bar{\ell}\left(g_j\left(\bm{x}^{j'}_{i}\right),1\right)-\frac{1-\pi_j}{n_j}\sum\nolimits_{i=1}^{n_j}\bar{\ell}\left(g_j\left(\bm{x}^{j}_{i}\right),1\right)\right|\nonumber\\
&+\sum\nolimits_{j=1}^{q}\left|\frac{1-\pi_j}{qn_j}\sum\nolimits_{i=1}^{n_j}\bar{\ell}\left(g_j\left(\bm{x}^{j}_{i}\right),0\right)-\frac{1-\pi_j}{qn_j}\sum\nolimits_{i=1}^{n_j}\bar{\ell}\left(g_j\left(\bm{x}^{j'}_{i}\right),0\right)\right|,\label{eq:apd_eq_2}
\end{align}
where the inequalities are due to the triangle inequality. Then, by combining Inequalities~(\ref{eq:apd_eq_1}) and~(\ref{eq:apd_eq_2}), it is a routine work~\citep{mohri2012foundations} to show that
\begin{align}\label{eq:apd_eq_4}
&\mathbb{E}_{\overline{\mathcal{D}},\overline{\mathcal{D}}'}\left[\mathop{\sup}_{g_1,g_2,\ldots,g_q\in\mathcal{G}}\left|\tilde{R}_{\mathrm{H}}^{\ell}(\bm{g};\overline{\mathcal{D}})-\tilde{R}_{\mathrm{H}}^{\ell}(\bm{g}';\overline{\mathcal{D}}')\right|\right]\nonumber\\
\leq&\frac{2}{q}\sum\nolimits_{j=1}^{q}\mathfrak{R}'_{n,p}(\bar{\ell}\circ\mathcal{G})+\frac{4(1-\pi_j)}{q}\sum\nolimits_{j=1}^{q}\mathfrak{R}'_{n_j,p_j}(\bar{\ell}\circ\mathcal{G})\nonumber\\
\leq&\frac{4L_{\ell}}{q}\sum\nolimits_{j=1}^{q}\mathfrak{R}'_{n,p}(\mathcal{G})+\frac{8(1-\pi_j)L_{\ell}}{q}\sum\nolimits_{j=1}^{q}\mathfrak{R}'_{n_j,p_j}(\mathcal{G})\nonumber\\
=&\frac{4L_{\ell}}{q}\sum\nolimits_{j=1}^{q}\mathfrak{R}_{n,p}(\mathcal{G})+\frac{8(1-\pi_j)L_{\ell}}{q}\sum\nolimits_{j=1}^{q}\mathfrak{R}_{n_j,p_j}(\mathcal{G}),
\end{align}
where the second inequality is due to Lemma~\ref{lm:new_rademacher}, $p_j$ denotes $p(\bm{x}|s_j=0)$, and the last equality is due to Lemma~\ref{lm:new_rademacher_equality}. By combining Inequalities~(\ref{eq:apd_eq_3}) and~(\ref{eq:apd_eq_4}), we have the following inequality with probability at least $1-\delta$:
\begin{align}\label{eq:apd_eq_5}
&\mathop{\sup}_{g_1,g_2,\ldots,g_q\in\mathcal{G}}\left|\mathbb{E}\left[\tilde{R}_{\mathrm{H}}^{\ell}(\bm{g})\right]-\tilde{R}_{\mathrm{H}}^{\ell}(\bm{g})\right|
\leq \frac{4L_{\ell}}{q}\sum\nolimits_{j=1}^{q}\mathfrak{R}_{n,p}(\mathcal{G})+\frac{8(1-\pi_j)L_{\ell}}{q}\sum\nolimits_{j=1}^{q}\mathfrak{R}_{n_j,p_j}(\mathcal{G})\nonumber\\
&+C_{\ell}\sqrt{\frac{\ln{\left(1/\delta\right)}}{2n}}+\sum\nolimits_{j=1}^{q}\frac{(2-2\pi_j)C_{\ell}}{q}\sqrt{\frac{\ln{\left(1/\delta\right)}}{2n_j}}.
\end{align}
Then, we have the following inequality with probability at least $1-\delta$:
\begin{align}
&\mathop{\sup}_{g_1,g_2,\ldots,g_q\in\mathcal{G}}\left|R_{\mathrm{H}}^{\ell}(\bm{g})-\tilde{R}_{\mathrm{H}}^{\ell}(\bm{g})\right|\nonumber\\
=&\mathop{\sup}_{g_1,g_2,\ldots,g_q\in\mathcal{G}}\left|R_{\mathrm{H}}^{\ell}(\bm{g})-\mathbb{E}\left[\tilde{R}_{\mathrm{H}}^{\ell}(\bm{g})\right]+\mathbb{E}\left[\tilde{R}_{\mathrm{H}}^{\ell}(\bm{g})\right]-\tilde{R}_{\mathrm{H}}^{\ell}(\bm{g})\right|\nonumber\\
\leq&\mathop{\sup}_{g_1,g_2,\ldots,g_q\in\mathcal{G}}\left|R_{\mathrm{H}}^{\ell}(\bm{g})-\mathbb{E}\left[\tilde{R}_{\mathrm{H}}^{\ell}(\bm{g})\right]\right|+\mathop{\sup}_{g_1,g_2,\ldots,g_q\in\mathcal{G}}\left|\mathbb{E}\left[\tilde{R}_{\mathrm{H}}^{\ell}(\bm{g})\right]-\tilde{R}_{\mathrm{H}}^{\ell}(\bm{g})\right|\nonumber\\
\leq&\frac{1}{q}\sum\nolimits_{j=1}^{q}(4-2\pi_j)C_{\ell}\Delta_j+C_{\ell}\sqrt{\frac{\ln{\left(1/\delta\right)}}{2n}}+\sum\nolimits_{j=1}^{q}\frac{(2-2\pi_j)C_{\ell}}{q}\sqrt{\frac{\ln{\left(1/\delta\right)}}{2n_j}}\nonumber\\
&+\frac{4L_{\ell}}{q}\sum\nolimits_{j=1}^{q}\mathfrak{R}_{n,p}(\mathcal{G})+\frac{8(1-\pi_j)L_{\ell}}{q}\sum\nolimits_{j=1}^{q}\mathfrak{R}_{n_j,p_j}(\mathcal{G}),\nonumber
\end{align}
where the second inequality is due to Inequalities~\ref{eq:apd_eq_5} and~\ref{eq:thm_bias_1}. The proof is complete.
\end{proof}
Then, we provide the proof of Theorem~\ref{thm:hl_eeb}.
\begin{proof}[Proof of Theorem~\ref{thm:hl_eeb}]
\begin{align}
R_{\mathrm{H}}^{\ell}(\tilde{\bm{g}}_{\mathrm{H}})-R_{\mathrm{H}}^{\ell}(\bm{g}^{*}_{\mathrm{H}})&=R_{\mathrm{H}}^{\ell}(\tilde{\bm{g}}_{\mathrm{H}})-\tilde{R}_{\mathrm{H}}^{\ell}(\tilde{\bm{g}}_{\mathrm{H}})+\tilde{R}_{\mathrm{H}}^{\ell}(\tilde{\bm{g}}_{\mathrm{H}})-\tilde{R}_{\mathrm{H}}^{\ell}(\bm{g}^{*}_{\mathrm{H}})+\tilde{R}_{\mathrm{H}}^{\ell}(\bm{g}^{*}_{\mathrm{H}})-R_{\mathrm{H}}^{\ell}(\bm{g}^{*}_{\mathrm{H}})\nonumber\\
&\leq R_{\mathrm{H}}^{\ell}(\tilde{\bm{g}}_{\mathrm{H}})-\tilde{R}_{\mathrm{H}}^{\ell}(\tilde{\bm{g}}_{\mathrm{H}})+\tilde{R}_{\mathrm{H}}^{\ell}(\bm{g}^{*}_{\mathrm{H}})-R_{\mathrm{H}}^{\ell}(\bm{g}^{*}_{\mathrm{H}})\nonumber\\
&\leq2\mathop{\sup}_{g_1,g_2,\ldots,g_q\in\mathcal{G}}\left|R_{\mathrm{H}}^{\ell}(\bm{g})-\tilde{R}_{\mathrm{H}}^{\ell}(\bm{g})\right|.\nonumber
\end{align}
By Lemma~\ref{lm:eeb_hl_lemma}, the proof is complete.
\end{proof}
\subsection{Proof of Corollary~\ref{cor:hl}}\label{proof:cor_hl}
\begin{lemma}[Theorem 4 in \citet{gao2013consistency}]\label{lm:mll_consistency}
The surrogate loss $R^{\ell}$ is multi-label consistent w.r.t.~the Hamming or ranking loss $R^{\mathrm{0-1}}$ if and only if it holds for any sequence $\left\{\bm{g}_t\right\}$ that if $R^{\ell}(\bm{g})\rightarrow R^{\ell*}$, then $R^{\mathrm{0-1}}(\bm{g})\rightarrow R^{*}$. Here, $R^{\ell*}=\mathop{\inf}_{\bm{g}}R^{\ell}(\bm{g})$ and $R^{*}=\mathop{\inf}_{\bm{g}}R^{\mathrm{0-1}}(\bm{g})$.
\end{lemma}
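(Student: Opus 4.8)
The plan is to read ``multi-label consistent'' in its pointwise (calibration) sense and then establish the stated equivalence with the global sequence implication through a calibration-function argument in the style of Bartlett--Jordan--McAuliffe, specialized to the multi-label conditional risks. Fix $\bm{x}$ and write $\bm{\eta}=p(\cdot\mid\bm{x})$ for the conditional distribution over label vectors. For a decision vector $\bm{u}=\bm{g}(\bm{x})\in\mathbb{R}^q$ I introduce the conditional surrogate risk $C_\ell(\bm{\eta},\bm{u})$ and the conditional target ($0$-$1$) risk $C_0(\bm{\eta},\bm{u})$ (for either the Hamming or the ranking target), with pointwise optima $C_\ell^*(\bm{\eta})=\inf_{\bm{u}}C_\ell(\bm{\eta},\bm{u})$ and $C_0^*(\bm{\eta})$, so that $R^\ell(\bm{g})=\mathbb{E}_{p(\bm{x})}[C_\ell(\bm{\eta},\bm{g}(\bm{x}))]$, $R^{\mathrm{0-1}}(\bm{g})=\mathbb{E}_{p(\bm{x})}[C_0(\bm{\eta},\bm{g}(\bm{x}))]$, $R^{\ell*}=\mathbb{E}[C_\ell^*]$, and $R^*=\mathbb{E}[C_0^*]$. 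Multi-label consistency is the calibration property that for every $\bm{\eta}$ and every $\epsilon>0$ the quantity $H(\bm{\eta},\epsilon)=\inf\{C_\ell(\bm{\eta},\bm{u})-C_\ell^*(\bm{\eta}):C_0(\bm{\eta},\bm{u})-C_0^*(\bm{\eta})\ge\epsilon\}$ is strictly positive: at a single point one cannot hold the target excess risk away from zero while driving the surrogate excess risk to zero.

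For the direction calibration $\Rightarrow$ sequence implication, I would form the uniform calibration function $\psi(\epsilon)=\inf_{\bm{\eta}}H(\bm{\eta},\epsilon)$ and pass to its greatest convex minorant (biconjugate) $\psi^{\ast\ast}$, which is nondecreasing, satisfies $\psi^{\ast\ast}(0)=0$, and is positive for $\epsilon>0$. The pointwise inequality $\psi^{\ast\ast}\bigl(C_0(\bm{\eta},\bm{u})-C_0^*(\bm{\eta})\bigr)\le C_\ell(\bm{\eta},\bm{u})-C_\ell^*(\bm{\eta})$ then holds at every $\bm{x}$ by construction. Taking expectation over $p(\bm{x})$ and applying Jensen's inequality to the convex $\psi^{\ast\ast}$ gives the global surrogate bound $\psi^{\ast\ast}\bigl(R^{\mathrm{0-1}}(\bm{g})-R^*\bigr)\le R^\ell(\bm{g})-R^{\ell*}$. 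Hence $R^\ell(\bm{g}_t)\to R^{\ell*}$ forces $\psi^{\ast\ast}(R^{\mathrm{0-1}}(\bm{g}_t)-R^*)\to0$, and since $\psi^{\ast\ast}$ is positive away from the origin and continuous there, $R^{\mathrm{0-1}}(\bm{g}_t)\to R^*$ follows.

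For the converse I would argue by contraposition. If calibration fails at some $\bm{\eta}_0$, there exist $\epsilon_0>0$ and decision vectors $\bm{u}_t$ with $C_0(\bm{\eta}_0,\bm{u}_t)-C_0^*(\bm{\eta}_0)\ge\epsilon_0$ yet $C_\ell(\bm{\eta}_0,\bm{u}_t)-C_\ell^*(\bm{\eta}_0)\to0$. Concentrating all probability mass on a single instance $\bm{x}_0$ with conditional distribution $\bm{\eta}_0$ and setting $\bm{g}_t(\bm{x}_0)=\bm{u}_t$ produces a global sequence with $R^\ell(\bm{g}_t)\to R^{\ell*}$ but $R^{\mathrm{0-1}}(\bm{g}_t)-R^*\ge\epsilon_0$, which contradicts the sequence implication; therefore the implication forces calibration, completing the equivalence.

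The main obstacle is the step that converts pointwise calibration into a single uniform, well-behaved calibration function, i.e.\ verifying $\psi^{\ast\ast}(\epsilon)>0$ for all $\epsilon>0$ rather than merely $H(\bm{\eta},\epsilon)>0$ for each fixed $\bm{\eta}$. This requires a compactness and lower-semicontinuity argument over the simplex of conditional distributions $\bm{\eta}$, exploiting continuity of $C_\ell$ and $C_0$ in $\bm{\eta}$ on that compact domain. The analysis then splits by target loss: for the Hamming loss the conditional risk decomposes across the $q$ labels, so uniform calibration reduces to $q$ independent binary calibration problems and is comparatively direct, driven by the condition $\ell'(0,y)<0$ of Corollary~\ref{cor:hl}. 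For the ranking loss the conditional risk couples label pairs through $\mathbb{I}(y_j\ne y_k)$, so $\psi^{\ast\ast}$ must be controlled jointly over the coupled pairwise comparisons; it is precisely here that the symmetry and monotonicity conditions $\ell(z,y)+\ell(-z,y)=M$ and $\ell'(0,y)<0$ of Corollary~\ref{cor:rl} enter, and reconciling the pairwise ordering structure with one uniform calibration function is the most delicate part of the argument.
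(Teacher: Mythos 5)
The paper does not prove this statement at all: it is imported verbatim as Theorem~4 of \citet{gao2013consistency} and used as a black box in the proofs of Corollaries~\ref{cor:hl} and~\ref{cor:rl}, so there is no in-paper proof to compare against. Your reconstruction follows the same route as the cited source, namely the Bartlett--Jordan--McAuliffe/Zhang-style calibration-function argument: pointwise calibration, a uniform transform $\psi^{\ast\ast}$ yielding a global excess-risk bound via Jensen, and a point-mass construction for the converse. Two remarks. First, you quietly replace the source's definition of multi-label consistency (phrased via containment of surrogate minimizers in the Bayes set $A(\bm{\eta})$) with the excess-risk calibration condition $H(\bm{\eta},\epsilon)>0$; since the lemma is an ``iff'' whose left-hand side is that definition, a complete proof must also record the (routine, but not free) equivalence of the two formulations over the finite label space. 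Second, you correctly identify the real crux --- that $\inf_{\bm{\eta}}H(\bm{\eta},\epsilon)>0$ does not follow from pointwise positivity without a compactness and lower-semicontinuity argument over the simplex of distributions on $\{0,1\}^q$; this is exactly the step the cited proof has to do work for, and your sketch flags it but does not carry it out. Neither issue is a wrong turn, but both would need to be filled in for the argument to stand on its own rather than as a citation.
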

\begin{lemma}[Theorem 32 in \citet{gao2013consistency}]
If $\ell$ is a convex function with $\ell'(0,y) < 0$, then Eq.~(\ref{eq:hl_risk}) is consistent w.r.t.~the Hamming loss.
\end{lemma}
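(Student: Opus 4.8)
The plan is to exploit the label-wise decomposability of the Hamming loss to reduce the multi-label consistency question to a classification-calibration statement about $\ell$ on each of the $q$ independent binary subproblems, and then to invoke the standard characterization of calibrated convex surrogates together with a $\psi$-transform argument. By the preceding Lemma (Theorem~4 of~\citet{gao2013consistency}), it suffices to show that $R^{\ell}_{\mathrm{H}}(\bm{g}_t)\rightarrow R^{\ell*}_{\mathrm{H}}$ forces $R^{\mathrm{0\text{-}1}}_{\mathrm{H}}(\bm{f}_t)\rightarrow R^{*}_{\mathrm{H}}$.

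\textbf{Step 1: pointwise decomposition.} First I would condition on $\bm{x}$ and write $\eta_j=\eta_j(\bm{x})=p(y_j=1\mid\bm{x})$. Both $R^{\mathrm{0\text{-}1}}_{\mathrm{H}}$ in Eq.~(\ref{eq:hl_01risk}) and $R^{\ell}_{\mathrm{H}}$ in Eq.~(\ref{eq:hl_risk}) are $\mathbb{E}_{p(\bm{x})}$ of an average $\frac{1}{q}\sum_{j=1}^{q}$ of per-label conditional risks, where the $j$-th term depends only on $g_j(\bm{x})$ (respectively $f_j(\bm{x})$) and on $\eta_j$. Concretely, the conditional surrogate risk of coordinate $j$ is $W_{\eta_j}(z)=\eta_j\ell(z,1)+(1-\eta_j)\ell(z,0)$, and the conditional $0$-$1$ risk is minimized by the Bayes rule $f_j^{\star}=\mathbb{I}(\eta_j\geq 1/2)$. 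Since neither objective couples distinct coordinates, the infima factorize coordinatewise, so both $R^{\ell*}_{\mathrm{H}}$ and $R^{*}_{\mathrm{H}}$ are attained labelwise and the excess risks split as sums over $j$.

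\textbf{Step 2: binary calibration.} Next I would establish the key pointwise inequality on a single binary subproblem. Convexity of $\ell$ makes $W_{\eta}$ convex in $z$, and the assumption $\ell'(0,y)<0$ forces the one-sided derivative of $W_{\eta}$ at the decision threshold to point toward the Bayes-correct side whenever $\eta\neq 1/2$; hence the minimizer of $W_{\eta}$ lies strictly on the correct side of the threshold. This is exactly the Bartlett--Jordan--McAuliffe classification-calibration condition for convex losses, and it yields a nonnegative convex transform $\psi$ with $\psi(0)=0$ and $\psi(u)>0$ for $u>0$ such that, at every $\bm{x}$ and every $j$, the excess conditional $0$-$1$ risk $e^{0\text{-}1}_j(\bm{x})$ is controlled by the excess conditional surrogate risk: $\psi\big(e^{0\text{-}1}_j(\bm{x})\big)\leq W_{\eta_j}(g_j(\bm{x}))-\inf_z W_{\eta_j}(z)$.

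\textbf{Step 3: aggregation.} Finally I would average this bound over $j$ and over $\bm{x}$ and apply Jensen's inequality, using convexity of $\psi$, to obtain $\psi\big(R^{\mathrm{0\text{-}1}}_{\mathrm{H}}(\bm{f})-R^{*}_{\mathrm{H}}\big)\leq R^{\ell}_{\mathrm{H}}(\bm{g})-R^{\ell*}_{\mathrm{H}}$. Thus $R^{\ell}_{\mathrm{H}}(\bm{g}_t)\rightarrow R^{\ell*}_{\mathrm{H}}$ drives the right-hand side to zero, and since $\psi$ is continuous, nonnegative, and vanishes only at $0$, the excess $0$-$1$ Hamming risk must vanish as well, which is precisely the claimed consistency.

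\textbf{Main obstacle.} The delicate step is the second one: converting the local derivative condition $\ell'(0,y)<0$ into a genuinely positive separation $\psi(u)>0$ that is uniform in $\eta$. One must rule out the degenerate case in which the convex $W_{\eta}$ is minimized exactly at the threshold for some $\eta\neq 1/2$, and here both convexity and the strict sign of $\ell'(0,\cdot)$ are essential. Care is also needed to match the Bayes threshold to the paper's $\{0,1\}$ label encoding, rather than the $\pm1$ margin convention of the classical calibration theory; once the calibration transform is in hand, the labelwise-to-global aggregation via Jensen is routine.
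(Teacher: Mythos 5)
Your proposal is correct, but note that the paper does not prove this lemma at all: it is imported verbatim as Theorem~32 of \citet{gao2013consistency}, and the surrounding text only uses it as a black box to deduce Corollary~\ref{cor:hl}. Your sketch --- labelwise decomposition of the Hamming risk into $q$ binary problems, the Bartlett--Jordan--McAuliffe calibration criterion for convex losses with $\ell'(0,y)<0$, and aggregation via the convex $\psi$-transform and Jensen's inequality --- is essentially the argument used in that cited source, and your closing caveats (ruling out a minimizer of $W_{\eta}$ sitting exactly at the threshold, and reconciling the $\{0,1\}$ encoding with the $\pm 1$ margin convention) correctly identify the only delicate points.
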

Then, we provide the proof of Corollary~\ref{cor:hl}.
\begin{proof}[Proof of Corollary~\ref{cor:hl}]
Since the proposed risk in Eq.~(\ref{eq:rc_hl}) is equivalent to the risk in Eq.~(\ref{eq:hl_risk}), it is sufficient to prove that for any sequence $\left\{\bm{g}_t\right\}$ that if $R^{\ell}_{\mathrm{H}}(\bm{g}_t)\rightarrow R^{\ell*}_{\mathrm{H}}$, then $R^{\mathrm{0-1}}_{\mathrm{H}}(\bm{f}_t)\rightarrow R^{*}_{\mathrm{H}}$.
\end{proof}
\subsection{Proof of Theorem~\ref{thm:rl_rc}}\label{apd:proof_rl_rc}
\begin{proof}[\unskip \nopunct]
\begin{align}
R_{\mathrm{R}}^{\ell}(\bm{g})=&\mathbb{E}_{p(\bm{x},Y)}\left[\sum\nolimits_{1\leq j<k\leq q}\mathbb{I}(y_j\neq y_k)\ell\left(g_j(\bm{x})-g_k(\bm{x}),\frac{y_j-y_k+1}{2}\right)\right]\nonumber\\
=& \int\sum\nolimits_{Y}\sum\nolimits_{1\leq j<k\leq q}\mathbb{I}(y_j\neq y_k)\ell\left(g_j(\bm{x})-g_k(\bm{x}),\frac{y_j-y_k+1}{2}\right)p\left(\bm{x}, Y\right)\diff\bm{x} \nonumber\\
=& \int\sum\nolimits_{1\leq j<k\leq q}\sum\nolimits_{y_j=0}^{1}\sum\nolimits_{y_k=0}^{1}\sum\nolimits_{Y'=Y\backslash \{y_j,y_k\}}\mathbb{I}(y_j\neq y_k)\ell\left(g_j(\bm{x})-g_k(\bm{x}),\frac{y_j-y_k+1}{2}\right)\nonumber\\
&p\left(\bm{x}, y_j,y_k\right)p\left(Y'|\bm{x}, y_j,y_k\right)\diff\bm{x} \nonumber\\
=& \int\sum\nolimits_{1\leq j<k\leq q}\sum\nolimits_{y_j=0}^{1}\sum\nolimits_{y_k=0}^{1}\mathbb{I}(y_j\neq y_k)\ell\left(g_j(\bm{x})-g_k(\bm{x}),\frac{y_j-y_k+1}{2}\right)\nonumber\\
&p\left(\bm{x}, y_j,y_k\right)\sum\nolimits_{Y'=Y\backslash \{y_j,y_k\}}p\left(Y'|\bm{x}, y_j,y_k\right)\diff\bm{x} \nonumber\\
=& \int\sum\nolimits_{1\leq j<k\leq q}\sum\nolimits_{y_j=0}^{1}\sum\nolimits_{y_k=0}^{1}\mathbb{I}(y_j\neq y_k)\ell\left(g_j(\bm{x})-g_k(\bm{x}),\frac{y_j-y_k+1}{2}\right)p\left(\bm{x}, y_j,y_k\right)\diff\bm{x} \nonumber\\
=&\sum\nolimits_{1\leq j<k\leq q}\left(\int \ell\left(g_j(\bm{x})-g_k(\bm{x}),0\right)p\left(\bm{x}, y_j=0,y_k=1\right)\diff\bm{x}\right.\nonumber\\
&\left.+\int \ell\left(g_j(\bm{x})-g_k(\bm{x}),1\right)p\left(\bm{x}, y_j=1,y_k=0\right)\diff\bm{x}\right)\nonumber\\
=&\sum\nolimits_{1\leq j<k\leq q}\left(\int \ell\left(g_j(\bm{x})-g_k(\bm{x}),0\right)\left(p\left(\bm{x}, y_j=0\right)-p\left(\bm{x}, y_j=0,y_k=0\right)\right)\diff\bm{x}\right.\nonumber\\
&\left.+\int \ell\left(g_j(\bm{x})-g_k(\bm{x}),1\right)\left(p\left(\bm{x},y_k=0\right)-p\left(\bm{x}, y_j=0,y_k=0\right)\right)\diff\bm{x}\right)\nonumber\\
=&\sum\nolimits_{1\leq j<k\leq q}\left(\int \ell\left(g_j(\bm{x})-g_k(\bm{x}),0\right)(1-\pi_j)p\left(\bm{x}|y_j=0\right)\diff\bm{x}\right.\nonumber\\
&+\int \ell\left(g_j(\bm{x})-g_k(\bm{x}),1\right)(1-\pi_k)p\left(\bm{x}|y_k=0\right)\diff\bm{x}\nonumber\\
&\left.-\int\left(\ell\left(g_j(\bm{x})-g_k(\bm{x}),0\right)+\ell\left(g_j(\bm{x})-g_k(\bm{x}),1\right)\right)p\left(\bm{x}, y_j=0,y_k=0\right)\diff\bm{x}\right)\nonumber\\
=&\sum\nolimits_{1\leq j<k\leq q}\left((1-\pi_j)\mathbb{E}_{p(\bm{x}|y_j=0)}\left[\ell\left(g_j(\bm{x})-g_k(\bm{x}),0\right)\right]\right.\nonumber\\
&\left.+(1-\pi_k)\mathbb{E}_{p(\bm{x}|y_k=0)}\left[\ell\left(g_j(\bm{x})-g_k(\bm{x}),1\right)\right]-Mp(y_j=0,y_k=0)\right).\nonumber\\
=&\sum\nolimits_{1\leq j<k\leq q}\left((1-\pi_j)\mathbb{E}_{p(\bm{x}|s_j=0)}\left[\ell\left(g_j(\bm{x})-g_k(\bm{x}),0\right)\right]\right.\nonumber\\
&\left.+(1-\pi_k)\mathbb{E}_{p(\bm{x}|s_k=0)}\left[\ell\left(g_j(\bm{x})-g_k(\bm{x}),1\right)\right]-Mp(y_j=0,y_k=0)\right), \nonumber
\end{align}
where the last equation is by Lemma~\ref{lm:data_generation}. The proof is completed.
\end{proof}
\subsection{Proof of Theorem~\ref{thm:rl_bias_consistency}}\label{apd:rl_bias_consistency}
Let $\widehat{\mathcal{D}}=\mathcal{D}_1\bigcup\mathcal{D}_2\bigcup\ldots\mathcal{D}_q$ denote the set of all the data used in Eq.~(\ref{eq:rl_cre}). We introduce
\begin{equation}
\mathfrak{D}^{+}(\bm{g})=\left\{ \widehat{\mathcal{D}}|\hat{R}_{\mathrm{R}}^{\ell}(\bm{g})>\beta\right\},\quad\mathrm{ and}\quad
\mathfrak{D}^{-}(\bm{g})=\left\{ \widehat{\mathcal{D}}|\hat{R}_{\mathrm{R}}^{\ell}(\bm{g})\leq\beta\right\}.\nonumber
\end{equation}
Then, we have the following lemma.
\begin{lemma}\label{lm:pro_mass_rl}
The probability measure of $\mathfrak{D}^{-}(\bm{g})$ can be bounded as follows: 
\begin{equation}
\mathbb{P}\left(\mathfrak{D}^{-}(\bm{g})\right)\leq \exp\left(\frac{-2\gamma^2}{\sum_{j=1}^{q}(1-\pi_j)^2(q-1)^2 C_{\ell}^2/n_j}\right).
\end{equation}
\end{lemma}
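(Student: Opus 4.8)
The plan is to mirror the argument used for Lemma~\ref{lm:pro_mass_hl}, adapting it from the Hamming-loss estimator to the ranking-loss estimator $\hat{R}_{\mathrm{R}}^{\ell}(\bm{g})$. First I would record the expectation of the unbiased estimator. By Theorem~\ref{thm:rl_rc} and linearity of expectation (each $\mathcal{D}_j$ is drawn from $p(\bm{x}\mid s_j=0)$), we have $\mathbb{E}[\hat{R}_{\mathrm{R}}^{\ell}(\bm{g})] = R_{\mathrm{R}}^{\ell}(\bm{g}) + \sum_{1\le j<k\le q} Mp(y_j=0,y_k=0)$. Combining the two standing assumptions $R_{\mathrm{R}}^{\ell}(\bm{g})\geq\gamma$ and $\beta\leq\sum_{j<k}Mp(y_j=0,y_k=0)$ gives $\mathbb{E}[\hat{R}_{\mathrm{R}}^{\ell}(\bm{g})]-\beta\geq\gamma$. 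Consequently the event defining $\mathfrak{D}^{-}(\bm{g})$, namely $\{\hat{R}_{\mathrm{R}}^{\ell}(\bm{g})\leq\beta\}$, is contained in the one-sided deviation event $\{\mathbb{E}[\hat{R}_{\mathrm{R}}^{\ell}(\bm{g})]-\hat{R}_{\mathrm{R}}^{\ell}(\bm{g})\geq\gamma\}$, so it suffices to bound the probability of the latter by a concentration inequality.

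Next I would compute the bounded-difference coefficients required for McDiarmid's inequality, treating $\hat{R}_{\mathrm{R}}^{\ell}(\bm{g})$ as a function of the independent instances in $\mathcal{D}_1,\ldots,\mathcal{D}_q$. The key combinatorial observation is that a single instance $\bm{x}_i^m\in\mathcal{D}_m$ enters $\hat{R}_{\mathrm{R}}^{\ell}(\bm{g})$ through exactly $q-1$ pairwise terms: the pairs $(m,k)$ with $k>m$, each contributing $\tfrac{1-\pi_m}{n_m}\ell(g_m(\bm{x}_i^m)-g_k(\bm{x}_i^m),0)$, and the pairs $(j,m)$ with $j<m$, each contributing $\tfrac{1-\pi_m}{n_m}\ell(g_j(\bm{x}_i^m)-g_m(\bm{x}_i^m),1)$. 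Since $0\le\ell\le C_\ell$, replacing this instance changes each such term by at most $(1-\pi_m)C_\ell/n_m$; summing over the $q-1$ affected terms yields a per-instance bounded difference of $(1-\pi_m)(q-1)C_\ell/n_m$.

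Then I would apply the one-sided McDiarmid bound. The sum of squared coefficients over all instances is $\sum_{m=1}^{q} n_m\big((1-\pi_m)(q-1)C_\ell/n_m\big)^2 = \sum_{m=1}^{q}(1-\pi_m)^2(q-1)^2 C_\ell^2/n_m$, so McDiarmid's inequality with deviation level $\gamma$ gives $\mathbb{P}(\mathbb{E}[\hat{R}_{\mathrm{R}}^{\ell}(\bm{g})]-\hat{R}_{\mathrm{R}}^{\ell}(\bm{g})\geq\gamma)\leq\exp\big(-2\gamma^2/\sum_{j=1}^{q}(1-\pi_j)^2(q-1)^2 C_\ell^2/n_j\big)$, which is exactly the claimed bound on $\mathbb{P}(\mathfrak{D}^{-}(\bm{g}))$.

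The main obstacle is the bookkeeping in the bounded-difference step: one must correctly count that each instance touches precisely $q-1$ pairwise terms (rather than all $\binom{q}{2}$ pairs or just a single pair) and carry the coefficient $(1-\pi_m)/n_m$ through, so that the resulting sum of squared coefficients matches the exponent $\sum_{j=1}^{q}(1-\pi_j)^2(q-1)^2 C_\ell^2/n_j$ stated in the lemma. Everything else is a routine instantiation of McDiarmid's inequality.
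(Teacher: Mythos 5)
Your proposal is correct and follows essentially the same route as the paper's proof: both use the assumptions $R_{\mathrm{R}}^{\ell}(\bm{g})\geq\gamma$ and $\beta\leq\sum_{j<k}Mp(y_j=0,y_k=0)$ to embed the event $\{\hat{R}_{\mathrm{R}}^{\ell}(\bm{g})\leq\beta\}$ into the one-sided deviation event $\{\mathbb{E}[\hat{R}_{\mathrm{R}}^{\ell}(\bm{g})]-\hat{R}_{\mathrm{R}}^{\ell}(\bm{g})\geq\gamma\}$, and both use the per-instance bounded difference $(1-\pi_j)(q-1)C_{\ell}/n_j$ in McDiarmid's inequality. Your explicit count of the $q-1$ pairwise terms touched by each instance is a detail the paper leaves implicit, but the argument is the same.
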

\begin{proof}
When an instance from $\mathcal{D}_{j}$ is replaced by another instance, the value of $\hat{R}_{\mathrm{R}}^{\ell}(\bm{g})$ changes at most $(1-\pi_j)(q-1)C_{\ell}/n_j$. Therefore, by applying the McDiarmid’s inequality, we can obtain the following inequality:
\begin{equation}
p\left(R_{\mathrm{R}}^{\ell}(\bm{g})-\hat{R}_{\mathrm{R}}^{\ell}(\bm{g})+\sum\nolimits_{1\leq j<k\leq q}Mp(y_j=0,y_k=0)\geq \gamma\right)\leq\exp\left(\frac{-2\gamma^2}{\sum_{j=1}^{q}(1-\pi_j)^2(q-1)^2 C_{\ell}^2/n_j}\right).\nonumber
\end{equation}
Then, we have 
\begin{align}
\mathbb{P}\left(\mathfrak{D}^{-}(\bm{g})\right)=&p\left(\hat{R}_{\mathrm{R}}^{\ell}(\bm{g})\leq\beta\right)\nonumber\\
\leq&p\left(\hat{R}_{\mathrm{R}}^{\ell}(\bm{g})\leq \sum\nolimits_{1\leq j<k\leq q}Mp(y_j=0,y_k=0)\right)\nonumber\\
\leq&p\left(\hat{R}_{\mathrm{R}}^{\ell}(\bm{g})\leq \sum\nolimits_{1\leq j<k\leq q}Mp(y_j=0,y_k=0)+R_{\mathrm{R}}^{\ell}(\bm{g})-\gamma\right)\nonumber\\
\leq&\exp\left(\frac{-2\gamma^2}{\sum_{j=1}^{q}(1-\pi_j)^2(q-1)^2 C_{\ell}^2/n_j}\right),\nonumber
\end{align}
which concludes the proof. 
\end{proof}
Then, we give the proof of Theorem~\ref{thm:rl_bias_consistency}.
\begin{proof}[Proof of Theorem~\ref{thm:rl_bias_consistency}]
To begin with, we have
\begin{equation}
\mathbb{E}\left[\tilde{R}_{\mathrm{R}}^{\ell}(\bm{g})\right]-\sum\nolimits_{1\leq j<k\leq q}Mp(y_j=0,y_k=0)-R_{\mathrm{R}}^{\ell}(\bm{g})=\mathbb{E}\left[\tilde{R}_{\mathrm{R}}^{\ell}(\bm{g})-\hat{R}_{\mathrm{R}}^{\ell}(\bm{g})\right]\geq 0.\nonumber
\end{equation}
Besides, we have 
\begin{equation}
\hat{R}_{\mathrm{R}}^{\ell}(\bm{g})\leq C_{\ell}(q-1)\sum\nolimits_{j=1}^{q}(1-\pi_j).\nonumber
\end{equation}
Then, 
\begin{align}
&\mathbb{E}\left[\tilde{R}_{\mathrm{R}}^{\ell}(\bm{g})\right]-\sum\nolimits_{1\leq j<k\leq q}Mp(y_j=0,y_k=0)-R_{\mathrm{R}}^{\ell}(\bm{g})\nonumber\\
=&\mathbb{E}\left[\tilde{R}_{\mathrm{R}}^{\ell}(\bm{g})-\hat{R}_{\mathrm{R}}^{\ell}(\bm{g})\right]\nonumber\\
=&\int_{\widehat{\mathcal{D}}\in\mathfrak{D}^{-}(\bm{g})}\left(\left|\hat{R}_{\mathrm{R}}^{\ell}(\bm{g})-\beta\right|+\beta-\hat{R}_{\mathrm{R}}^{\ell}(\bm{g})\right)p\left(\widehat{\mathcal{D}}\right)\diff \widehat{\mathcal{D}}\nonumber\\
\leq&\mathop{\sup}_{\widehat{\mathcal{D}}\in\mathfrak{D}^{-}(\bm{g})}\left(2\beta+2\hat{R}_{\mathrm{R}}^{\ell}(\bm{g})\right)\int_{\widehat{\mathcal{D}}\in\mathfrak{D}^{-}(\bm{g})}p\left(\widehat{\mathcal{D}}\right)\diff \widehat{\mathcal{D}}\nonumber\\
=&\mathop{\sup}_{\widehat{\mathcal{D}}\in\mathfrak{D}^{-}(\bm{g})}\left(2\beta+2\hat{R}_{\mathrm{R}}^{\ell}(\bm{g})\right)\mathbb{P}\left(\mathfrak{D}^{-}(\bm{g})\right)\nonumber\\
\leq&\left(2\beta+2C_{\ell}(q-1)\sum\nolimits_{j=1}^{q}(1-\pi_j)\right)\exp\left(\frac{-2\gamma^2}{\sum_{j=1}^{q}(1-\pi_j)^2(q-1)^2 C_{\ell}^2/n_j}\right),\nonumber
\end{align}
which concludes the first part of the proof. Then we provide an upper bound for $\left|\tilde{R}_{\mathrm{R}}^{\ell}(\bm{g})-\mathbb{E}\left[\tilde{R}_{\mathrm{R}}^{\ell}(\bm{g})\right]\right|$. When an instance from $\mathcal{D}_{j}$ is replaced by another instance, the value of $\tilde{R}_{\mathrm{R}}^{\ell}(\bm{g})$ changes at most $(1-\pi_j)(q-1)C_{\ell}/n_j$. Therefore, by applying the McDiarmid’s inequality, we have the following inequalities with probability at least $1-\delta/2$: 
\begin{align}
\tilde{R}_{\mathrm{R}}^{\ell}(\bm{g})-\mathbb{E}\left[\tilde{R}_{\mathrm{R}}^{\ell}(\bm{g})\right]\leq&\sum\nolimits_{j=1}^{q}(1-\pi_j)(q-1)C_{\ell}\sqrt{\frac{\ln{\left(2/\delta\right)}}{2n_j}},\nonumber\\
\mathbb{E}\left[\tilde{R}_{\mathrm{R}}^{\ell}(\bm{g})\right]-\tilde{R}_{\mathrm{R}}^{\ell}(\bm{g})\leq&\sum\nolimits_{j=1}^{q}(1-\pi_j)(q-1)C_{\ell}\sqrt{\frac{\ln{\left(2/\delta\right)}}{2n_j}}.\nonumber
\end{align}
Therefore, we have the following inequalities with probability at least $1-\delta$:
\begin{align}
\left|\tilde{R}_{\mathrm{R}}^{\ell}(\bm{g})-\mathbb{E}\left[\tilde{R}_{\mathrm{R}}^{\ell}(\bm{g})\right]\right|\leq&\sum\nolimits_{j=1}^{q}(1-\pi_j)(q-1)C_{\ell}\sqrt{\frac{\ln{\left(2/\delta\right)}}{2n_j}}.\nonumber
\end{align}
Finally, 
\begin{align}
&\left|\tilde{R}_{\mathrm{R}}^{\ell}(\bm{g})-\sum\nolimits_{1\leq j<k\leq q}Mp(y_j=0,y_k=0)-R_{\mathrm{R}}^{\ell}(\bm{g})\right| \nonumber\\
=&\left|\tilde{R}_{\mathrm{R}}^{\ell}(\bm{g})-\mathbb{E}\left[\tilde{R}_{\mathrm{R}}^{\ell}(\bm{g})\right]+\mathbb{E}\left[\tilde{R}_{\mathrm{R}}^{\ell}(\bm{g})\right]-\sum\nolimits_{1\leq j<k\leq q}Mp(y_j=0,y_k=0)-R_{\mathrm{R}}^{\ell}(\bm{g})\right|\nonumber\\
\leq&\left|\tilde{R}_{\mathrm{R}}^{\ell}(\bm{g})-\mathbb{E}\left[\tilde{R}_{\mathrm{R}}^{\ell}(\bm{g})\right]\right|+\left|\mathbb{E}\left[\tilde{R}_{\mathrm{R}}^{\ell}(\bm{g})\right]-\sum\nolimits_{1\leq j<k\leq q}Mp(y_j=0,y_k=0)-R_{\mathrm{R}}^{\ell}(\bm{g})\right|\nonumber\\
\leq&\sum\nolimits_{j=1}^{q}(1-\pi_j)(q-1)C_{\ell}\sqrt{\frac{\ln{\left(2/\delta\right)}}{2n_j}}\nonumber\\
&+\left(2\beta+2C_{\ell}(q-1)\sum\nolimits_{j=1}^{q}(1-\pi_j)\right)\exp\left(\frac{-2\gamma^2}{\sum_{j=1}^{q}(1-\pi_j)^2(q-1)^2 C_{\ell}^2/n_j}\right),
\end{align}
which concludes the proof.
\end{proof}
\subsection{Proof of Theorem~\ref{thm:rl_eeb}}\label{proof:rl_eeb}
\begin{lemma}\label{lm:rl_eeb_lemma}
Based on the above assumptions, for any $\delta>0$, the following inequality holds with probability at least $1-\delta$:
\begin{align}
&\mathop{\sup}_{g_1,g_2,\ldots,g_q\in\mathcal{G}}\left|R_{\mathrm{R}}^{\ell}(\bm{g})+\sum\nolimits_{ j<k}Mp(y_j=0,y_k=0)-\tilde{R}_{\mathrm{R}}^{\ell}(\bm{g})\right|\leq\left(2\beta+2C_{\ell}(q-1)\sum\nolimits_{j=1}^{q}(1-\pi_j)\right)\Delta'\nonumber\\
&+\sum\nolimits_{j=1}^{q}(1-\pi_j)(q-1)C_{\ell}\sqrt{\frac{\ln{\left(1/\delta\right)}}{n_j}}+\sum\nolimits_{j=1}^{q}4L_{\ell}(q-1)(1-\pi_j)\mathfrak{R}_{n_j,p_j}\left(\mathcal{G}\right).
\end{align}
\end{lemma}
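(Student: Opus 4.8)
The plan is to mirror the proof of Lemma~\ref{lm:eeb_hl_lemma}, adapting the argument to the pairwise structure of the ranking loss. First I would split the uniform deviation by the triangle inequality into a bias part and a concentration part:
\begin{align*}
&\sup_{\{g_j\}\subseteq\mathcal{G}}\left|R_{\mathrm{R}}^{\ell}(\bm{g})+\sum_{j<k}Mp(y_j=0,y_k=0)-\tilde{R}_{\mathrm{R}}^{\ell}(\bm{g})\right|\\
&\leq \sup_{\bm{g}}\left|R_{\mathrm{R}}^{\ell}(\bm{g})+\sum_{j<k}Mp(y_j=0,y_k=0)-\mathbb{E}\left[\tilde{R}_{\mathrm{R}}^{\ell}(\bm{g})\right]\right|+\sup_{\bm{g}}\left|\mathbb{E}\left[\tilde{R}_{\mathrm{R}}^{\ell}(\bm{g})\right]-\tilde{R}_{\mathrm{R}}^{\ell}(\bm{g})\right|.
\end{align*}
The first term is bounded uniformly by $\left(2\beta+2C_{\ell}(q-1)\sum_{j}(1-\pi_j)\right)\Delta'$: the bias estimate established inside the proof of Theorem~\ref{thm:rl_bias_consistency} only uses the pointwise facts $R_{\mathrm{R}}^{\ell}(\bm{g})\geq\gamma$ and $\hat{R}_{\mathrm{R}}^{\ell}(\bm{g})\leq C_{\ell}(q-1)\sum_{j}(1-\pi_j)$, both of which hold for every $\bm{g}$, and the resulting bound does not depend on $\bm{g}$, so it survives the supremum.

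For the concentration term I would apply McDiarmid's inequality to $\Phi(\widehat{\mathcal{D}})=\sup_{\bm{g}}\left|\mathbb{E}[\tilde{R}_{\mathrm{R}}^{\ell}(\bm{g})]-\tilde{R}_{\mathrm{R}}^{\ell}(\bm{g})\right|$. Since each instance of $\mathcal{D}_j$ enters exactly $q-1$ pairwise terms, replacing one instance in $\mathcal{D}_j$ changes $\tilde{R}_{\mathrm{R}}^{\ell}$, and hence $\Phi$, by at most $(1-\pi_j)(q-1)C_{\ell}/n_j$; McDiarmid then produces the square-root term $\sum_{j}(1-\pi_j)(q-1)C_{\ell}\sqrt{\ln(1/\delta)/n_j}$ together with $\mathbb{E}[\Phi]$. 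I would bound $\mathbb{E}[\Phi]$ by the usual ghost-sample symmetrization: introducing an independent copy $\widehat{\mathcal{D}}'$ and applying Jensen's inequality twice reduces it to $\mathbb{E}\left[\sup_{\bm{g}}\left|\tilde{R}_{\mathrm{R}}^{\ell}(\bm{g};\widehat{\mathcal{D}})-\tilde{R}_{\mathrm{R}}^{\ell}(\bm{g};\widehat{\mathcal{D}}')\right|\right]$. Here the flooding wrapper must be stripped first: by the reverse triangle inequality $\big||\hat{R}-\beta|-|\hat{R}'-\beta|\big|\leq|\hat{R}-\hat{R}'|$, so the nonlinear correction drops out and the comparison reduces to the unbiased estimator $\hat{R}_{\mathrm{R}}^{\ell}$.

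The crux is then the Rademacher estimate for the symmetrized pairwise empirical risk. I would reorganize the double sum $\sum_{j<k}$ defining $\hat{R}_{\mathrm{R}}^{\ell}$ into a sum over the datasets $\mathcal{D}_j$, observing that each instance of $\mathcal{D}_j$ carries weight $(1-\pi_j)/n_j$ and appears in the $q-1$ losses $\ell(g_j-g_k,0)$ for $k>j$ and $\ell(g_{j'}-g_j,1)$ for $j'<j$. After inserting Rademacher variables I would shift to $\bar{\ell}(z)=\ell(z)-\ell(0)$ so that $\bar{\ell}(0)=0$ (exactly as in the Hamming proof), apply the contraction Lemma~\ref{lm:new_rademacher} to each pairwise term to peel off $\bar{\ell}$ at the cost of $2L_{\ell}$, and then—because the argument is the difference $g_j-g_k$—use subadditivity of the supremum to split the difference-class complexity into single-class complexities $\mathfrak{R}'_{n_j,p_j}(\mathcal{G})$, finally invoking Lemma~\ref{lm:new_rademacher_equality} to replace $\mathfrak{R}'$ by $\mathfrak{R}$. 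Collecting the symmetrization factor, the $2L_{\ell}$ contraction factor, the multiplicity $q-1$, and the weight $(1-\pi_j)$ produces the Rademacher term $\sum_{j}4L_{\ell}(q-1)(1-\pi_j)\mathfrak{R}_{n_j,p_j}(\mathcal{G})$, and combining the three pieces gives the claimed bound. The main obstacle is precisely this bookkeeping of the pairwise coupling: unlike the Hamming case, where every loss depends on a single $g_j$, here each term couples two classes through $g_j-g_k$, so one must carefully track the $q-1$ multiplicity and decompose the difference-class Rademacher complexity while keeping the constants correct.
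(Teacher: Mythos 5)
Your proposal is correct and follows essentially the same route as the paper's proof: triangle-inequality split into the uniform bias bound from Theorem~\ref{thm:rl_bias_consistency} plus a concentration term, McDiarmid with the same bounded differences $(1-\pi_j)(q-1)C_{\ell}/n_j$, ghost-sample symmetrization with Jensen applied twice, stripping the flooding wrapper by the reverse triangle inequality, and the $\bar{\ell}$-shift followed by contraction and the difference-class decomposition yielding the factor $4L_{\ell}(q-1)(1-\pi_j)$. The only cosmetic difference is that you peel off $\bar{\ell}$ before splitting $\mathcal{G}-\mathcal{G}$, whereas the paper's Lemma~\ref{lm:composite_rade} performs these two steps in the opposite order; both give the same constant.
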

\begin{proof}
When an instance in $\mathcal{D}_j$ is replaced by another instance, the value of $\mathop{\sup}_{g_1,g_2,\ldots,g_q\in\mathcal{G}}\left|\mathbb{E}\left[\tilde{R}_{\mathrm{R}}^{\ell}(\bm{g})\right]-\tilde{R}_{\mathrm{R}}^{\ell}(\bm{g})\right|$ changes at most $(1-\pi_j)(q-1)C_{\ell}/n_j$. Therefore, by applying the McDiarmid’s inequality, we have the following inequalities with probability at least $1-\delta$: 
\begin{align}\label{eq:apd_eq_8}
&\mathop{\sup}_{g_1,g_2,\ldots,g_q\in\mathcal{G}}\left|\mathbb{E}\left[\tilde{R}_{\mathrm{R}}^{\ell}(\bm{g})\right]-\tilde{R}_{\mathrm{R}}^{\ell}(\bm{g})\right|-\mathbb{E}\left[\mathop{\sup}_{g_1,g_2,\ldots,g_q\in\mathcal{G}}\left|\mathbb{E}\left[\tilde{R}_{\mathrm{R}}^{\ell}(\bm{g})\right]-\tilde{R}_{\mathrm{R}}^{\ell}(\bm{g})\right|\right]\nonumber\\
\leq&\sum\nolimits_{j=1}^{q}(1-\pi_j)(q-1)C_{\ell}\sqrt{\frac{\ln{\left(1/\delta\right)}}{n_j}}.
\end{align}
Then, 
\begin{align}
&\mathbb{E}\left[\mathop{\sup}_{g_1,g_2,\ldots,g_q\in\mathcal{G}}\left|\mathbb{E}\left[\tilde{R}_{\mathrm{R}}^{\ell}(\bm{g})\right]-\tilde{R}_{\mathrm{R}}^{\ell}(\bm{g})\right|\right]\nonumber\\
=&\mathbb{E}_{\widehat{\mathcal{D}}}\left[\mathop{\sup}_{g_1,g_2,\ldots,g_q\in\mathcal{G}}\left|\mathbb{E}_{\widehat{\mathcal{D}}'}\left[\tilde{R}_{\mathrm{R}}^{\ell}(\bm{g})\right]-\tilde{R}_{\mathrm{R}}^{\ell}(\bm{g})\right|\right]\nonumber\\
\leq&\mathbb{E}_{\widehat{\mathcal{D}},\widehat{\mathcal{D}}'}\left[\mathop{\sup}_{g_1,g_2,\ldots,g_q\in\mathcal{G}}\left|\tilde{R}_{\mathrm{R}}^{\ell}(\bm{g};\widehat{\mathcal{D}})-\tilde{R}_{\mathrm{R}}^{\ell}(\bm{g}';\widehat{\mathcal{D}}')\right|\right],\label{eq:apd_eq_6}
\end{align}
where the last inequality is deduced by applying Jensen’s inequality twice. Here, $\tilde{R}_{\mathrm{R}}^{\ell}(\bm{g};\widehat{\mathcal{D}})$ denotes the value of $\tilde{R}_{\mathrm{R}}^{\ell}(\bm{g})$ on $\widehat{\mathcal{D}}$. Then, we introduce the following lemma.
\begin{lemma}\label{lm:composite_rade}
If $\ell:\mathbb{R}\times\left\{0,1\right\}\rightarrow\mathbb{R}$ is a Lipschitz continuous function with a Lipschitz constant $L_{\ell}$ and satisfies $\forall y, \ell(0,y)=0$, we have 
\begin{equation}
\mathfrak{R}'_{n,p}\left(\ell\circ(\mathcal{G}-\mathcal{G})\right)\leq 4L_{\ell}\mathfrak{R}'_{n,p}(\mathcal{G}), \nonumber
\end{equation}
where $\ell\circ((\mathcal{G}-\mathcal{G}))=\{\ell\circ \left(g_j-g_k\right)|g_j\in \mathcal{G},g_k\in \mathcal{G}\}$.
\end{lemma}
\begin{proof}
\begin{align}
&\mathfrak{R}'_{n,p}\left(\ell\circ(\mathcal{G}-\mathcal{G})\right)\nonumber\\
=&2\mathfrak{R}'_{n,p}\left(\ell\circ(\mathcal{G})\right)\nonumber\\
\leq& 4L_{\ell}\mathfrak{R}'_{n,p}(\mathcal{G}),\nonumber
\end{align}
where the first inequality is by symmetrization~\citep{mohri2012foundations} and the second inequality is by Lemma~\ref{lm:new_rademacher}. The proof is complete.
\end{proof}
Therefore, we have
\begin{align}
&\left|\tilde{R}_{\mathrm{R}}^{\ell}(\bm{g};\widehat{\mathcal{D}})-\tilde{R}_{\mathrm{R}}^{\ell}(\bm{g};\widehat{\mathcal{D}}')\right|\nonumber\\
=&\left|\left|\hat{R}_{\mathrm{R}}^{\ell}(\bm{g};\widehat{\mathcal{D}})-\beta\right|-\left|\hat{R}_{\mathrm{R}}^{\ell}(\bm{g};\widehat{\mathcal{D}}')-\beta\right|\right|\nonumber\\
\leq&\left|\hat{R}_{\mathrm{R}}^{\ell}(\bm{g};\widehat{\mathcal{D}})-\hat{R}_{\mathrm{R}}^{\ell}(\bm{g};\widehat{\mathcal{D}}')\right|\nonumber\\
=&\left|\sum\nolimits_{1\leq j<k\leq q}\frac{1-\pi_j}{n_j}\sum\nolimits_{i=1}^{n_j}\left(\ell\left(g_j(\bm{x}_i^j)-g_k(\bm{x}_i^j),0\right)-\ell\left(g_j(\bm{x}_i^{j'})-g_k(\bm{x}_i^{j'}),0\right)\right)\right.\nonumber\\
&\left.+\frac{1-\pi_k}{n_k}\sum\nolimits_{i=1}^{n_k}\left(\ell\left(g_j(\bm{x}_i^k)-g_k(\bm{x}_i^k),1\right)-\ell\left(g_j(\bm{x}_i^{k'})-g_k(\bm{x}_i^{k'}),1\right)\right)\right|\nonumber\\
\leq&\sum\nolimits_{1\leq j<k\leq q}\left|\frac{1-\pi_j}{n_j}\sum\nolimits_{i=1}^{n_j}\left(\ell\left(g_j(\bm{x}_i^j)-g_k(\bm{x}_i^j),0\right)-\ell\left(g_j(\bm{x}_i^{j'})-g_k(\bm{x}_i^{j'}),0\right)\right)\right|\nonumber\\
&+\sum\nolimits_{1\leq j<k\leq q}\left|\frac{1-\pi_k}{n_k}\sum\nolimits_{i=1}^{n_k}\left(\ell\left(g_j(\bm{x}_i^k)-g_k(\bm{x}_i^k),1\right)-\ell\left(g_j(\bm{x}_i^{k'})-g_k(\bm{x}_i^{k'}),1\right)\right)\right|\nonumber\\
=&\sum\nolimits_{1\leq j<k\leq q}\left|\frac{1-\pi_j}{n_j}\sum\nolimits_{i=1}^{n_j}\left(\bar{\ell}\left(g_j(\bm{x}_i^j)-g_k(\bm{x}_i^j),0\right)-\bar{\ell}\left(g_j(\bm{x}_i^{j'})-g_k(\bm{x}_i^{j'}),0\right)\right)\right|\nonumber\\
&+\sum\nolimits_{1\leq j<k\leq q}\left|\frac{1-\pi_k}{n_k}\sum\nolimits_{i=1}^{n_k}\left(\bar{\ell}\left(g_j(\bm{x}_i^k)-g_k(\bm{x}_i^k),1\right)-\bar{\ell}\left(g_j(\bm{x}_i^{k'})-g_k(\bm{x}_i^{k'}),1\right)\right)\right|,\label{eq:apd_eq_7}
\end{align}
where the inequalities are due to the triangle inequality. Then, by combining Inequalities~\ref{eq:apd_eq_6} and~\ref{eq:apd_eq_7}, it is a routine work~\citep{mohri2012foundations} to show that
\begin{align}
&\mathbb{E}\left[\mathop{\sup}_{g_1,g_2,\ldots,g_q\in\mathcal{G}}\left|\mathbb{E}\left[\tilde{R}_{\mathrm{R}}^{\ell}(\bm{g})\right]-\tilde{R}_{\mathrm{R}}^{\ell}(\bm{g})\right|\right]\nonumber\\
\leq&\sum\nolimits_{j=1}^{q}(q-1)(1-\pi_j)\mathfrak{R}'_{n_j,p_j}\left(\bar{\ell}\circ(\mathcal{G}-\mathcal{G})\right)\nonumber\\
\leq&\sum\nolimits_{j=1}^{q}4L_{\ell}(q-1)(1-\pi_j)\mathfrak{R}'_{n_j,p_j}\left(\mathcal{G}\right)\nonumber\\
=&\sum\nolimits_{j=1}^{q}4L_{\ell}(q-1)(1-\pi_j)\mathfrak{R}_{n_j,p_j}\left(\mathcal{G}\right),\label{eq:apd_eq_9}
\end{align}
where the second inequality is by Lemma~\ref{lm:composite_rade} and the last equality is by Lemma~\ref{lm:new_rademacher_equality}. By combining Inequalities~\ref{eq:apd_eq_8} and~\ref{eq:apd_eq_9}, we have the following inequalities with probability at least $1-\delta$:
\begin{align}\label{eq:apd_eq_10}
&\mathop{\sup}_{g_1,g_2,\ldots,g_q\in\mathcal{G}}\left|\mathbb{E}\left[\tilde{R}_{\mathrm{R}}^{\ell}(\bm{g})\right]-\tilde{R}_{\mathrm{R}}^{\ell}(\bm{g})\right|\nonumber\\
\leq&\sum\nolimits_{j=1}^{q}(1-\pi_j)(q-1)C_{\ell}\sqrt{\frac{\ln{\left(1/\delta\right)}}{n_j}}+\sum\nolimits_{j=1}^{q}4L_{\ell}(q-1)(1-\pi_j)\mathfrak{R}_{n_j,p_j}\left(\mathcal{G}\right).
\end{align}
Finally, we have the following inequality with probability at least $1-\delta$:
\begin{align}
&\mathop{\sup}_{g_1,g_2,\ldots,g_q\in\mathcal{G}}\left|R_{\mathrm{R}}^{\ell}(\bm{g})+\sum\nolimits_{ j<k}Mp(y_j=0,y_k=0)-\tilde{R}_{\mathrm{R}}^{\ell}(\bm{g})\right|\nonumber\\
=&\mathop{\sup}_{g_1,g_2,\ldots,g_q\in\mathcal{G}}\left|R_{\mathrm{R}}^{\ell}(\bm{g})+\sum\nolimits_{ j<k}Mp(y_j=0,y_k=0)-\mathbb{E}\left[\tilde{R}_{\mathrm{R}}^{\ell}(\bm{g})\right]+\mathbb{E}\left[\tilde{R}_{\mathrm{R}}^{\ell}(\bm{g})\right]-\tilde{R}_{\mathrm{R}}^{\ell}(\bm{g})\right|\nonumber\\
\leq&\mathop{\sup}_{g_1,g_2,\ldots,g_q\in\mathcal{G}}\left|R_{\mathrm{R}}^{\ell}(\bm{g})+\sum\nolimits_{ j<k}Mp(y_j=0,y_k=0)-\mathbb{E}\left[\tilde{R}_{\mathrm{R}}^{\ell}(\bm{g})\right]\right|+\mathop{\sup}_{g_1,g_2,\ldots,g_q\in\mathcal{G}}\left|\mathbb{E}\left[\tilde{R}_{\mathrm{R}}^{\ell}(\bm{g})\right]-\tilde{R}_{\mathrm{R}}^{\ell}(\bm{g})\right|\nonumber\\
\leq&\left(2\beta+2C_{\ell}(q-1)\sum\nolimits_{j=1}^{q}(1-\pi_j)\right)\Delta'+\sum\nolimits_{j=1}^{q}(1-\pi_j)(q-1)C_{\ell}\sqrt{\frac{\ln{\left(1/\delta\right)}}{n_j}}\nonumber\\
&+\sum\nolimits_{j=1}^{q}4L_{\ell}(q-1)(1-\pi_j)\mathfrak{R}_{n_j,p_j}\left(\mathcal{G}\right),\nonumber
\end{align}
where the last inequality is by Inequalities~\ref{eq:apd_eq_10} and~\ref{eq:bias_rl}. The proof is complete.
\end{proof}
Then, we provide the proof of Theorem~\ref{thm:rl_eeb}.
\begin{proof}[Proof of Theorem~\ref{thm:rl_eeb}]
\begin{align}
&R_{\mathrm{R}}^{\ell}(\tilde{\bm{g}}_{\mathrm{R}})-R_{\mathrm{R}}^{\ell}(\bm{g}^{*}_{\mathrm{R}})\nonumber\\
=&R_{\mathrm{R}}^{\ell}(\tilde{\bm{g}}_{\mathrm{R}})+\sum\nolimits_{ j<k}Mp(y_j=0,y_k=0)-\tilde{R}_{\mathrm{R}}^{\ell}(\tilde{\bm{g}}_{\mathrm{R}})+\tilde{R}_{\mathrm{R}}^{\ell}(\tilde{\bm{g}}_{\mathrm{R}})-\tilde{R}_{\mathrm{R}}^{\ell}(\bm{g}^{*}_{\mathrm{R}})+\tilde{R}_{\mathrm{R}}^{\ell}(\bm{g}^{*}_{\mathrm{R}})\nonumber\\
&-\sum\nolimits_{ j<k}Mp(y_j=0,y_k=0)-R_{\mathrm{R}}^{\ell}(\bm{g}^{*}_{\mathrm{R}})\nonumber\\
\leq&R_{\mathrm{R}}^{\ell}(\tilde{\bm{g}}_{\mathrm{R}})+\sum\nolimits_{ j<k}Mp(y_j=0,y_k=0)-\tilde{R}_{\mathrm{R}}^{\ell}(\tilde{\bm{g}}_{\mathrm{R}})+\tilde{R}_{\mathrm{R}}^{\ell}(\bm{g}^{*}_{\mathrm{R}})-\sum\nolimits_{ j<k}Mp(y_j=0,y_k=0)-R_{\mathrm{R}}^{\ell}(\bm{g}^{*}_{\mathrm{R}})\nonumber\\
\leq&2\mathop{\sup}_{g_1,g_2,\ldots,g_q\in\mathcal{G}}\left|R_{\mathrm{R}}^{\ell}(\bm{g})+\sum\nolimits_{ j<k}Mp(y_j=0,y_k=0)-\tilde{R}_{\mathrm{R}}^{\ell}(\bm{g})\right|.\nonumber
\end{align}
Then, based on Lemma~\ref{lm:rl_eeb_lemma}, the proof is complete.
\end{proof}
\subsection{Proof of Corollary~\ref{cor:rl}}\label{proof:cor_rl}
\begin{lemma}[Theorem 10 in \citet{gao2013consistency}]
If $\ell$ is a differentiable and non-increasing function such that $\forall y, \ell'(0,y)<0$ and $\ell(z,y)+\ell(-z,y)=M$, then Eq.~(\ref{eq:rl_risk}) is consistent w.r.t.~the ranking loss.
\end{lemma}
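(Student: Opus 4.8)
The plan is to prove this via the standard reduction to \emph{pointwise (conditional) calibration}, following the template of surrogate-consistency analysis but exploiting the special pairwise structure of the ranking loss. First I would fix $\bm{x}$ and, writing $p_j=p(y_j=1\mid\bm{x})$, observe that both Eq.~(\ref{eq:rl_risk}) and Eq.~(\ref{eq:rl_01risk}) decompose into an expectation over $\bm{x}$ of a sum over pairs $1\le j<k\le q$, where each pair contributes a term depending only on the margin $z_{jk}=g_j(\bm{x})-g_k(\bm{x})$ and on the conditional probabilities $a_{jk}=p(y_j=1,y_k=0\mid\bm{x})$ and $b_{jk}=p(y_j=0,y_k=1\mid\bm{x})$. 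The conditional surrogate risk of a pair is $\phi_{a,b}(z)=a\,\ell(z,1)+b\,\ell(z,0)$, and since the second argument of $\ell$ encodes the desired sign, the symmetry assumption lets me write $\ell(z,1)=\ell(z)$ and $\ell(z,0)=\ell(-z)$ for a single non-increasing base loss with $\ell(z)+\ell(-z)=M$. The crucial algebraic identity I would record here is $a_{jk}-b_{jk}=p_j-p_k$, which shows that the Bayes-optimal pairwise ordering is simply to rank $j$ above $k$ iff $p_j>p_k$.

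Next I would establish pointwise calibration. Differentiating and using that the symmetry $\ell(z)+\ell(-z)=M$ forces $\ell'$ to be even gives $\phi_{a,b}'(z)=(a-b)\,\ell'(z)$. Because $\ell$ is non-increasing we have $\ell'\le 0$, and because $\ell'(0)<0$ the base loss is strictly decreasing at the origin, so $\phi_{a,b}$ is strictly monotone whenever $a\ne b$ and its infimum is approached as $z\to+\infty$ (if $a>b$) or $z\to-\infty$ (if $a<b$); the optimal sign of $z$ therefore matches $\operatorname{sign}(a-b)=\operatorname{sign}(p_j-p_k)$. Writing $c=\ell(+\infty)=\inf\ell$ and noting $\ell(0)=M/2$, the condition $\ell'(0)<0$ yields the strict gap $c<M/2$. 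I would then verify the linear calibration inequality $\phi_{a,b}(z)-\phi_{a,b}^{*}\ge \kappa\,(L_{a,b}(z)-L_{a,b}^{*})$ with $\kappa=M/2-c>0$, where $L_{a,b}$ is the conditional $0$-$1$ pair risk with $L_{a,b}^{*}=\min(a,b)$; the only cases to check are a wrong-sign or tied $z$, where evaluating $\phi_{a,b}$ at $z=0$ bounds the surrogate excess below by $|a-b|(M/2-c)$.

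Finally I would glue the pointwise bounds into a global statement. The key point that makes the ranking loss behave is that the Bayes pairwise orders are \emph{transitive}: since $a_{jk}-b_{jk}=p_j-p_k$, they are all induced by the single total order of the marginals $\{p_j\}$, so one \emph{single} score vector (e.g.\ $g_j$ increasing in $p_j$, scaled to push all margins to $\pm\infty$) simultaneously drives every pairwise surrogate to its infimum; hence both $R_{\mathrm{R}}^{\ell*}$ and $R_{\mathrm{R}}^{*}$ split as the expectation of the sum of the pointwise infima. Summing the calibration inequality over all pairs and taking expectation over $\bm{x}$ then gives $R_{\mathrm{R}}^{\mathrm{0-1}}(\bm{f})-R_{\mathrm{R}}^{*}\le \kappa^{-1}\big(R_{\mathrm{R}}^{\ell}(\bm{g})-R_{\mathrm{R}}^{\ell*}\big)$, so $R_{\mathrm{R}}^{\ell}(\bm{g}_t)\to R_{\mathrm{R}}^{\ell*}$ forces $R_{\mathrm{R}}^{\mathrm{0-1}}(\bm{f}_t)\to R_{\mathrm{R}}^{*}$, which is exactly consistency in the sense of Lemma~\ref{lm:mll_consistency}.

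The hard part will be two intertwined technical points: handling the \emph{non-attained} infimum (the conditional surrogate minimizer lives at $\pm\infty$, so the calibration bound must be phrased and proved directly as an excess-risk inequality rather than by comparing minimizers), and justifying that the pointwise-optimal orderings \emph{glue} into one globally realizable ranking. The identity $a_{jk}-b_{jk}=p_j-p_k$ is the linchpin for the second point; without the symmetry of $\ell$ the per-pair optimal sign would not reduce to $\operatorname{sign}(p_j-p_k)$ and the pairwise decomposition could fail to be jointly achievable, which is precisely why the symmetry and $\ell'(0)<0$ hypotheses are indispensable.
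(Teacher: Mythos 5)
The paper does not actually prove this lemma: it is imported verbatim as Theorem~10 of \citet{gao2013consistency} and used as a black box in the proof of Corollary~\ref{cor:rl}. Your proposal is therefore not an alternative route so much as a reconstruction of the cited result, and as a reconstruction it is essentially correct and follows the same strategy as Gao and Zhou's original argument: reduce to conditional pointwise calibration of the pairwise terms; use the symmetry to write the conditional pair risk as $a\,\ell(z)+b\,(M-\ell(z))=(a-b)\ell(z)+bM$, so that the optimal sign of $z_{jk}$ is $\operatorname{sign}(a_{jk}-b_{jk})=\operatorname{sign}(p_j-p_k)$; and use this transitivity to glue the per-pair optima (approached only as $|z_{jk}|\to\infty$) into a single realizable score sequence, so that the Bayes surrogate risk equals the expectation of the sum of per-pair infima. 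Your identity $a_{jk}-b_{jk}=p_j-p_k$ is correct and is indeed the linchpin, and your constant $\kappa=M/2-c$ with $c=\inf\ell<M/2=\ell(0)$ (forced by $\ell'(0,y)<0$) does yield a valid linear calibration inequality: a wrong-sign or tied margin has $\ell(z)\geq\ell(0)$, hence surrogate excess at least $|a-b|(M/2-c)$ against a $0$-$1$ excess of at most $|a-b|$.

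Three small caveats, none fatal. First, your reduction $\ell(z,1)=\ell(z)$, $\ell(z,0)=\ell(-z)$ uses the margin structure $\ell(z,0)=\ell(-z,1)$, which is not literally contained in the hypothesis ``$\forall y,\ \ell(z,y)+\ell(-z,y)=M$'' but is clearly intended (and holds for the losses the paper considers); without it, two independently non-increasing components could not encode the ordering at all. Second, ``strictly monotone'' overstates matters: $\phi'_{a,b}(z)=(a-b)\ell'(z)$ may vanish away from the origin (ramp-type losses), but your argument only needs monotonicity together with $\ell(0)=M/2>c$. Third, the paper's $0$-$1$ ranking risk is written for a set-valued $\bm{f}$ obtained by thresholding $\bm{g}$, whereas your calibration bound controls the ranking loss of the ordering induced by $\bm{g}$ itself; the latter is the reading under which the cited theorem (and hence Corollary~\ref{cor:rl}) is meaningful, since a $\{0,1\}$-valued predictor cannot realize an arbitrary total order.
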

Then we provide the proof of Corollary~\ref{cor:rl}.
\begin{proof}[Proof of Corollary~\ref{cor:rl}]
Since the proposed risk in Eq.~(\ref{eq:rc_hl}) is equivalent to the risk in Eq.~(\ref{eq:rl_risk}), it is sufficient to prove that for any sequence $\left\{\bm{g}_t\right\}$ that if $R^{\ell}_{\mathrm{R}}(\bm{g}_t)\rightarrow R^{\ell*}_{\mathrm{R}}$, then $R^{\mathrm{0-1}}_{\mathrm{R}}(\bm{f}_t)\rightarrow R^{*}_{\mathrm{R}}$.
\end{proof}
\section{Details of Experiments}\label{apd:exp}
\subsection{More Details of Datasets}
For synthetic datasets, we consider two data generation processes. In case-a, irrelevant labels are flipped to candidate labels independently, which is the assumption used in \citet{xie2023ccmn}. This strategy is common in learning with noisy labels~\citep{han2018co}, where PML is a special case of MLC with noisy labels~\citep{xie2023ccmn}. In case-b, we assign non-candidate labels in a class-wise manner. For each class, we randomly sample a fraction of the training data and assign that class as a non-candidate label. This data generation process corresponds to the assumption proposed in this paper. We use this process to confirm the effectiveness of our proposed method under this assumption. Additionally, we selected high flipping rates to evaluate the effectiveness of our proposed methods on challenging datasets with high noise rates since real-world datasets have low noise rates. In this paper, we set the flipping rate in Case-a to be 0.9 and and the sampling rate in Case-b to be 0.1.

\subsection{Baselines}
We evaluate against five classical baselines commonly used in PML/CML learning. (A)~BCE: uses the given candidate label as the cross-entropy target.
(B)~CCMN~\citep{xie2023ccmn}: treats PML as multi-label classification with class-conditional noise, relying on a noise transition matrix.
(C)~GDF~\citep{gao2023unbiased}: proposes an unbiased risk estimator for multi-labeled single complementary label learning.
(D)~CTL~\citep{gao2025multi}: introduces a risk-consistent approach by rewriting the loss function.
(E)~MLCL~\citep{gao2024complementary}: estimates an initial transition matrix via binary decompositions, then refines it with label correlations.   

\section{Instance-Dependent Cases}
The current literature on partial multi-label learning~(PML) and complementary multi-label learning~(CML) assumes that label generation is independent of instances~(see Table~\ref{comparison}). Following previous work, we also consider the instance-independent case. It is very challenging to design consistent methods for instance-dependent cases due to the difficulty of estimating instance-dependent generation processes, as far as we know from the literature on weakly supervised learning. In future work, we will consider developing instance-dependent methods with strong theoretical guarantees.

\end{document}